\documentclass{article}

\PassOptionsToPackage{numbers, compress}{natbib}

\usepackage[preprint]{configuration/neurips_2026}

\usepackage{hyperref}
\usepackage{url}
\usepackage{enumitem}
\usepackage{tcolorbox}

\usepackage{amsmath,amssymb,amsthm}

\usepackage{amsmath,amsfonts,bm}

\def\secref#1{section~\ref{#1}}
\def\Secref#1{Section~\ref{#1}}

\def\eqref#1{equation~\ref{#1}}

\def\algref#1{algorithm~\ref{#1}}

\def\1{\bm{1}}

\DeclareMathAlphabet{\mathsfit}{\encodingdefault}{\sfdefault}{m}{sl}
\SetMathAlphabet{\mathsfit}{bold}{\encodingdefault}{\sfdefault}{bx}{n}

\newcommand{\E}{\mathbb{E}}

\providecommand{\xx}{\mathbf{x}}

\newenvironment{talign*}
{\csname align*\endcsname}
{\endalign}

\usepackage[utf8]{inputenc}         %
\usepackage[T1]{fontenc}            %
\usepackage{url}                    %
\usepackage{booktabs}               %
\usepackage{amsfonts}               %
\usepackage{nicefrac}               %
\usepackage{microtype}              %
\usepackage{xcolor}                 %
\usepackage{algorithm}
\usepackage{algpseudocode}
\usepackage{graphicx}
\usepackage{subcaption}
\usepackage[flushleft]{threeparttable}
\usepackage{float}
\usepackage{multirow}
\usepackage{makecell}
\usepackage{xspace}
\usepackage{enumitem}
\usepackage[font=small]{caption}
\usepackage{autobreak}
\usepackage{sidecap}
\usepackage{wrapfig}
\usepackage{bbding}
\usepackage[toc, page, header]{appendix}
\usepackage{tikz}
\usetikzlibrary{calc,trees,positioning,arrows,chains,shapes.geometric,%
    decorations.pathreplacing,decorations.pathmorphing,shapes,%
    matrix,shapes.symbols}
\usepackage{xcolor}
\usepackage{pifont}
\usepackage{mdframed}
\usepackage{colortbl}

\usepackage{tcolorbox}
\tcbuselibrary{skins, breakable, theorems}
\usepackage{empheq}
\usepackage{arydshln}
\usepackage{bm}
\usepackage[capitalize]{cleveref}
\usepackage{listings}

\hypersetup{
    colorlinks=true,
    linkcolor=blue,
    citecolor=blue,
    urlcolor=blue
}

\definecolor{coral}{RGB}{255,127,80}
\definecolor{darkgreen}{RGB}{0,100,0}
\definecolor{darkyellow}{RGB}{204,153,0}
\definecolor{salmon}{RGB}{250,128,114}
\definecolor{darkred}{RGB}{150,0,0}
\newcommand{\darkredtext}[1]{{\color{darkred}#1}}
\newcommand{\posc}[1]{{\color{darkgreen}#1}}
\newcommand{\negc}[1]{{\color{darkred}#1}}

\definecolor{eqbg}{gray}{0.95}
\definecolor{indomaincolor}{rgb}{0.9, 0.95, 1.0}
\definecolor{oodcolor}{rgb}{1.0, 0.9, 0.9}

\newcommand{\transparentgreen}[1]{%
    \tikz[baseline=(X.base)] \node[fill=green, fill opacity=0.1, text opacity=1, inner sep=2pt, outer sep=0pt] (X) {#1};%
}
\newcommand{\transparentyellow}[1]{%
    \tikz[baseline=(X.base)] \node[fill=yellow, fill opacity=0.1, text opacity=1, inner sep=2pt, outer sep=0pt] (X) {#1};%
}

\renewcommand{\secref}[1]{\hyperref[#1]{\darkredtext{Sec.~\ref*{#1}}}}
\renewcommand{\Secref}[1]{\hyperref[#1]{\darkredtext{Sec.~\ref*{#1}}}}
\providecommand{\thmref}[1]{\hyperref[#1]{\darkredtext{Thm.~\ref*{#1}}}}
\providecommand{\defref}[1]{\hyperref[#1]{\darkredtext{Def.~\ref*{#1}}}}
\providecommand{\propref}[1]{\hyperref[#1]{\darkredtext{Prop.~\ref*{#1}}}}
\providecommand{\assumpref}[1]{\hyperref[#1]{\transparentgreen{Assumption~\ref*{#1}}}}
\providecommand{\remarkref}[1]{\hyperref[#1]{\transparentyellow{Remark~\ref*{#1}}}}
\providecommand{\conjref}[1]{\hyperref[#1]{\darkredtext{Conj.~\ref*{#1}}}}
\providecommand{\lemref}[1]{\hyperref[#1]{\darkredtext{Lem.~\ref*{#1}}}}
\providecommand{\corref}[1]{\hyperref[#1]{\darkredtext{Cor.~\ref*{#1}}}}
\providecommand{\noteref}[1]{\hyperref[#1]{\darkredtext{Nota.~\ref*{#1}}}}
\providecommand{\claimref}[1]{\hyperref[#1]{\darkredtext{Clm.~\ref*{#1}}}}
\providecommand{\algref}[1]{\hyperref[#1]{\darkredtext{Alg.~\ref*{#1}}}}
\providecommand{\algmref}[1]{\hyperref[#1]{\darkredtext{Alg.~\ref*{#1}}}}
\providecommand{\figref}[1]{\hyperref[#1]{\darkredtext{Fig.~\ref*{#1}}}}
\providecommand{\tabref}[1]{\hyperref[#1]{\darkredtext{Tab.~\ref*{#1}}}}
\providecommand{\appref}[1]{\hyperref[#1]{\darkredtext{App.~\ref*{#1}}}}

\newtheoremstyle{professional}
{10pt} 
{10pt} 
{\itshape} 
{} 
{\bfseries} 
{.} 
{.5em} 
{} 

\theoremstyle{professional}
\newtheorem{myth}{Theorem}[section]
\newtheorem{myprop}[myth]{Proposition}
\newtheorem{mylem}[myth]{Lemma}
\newtheorem{mycor}[myth]{Corollary}
\newtheorem{mydef}[myth]{Definition}
\newtheorem{myassump}[myth]{Assumption}
\newtheorem{myrem}[myth]{Remark}
\newtheorem{myhyp}[myth]{Hypothesis}
\newtheorem{myconj}[myth]{Conjecture}
\newtheorem{mynota}[myth]{Notation}
\newtheorem{myclaim}[myth]{Claim}
\newtheorem{myprob}[myth]{Problem}
\newtheorem{myobs}[myth]{Observation}

\tcbset{
    thmbox/.style={
            enhanced,
            breakable,
            sharp corners,
            boxrule=0pt,
            leftrule=3pt,
            top=0pt,
            bottom=0pt,
            left=5pt,
            right=5pt,
            before skip=10pt,
            after skip=10pt,
        }
}

\newenvironment{theorem}{\begin{tcolorbox}[thmbox, colback=red!5!white, colframe=red!75!black]\begin{myth}}{\end{myth}\end{tcolorbox}}
\newenvironment{proposition}{\begin{tcolorbox}[thmbox, colback=blue!5!white, colframe=blue!75!black]\begin{myprop}}{\end{myprop}\end{tcolorbox}}
\newenvironment{lemma}{\begin{tcolorbox}[thmbox, colback=cyan!5!white, colframe=cyan!75!black]\begin{mylem}}{\end{mylem}\end{tcolorbox}}

\newenvironment{definition}{\begin{tcolorbox}[thmbox, colback=gray!10!white, colframe=gray!75!black]\begin{mydef}}{\end{mydef}\end{tcolorbox}}

\newenvironment{remark}{\begin{tcolorbox}[thmbox, colback=yellow!5!white, colframe=yellow!75!black]\begin{myrem}}{\end{myrem}\end{tcolorbox}}

\newtheorem{innernote}{Note}

\newtheorem{innerexercise}{Exercise}

\definecolor{codegreen}{rgb}{0,0.6,0}
\definecolor{codegray}{rgb}{0.5,0.5,0.5}
\definecolor{codepurple}{rgb}{0.58,0,0.82}
\definecolor{backcolour}{rgb}{0.95,0.95,0.92}

\lstdefinestyle{mystyle}{
    backgroundcolor=\color{backcolour},
    commentstyle=\color{codegreen},
    keywordstyle=\color{magenta},
    numberstyle=\tiny\color{codegray},
    stringstyle=\color{codepurple},
    basicstyle=\ttfamily\footnotesize,
    breakatwhitespace=false,
    breaklines=true,
    captionpos=b,
    keepspaces=true,
    numbers=left,
    numbersep=5pt,
    showspaces=false,
    showstringspaces=false,
    showtabs=false,
    tabsize=2
}
\usepackage[textwidth=3.5cm]{todonotes}

\newcommand{\methodname}{\textsc{ThermoDPO}}
\newcommand{\method}{\methodname\xspace}
\newcommand{\methodweightedname}{\textsc{ThermoDPO}-weighted\xspace}
\newcommand{\methodweighted}{\methodweightedname}

\title{Manifold Drift in Flow Preference Optimization: \\
A Root Cause of Reward Hacking}

\author{
Yansen Han$^{1,2,\ast}$ \quad
Shengyi Liao$^{3,\ast}$ \quad
Yuanxing Zhang$^{3}$ \quad 
Pengfei Wan$^{3}$ \quad 
Tao Lin$^{1,\dagger}$ \quad 
\\[0.5em]
$^\ast$Equal contribution \quad
$^\dagger$Corresponding author
\\[0.25em]
$^1$Westlake University \quad
$^2$Zhejiang University \quad
$^3$Kling Team, Kuaishou Technology
}

\begin{document}

\maketitle

\begin{abstract}
    Preference optimization is a standard alignment method for generative models, yet extending it to continuous-time dynamics remains non-trivial.
    In flow matching, reward-driven updates modify transport trajectories without an inherent constraint to the pretrained data manifold and can move terminal samples off the pretrained support.
    We formalize this failure mode as \emph{manifold drift}.
    Theoretically, we show that optimal flow matching recovers the terminal data distribution, whereas a preference update leaves the pretrained manifold whenever its induced terminal displacement has a nonzero normal component.
    As a remedy, we propose \method, a temperature-controlled objective that anchors pairwise preference optimization on preferred samples.
    Across temperature regimes, this objective connects rejection sampling fine-tuning and FlowDPO and controls a pointwise reconstruction-based surrogate for manifold distance.
    To counteract diminished signals at low temperatures, we further introduce a weighted variant, \methodweighted.
    On the main toy benchmark, \methodweighted attains a StrictScore of $0.899$, compared with $0.629$ for FlowDPO and $0.857$ for FlowDPO+RFT.
    On SD3.5-M at CFG $=4.5$, it improves OCR by $47.5\%$ and the average of four metrics by $16.0\%$.
\end{abstract}

\begin{figure}[h]
    \centering
    \centering
    \includegraphics[width=1\linewidth]{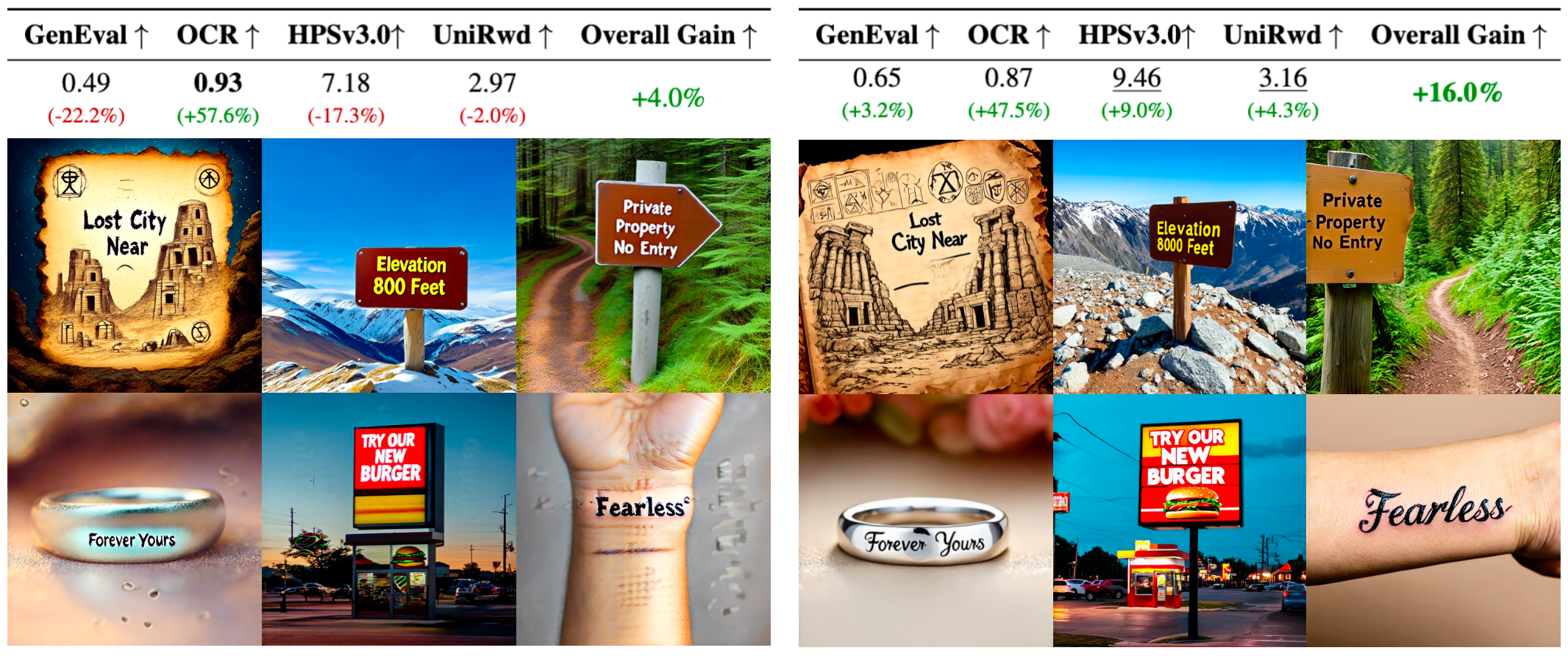}
    \caption{\small
        \textbf{Comparison of FlowDPO (left) and \methodweighted (right).}
        FlowDPO improves preference with manifold drift, whereas \methodweighted improves preference while preserving quality.
        \looseness=-1
    }
    \label{fig:comparison}
\end{figure}

\section{Introduction}

Direct Preference Optimization (DPO) has established itself as a simple and effective paradigm for aligning discrete generative models with human judgments~\citep{rafailov2023direct, ouyang2022training,sun2025solopo,tang2024generalized}. Motivated by this success, recent work has begun extending preference optimization to continuous generative models, including diffusion models~\citep{ho2020denoising, sohl2015deep, song2020score, wallace2024diffusion, domingoadjoint} and flow-based models~\citep{albergo2022building, lipman2022flow, sun2025unified, liu2025flow}.
This direction is crucial as these models now underpin state-of-the-art image and video generation, where preference alignment is essential for controllability.

\begin{figure*}[!t]
    \centering
    \includegraphics[width=0.9\textwidth]{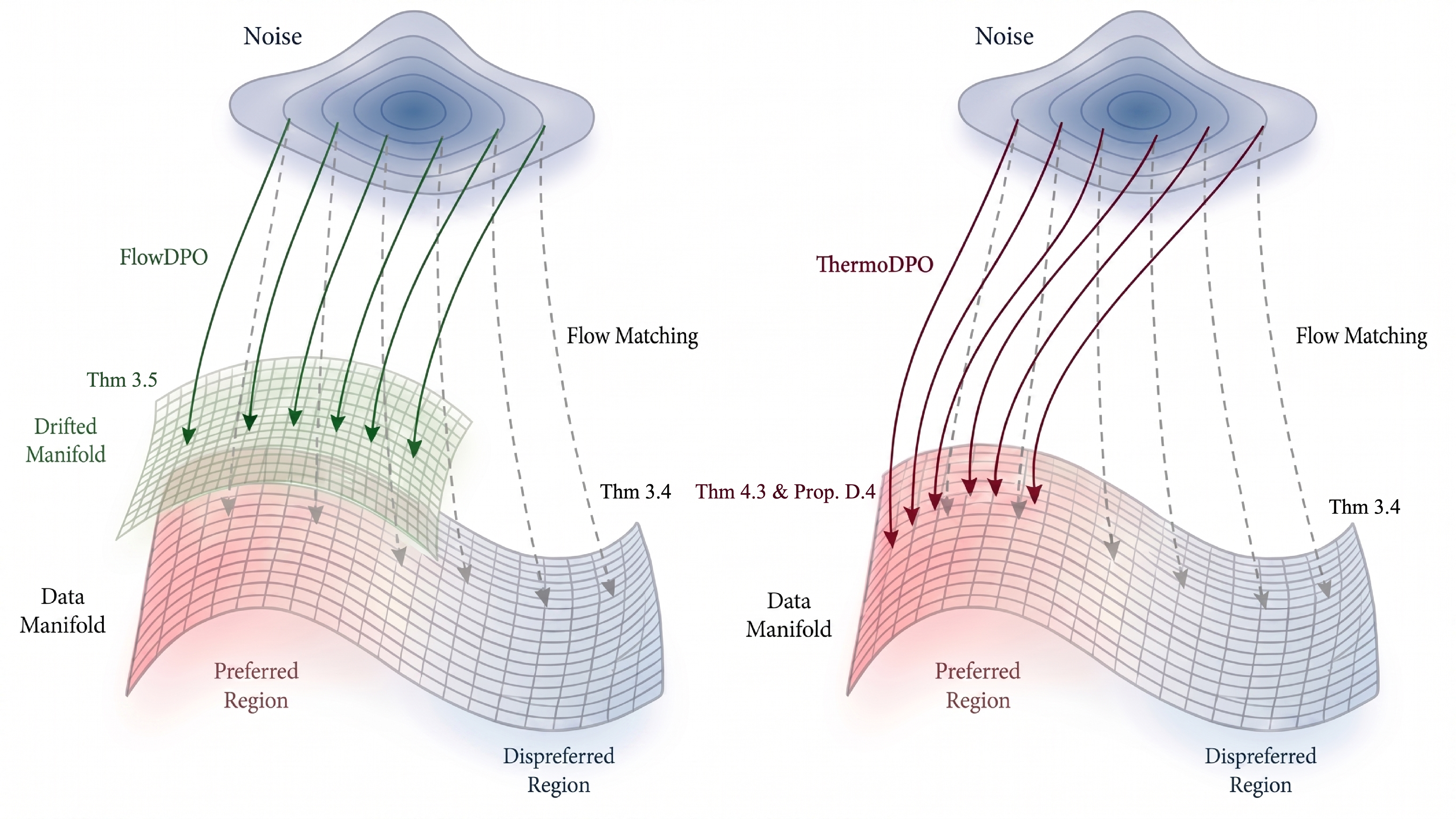}
    \vspace{-1em}
    \caption{\small
        \textbf{Core intuition of \method.}
        Flow Matching (gray) transports noise to the pretrained data manifold.
        FlowDPO (green) may reach preferred regions through an off-manifold displacement, whereas \method (red) adds a winner-side anchor intended to keep the redirected mass closer to the pretrained manifold.
        The formal statements and their assumptions are summarized in \tabref{tab:notation_theory_map}; the behavior is evaluated in the toy study (Fig.~\ref{fig:four_images}) and real-image study (Fig.~\ref{fig:comparison}).
    }
    \vspace{-1em}
    \label{fig:thermodpo_intuition}
\end{figure*}

However, transferring DPO to continuous generative models is not straightforward.
Unlike discrete models that primarily reweight the probabilities of completed outputs, continuous models generate samples by transporting noise along learned trajectories toward the data manifold.
Optimization in this setting therefore modifies not only output selection but also the transport dynamics themselves.

Although prior studies have noted empirical performance drops when applying DPO to continuous models~\citep{domingoadjoint, liu2025improving}, these findings have remained largely experimental observations.
In this paper, we argue that such degradation stems from a fundamental structural cause: \emph{reward-driven updates steer trajectories toward preferred regions only weakly supported by the pretrained data manifold}, a failure mode we formalize as \textbf{Manifold Drift}.
Beyond a simple geometric deviation, this drift damages the pretrained generative prior, leading to visibly degraded sample quality.
We characterize this problem in \secref{sec:manifold_drift} through theoretical analysis and supporting analytical evidence (see \figref{fig:four_images}).
\looseness=-1

To address these challenges, we introduce \textbf{\method}, which constrains trajectory endpoints to the pretrained manifold via a temperature-controlled anchor.
Analytically, \method unifies preference alignment and manifold preservation: it recovers rejection sampling fine-tuning (RFT)~\citep{xiong2025minimalist,chen2026nft} as $\tau \downarrow 0$, while for $\tau > 0$, it decomposes into a temperature-scaled FlowDPO~\citep{liu2025improving} objective and a nonnegative anchoring term.
This decomposition reveals an explicit trade-off between reward maximization and manifold drift.
To resolve practical weighting issues near $t=0$, we also propose a reweighted variant, \textbf{\methodweightedname}.

As shown in \figref{fig:thermodpo_intuition}, \method directs updates toward preferred regions while remaining anchored to the pretrained support, rather than treating alignment and preservation as conflicting goals.
Our experiments on both synthetic and real-world image benchmarks confirm that \methodweighted achieves a superior trade-off, significantly improving target metrics without compromising sample fidelity.
\textbf{Our main contributions are as follows:}


\begin{itemize}[leftmargin=*]
    \item We identify and formalize manifold drift, a failure mode in continuous preference optimization where the terminal of fine-tuned trajectories is away from the pretrained terminal manifold. \looseness=-1

    \item We propose \method, a method that augments pairwise preference optimization with a winner-side manifold anchor, and introduce a reweighted implementation, \methodweightedname, to ensure robust preservation of the terminal manifold in practice.

    \item We provide theoretical guarantees for both the \method and the \methodweighted, showing that it bridges rejection sampling fine-tuning and FlowDPO, while providing an upper bound on a reconstruction-based surrogate for manifold drift.

    \item We empirically demonstrate on a toy manifold and real-image benchmarks that \methodweightedname achieves a superior trade-off between preference alignment and manifold preservation compared to FlowDPO variants, improving target metrics without sacrificing visual quality.
\end{itemize}


\section{Preliminaries}
\label{sec:preliminaries}

\subsection{Flow Matching}
\label{subsec:flow_matching}
Flow Matching~\citep{lipman2022flow,albergo2022building} learns a time-dependent vector field that transports a simple prior distribution $p_1$ to the data distribution $p_0$.
Let $\mathbf x_0 \sim p_0$ denote a data sample and $\mathbf x_1 \sim p_1$ denote a noise sample.
Under the linear interpolation used throughout this paper, $\mathbf x_1$ is equivalently written as $\epsilon$.
For $t \in [0,1]$, define
\[
    \mathbf x_t = (1-t) \cdot \mathbf x_0 + t \cdot \mathbf x_1 \,,
\]
so generation proceeds from the noise endpoint $t=1$ to the data endpoint $t=0$.
The standard flow matching objective fits a vector field $v_\theta$ to the conditional velocity along this path:
\[
    \mathcal L_{\mathrm{FM}}(\theta)
    =
    \mathbb E_{(\mathbf x_0,\mathbf x_1,t)}
    \left[
        \|v_\theta(\mathbf x_t,t)-(\mathbf x_1-\mathbf x_0)\|^2
        \right] \,.
\]
Here $(\mathbf x_0,\mathbf x_1)\sim\gamma$ for a coupling $\gamma$ of $p_0$ and $p_1$, and $t\sim\mathrm{Unif}[0,1]$.
The learned vector field induces a flow map $\Phi^\theta_{1\to t}$ by solving the ODE $\frac{d\mathbf x_s}{ds} = v_\theta(\mathbf x_s,s)$ from $s=1$ to $s=t$.

\subsection{Direct Preference Optimization (DPO) in Continuous-Time Models}
\label{subsec:dpo_continuous}
For a preference dataset $\mathcal{D}=\{(c,\xx_0^w,\xx_0^l)\}$, standard DPO~\citep{rafailov2023direct} compares the log-likelihoods of winner $\xx_0^w$ and loser $\xx_0^l$ under the current model $\pi_\theta$ against a frozen reference model $\pi_{\mathrm{ref}}$:
\begin{equation}
    \label{eq:dpo}
    \mathcal{L}_{\text{DPO}}(\theta)
    =
    -\mathbb{E}_{(c,\xx_0^w,\xx_0^l)\sim\mathcal D}
    \left[
        \log \sigma \left(
        \beta \log \frac{\pi_\theta(\xx^w_0|c)}{\pi_{\mathrm{ref}}(\xx^w_0|c)}
        -
        \beta \log \frac{\pi_\theta(\xx^l_0|c)}{\pi_{\mathrm{ref}}(\xx^l_0|c)}
        \right)
        \right].
\end{equation}
Since evaluating exact log-likelihoods is computationally prohibitive for continuous models during training, prior works~\citep{liu2025improving,liu2025flow,zheng2025diffusionnft,wallace2024diffusion} replace $\log \pi_\theta(\xx_0 \mid c)$ with timestep-wise surrogates $\|v_\theta(\mathbf x_t,t)-(\mathbf x_1-\mathbf x_0)\|^2$.
DiffusionDPO \citep{wallace2024diffusion} uses denoising error, while FlowDPO \citep{liu2025improving} applies DPO to the flow matching regression loss.
To derive FlowDPO, we define the following notations with omitted shared condition $c$:
\begin{align}
    \label{eq:ell_def}
    \ell_\theta^{w/l} & := \|v_\theta(\mathbf x_t^{w/l},t)-(\mathbf x_1^{w/l}-\mathbf x_0^{w/l})\|^2, & \ell_{\mathrm{ref}}^{w/l} & := \|v_{\mathrm{ref}}(\mathbf x_t^{w/l},t)-(\mathbf x_1^{w/l}-\mathbf x_0^{w/l})\|^2, \\
    \label{eq:delta_def}
    \Delta_\theta^w   & := \ell_\theta^w - \ell_{\mathrm{ref}}^w,                                     & \Delta_\theta^l           & := \ell_\theta^l - \ell_{\mathrm{ref}}^l.
\end{align}
With $v_\theta$ and $v_{\mathrm{ref}}$ as the corresponding vector fields, the FlowDPO objective is:
\[
    \mathcal L_{\mathrm{FlowDPO}}(\theta)
    = \mathbb E \left[
        -\log \sigma\!\bigl(-\beta(\Delta_\theta^w-\Delta_\theta^l)\bigr)
        \right] \,.
\]

\begin{table*}[t]
    \centering
    \small
    \setlength{\tabcolsep}{5pt}
    \renewcommand{\arraystretch}{1.10}
    \caption{
        \textbf{Theorems' roadmap.}
        The analysis proceeds from characterizing manifold preservation and drift, through connecting \method to RFT and FlowDPO, to controlling winner-side manifold drift.}
    \label{tab:notation_theory_map}
    \begin{tabular}{@{}p{0.15\textwidth}p{0.82\textwidth}@{}}
        \toprule
        \textbf{Result} & \textbf{Main statement} \\
        \midrule
        \multicolumn{2}{@{}l}{\textbf{I. Manifold preservation and drift}} \\
        \addlinespace[0.15em]
        \thmref{thm:linear_fm_terminal_manifold}
            & FM can exactly recover $p_0$ and its terminal support (gray path in \figref{fig:thermodpo_intuition}). \\
        \thmref{thm:flowdpo_drift_main}
            & A nonzero normal component in the induced endpoint update is sufficient for off-manifold drift (green path in \figref{fig:thermodpo_intuition}). \\
        \addlinespace[0.45em]
        \multicolumn{2}{@{}l}{\textbf{II. Connection to RFT and FlowDPO}} \\
        \addlinespace[0.15em]
        \thmref{thm:thermo_to_rft}
            & As $\tau\downarrow0$, \method conditionally reduces to the RFT objective. \\
        \thmref{thm:thermo_vs_flowdpo}
            & For $\tau>0$, \method decomposes into FlowDPO and a winner-side anchor. \\
        \addlinespace[0.45em]
        \multicolumn{2}{@{}l}{\textbf{III. Manifold drift control}} \\
        \addlinespace[0.15em]
        \thmref{thm:thermo_exp_drift_suppression}
            & The pointwise loss upper-bounds the reconstructed winner's squared distance to $\mathcal M_{\mathrm{data}}$ (red path in \figref{fig:thermodpo_intuition}); it is not a distribution-level guarantee. \\
        \bottomrule
    \end{tabular}
\end{table*}

\section{Manifold Drift in Continuous Preference Optimization}
\label{sec:manifold_drift}

\paragraph{Motivation: manifold support and the drift problem.}
Preference optimization typically starts from a pretrained reference model with vector field $v_{\mathrm{ref}}$ and flow map $\Phi^{\mathrm{ref}}_{1\to t}$.
Following the manifold hypothesis~\citep{fefferman2016testing, lei2020geometric}, we assume natural data concentrate near a lower-dimensional set $\mathcal M_{\mathrm{data}} := \operatorname{supp}(p_0)$.
In practice, we use the \emph{pretrained terminal manifold} as an operational proxy for this set:
\[
    \mathcal M_0 := \operatorname{supp}\!\left((\Phi^{\mathrm{ref}}_{1\to 0})_\# p_1\right) \,.
\]
For any learned model $\theta$, we denote its terminal sample and distribution as $\xx_0 := \Phi^\theta_{1\to 0}(\mathbf x_1)$ and $\mu_\theta := (\Phi^\theta_{1\to 0})_\# p_1$.
The problem of \emph{manifold drift} arises when preference optimization steers $\mu_\theta$ away from $\mathcal M_0$ toward regions that lack generative support.
To rigorously analyze this, we first establish formal characterizations of terminal on-manifold flows (\defref{def:on_mani_flow}) and manifold drift (\defref{def:mani_drift}).

\begin{definition}[Terminal on-manifold flow]
    \label{def:on_mani_flow}
    Let $\mathcal M_0 \subset \mathbb R^d$ denote the pretrained terminal manifold, and let
    \[
        \mu_\theta := (\Phi^\theta_{1\to 0})_\# p_1
    \]
    denote the terminal distribution induced by the fine-tuned flow map $\Phi^\theta_{1\to 0}$.
    We say that $\Phi^\theta_{1\to 0}$ is \emph{terminally on-manifold} if
    \[
        \operatorname{supp}(\mu_\theta) \subseteq \mathcal M_0 \,.
    \]
\end{definition}

\begin{definition}[Manifold drift]
    \label{def:mani_drift}
    Let $\mu_\theta := (\Phi^\theta_{1\to 0})_\# p_1$ denote the terminal distribution induced by the fine-tuned flow map $\Phi^\theta_{1\to 0}$.
    We say that $\Phi^\theta_{1\to 0}$ exhibits \emph{manifold drift} with respect to the pretrained terminal manifold $\mathcal M_0$ if
    \[
        \operatorname{supp}(\mu_\theta)\nsubseteq \mathcal M_0 \,.
    \]
\end{definition}

\begin{remark}[Intuitive Interpretation]
    The manifold $\mathcal{M}_0$ can be considered as the ground-truth image manifold by manifold hypothesis or the terminal manifold learned during pretraining. The intended role of preference optimization is to reweight probability mass toward preferred regions while preserving the semantic and perceptual structure learned during pretraining. Manifold drift refers to the failure of this preservation: the aligned flow may move terminal samples outside the pretrained terminal manifold, potentially causing visual artifacts, semantic distortions, or degradation in sample fidelity, which is typically considered as a result of reward hacking.
\end{remark}

\paragraph{Preferred samples should not be outside the pretrained terminal manifold.}
We agree that moving beyond the pretrained manifold can be beneficial when fine-tuning a weak baseline. Our claim is that the manifold drift is theoretically illegal and can be practically risky.
\begin{itemize}[leftmargin=*]
    \item \textbf{Theoretical interpretation}: DPO~\citep{rafailov2023direct} is derived from KL-regularized reward maximization: $\max_\theta \mathbb{E}_{\pi_\theta} \left[r(x)\right] - \beta D_{KL}(\pi_\theta || \pi_{\mathrm{ref}})$, whose optimal solution is $\pi_\theta \propto \pi_{\mathrm{ref}}\exp(r(x)/\beta)$. This implies $supp(\pi^*) \subseteq supp(\pi_{\mathrm{ref}})$, and thus the original goal of DPO objective is to \textbf{condense the probability within the high-reward region of the support manifold}. However, practical flow-based RL often removes this constraint and directly optimizes the vector field.
    \item \textbf{Practical interpretation}: we can categorize the reward function into two types: manifold-aware and manifold-unaware. (1) Manifold-unaware reward: OCR is in this type because it only considers the correctness of the text in the generated image rather than the validity of the whole image. Therefore, using this kind of reward function, we can easily notice the manifold drift. (2) Manifold-aware reward: Pickscore is in this type because it focuses on the quality of the whole image, and thus fine-tuning with this kind of reward can hardly notice the manifold drift (still can happen). Using this kind of reward, FlowDPO can achieve higher reward due to manifold drift, while the valid visual quality make us unconscious about manifold drift.
\end{itemize}
Even though we only test this problem in image generation, we think manifold drift is more dangerous in robotics, because manifold drift means out-of-distribution behavior, i.e., unexpected behavior.

\paragraph{Optimal flow matching preserves the terminal manifold.}
\thmref{thm:linear_fm_terminal_manifold} states that, under exact optimization and the listed regularity assumptions, the induced FM flow recovers the data distribution at the terminal time.
\looseness=-1

\begin{theorem}[Optimal Flow Matching reaches the data manifold]
    \label{thm:linear_fm_terminal_manifold}
    Let $\mathcal M_{\mathrm{data}}:=\operatorname{supp}(p_0)$. Under linear interpolation and standard regularity assumptions ensuring that $v^\star$ generates a unique flow map and that the associated continuity equation admits a unique weak solution, then an optimal Flow Matching vector field $v^\star$ transports the prior $p_1$ exactly to the data distribution $p_0$:
    \[
        (\Phi^{v^\star}_{1\to 0})_\# p_1 = p_0 \,.
    \]
    Consequently,
    \[
        \operatorname{supp}\!\left((\Phi^{v^\star}_{1\to 0})_\# p_1\right)=\operatorname{supp}(p_0)=\mathcal M_{\mathrm{data}} \,.
    \]
\end{theorem}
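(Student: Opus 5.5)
The plan is to follow the standard marginal-velocity argument for flow matching, in three steps. First identify the minimizer of $\mathcal L_{\mathrm{FM}}$. Then show that this minimizer and the interpolation marginals solve the same continuity equation. Finally invoke the assumed uniqueness to transfer this to the pushforward under the flow map.

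\textbf{Step 1: the optimal field.} Let $p_t$ denote the law of $\mathbf x_t=(1-t)\mathbf x_0+t\mathbf x_1$ with $(\mathbf x_0,\mathbf x_1)\sim\gamma$. Since $\mathcal L_{\mathrm{FM}}$ is a least-squares regression of $\mathbf x_1-\mathbf x_0$ on $(\mathbf x_t,t)$, its minimizer over measurable fields in $L^2(p_t\,dt)$ is the conditional expectation
\[
v^\star(x,t)=\mathbb E\bigl[\mathbf x_1-\mathbf x_0 \mid \mathbf x_t=x\bigr],
\]
defined $p_t$-a.e. for a.e. $t$. This follows from the orthogonal decomposition
\[
\mathcal L_{\mathrm{FM}}(v)=\mathbb E\|v-v^\star\|^2+\mathbb E\|v^\star-(\mathbf x_1-\mathbf x_0)\|^2 .
\]
Hence any optimal $v$ coincides with $v^\star$ on the support of $p_t$, which is all that matters for the transport of $p_t$.

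\textbf{Step 2: $(p_t,v^\star)$ solves the continuity equation.} Take a test function $\varphi\in C_c^\infty(\mathbb R^d)$. Differentiating along the interpolation gives
\[
\frac{d}{dt}\mathbb E[\varphi(\mathbf x_t)]=\mathbb E\bigl[\nabla\varphi(\mathbf x_t)\cdot(\mathbf x_1-\mathbf x_0)\bigr].
\]
Conditioning on $\mathbf x_t$ via the tower property turns the right-hand side into $\mathbb E[\nabla\varphi(\mathbf x_t)\cdot v^\star(\mathbf x_t,t)]$. This is precisely the weak form of $\partial_t p_t+\nabla\cdot(p_t v^\star)=0$, with boundary values $p_0$ at $t=0$ and $p_1$ at $t=1$. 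Differentiation under the expectation is justified by finite second moments of $\gamma$ and bounded $\nabla\varphi$.

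\textbf{Step 3: identify the pushforward.} By the assumed regularity, $v^\star$ generates a unique flow map $\Phi^{v^\star}$. Set $q_t:=(\Phi^{v^\star}_{1\to t})_\#p_1$. The chain rule along ODE trajectories shows that $q_t$ is also a weak solution of the same continuity equation, now with terminal data $q_1=p_1$. Running time backward from $1$ to $0$ is harmless since the equation is time-reversible in weak form. The assumed uniqueness of weak solutions then yields $q_t=p_t$ for all $t$, and in particular $(\Phi^{v^\star}_{1\to 0})_\#p_1=p_0$. The support identity is immediate, because equal measures have equal supports, so the pushforward's support equals $\operatorname{supp}(p_0)=\mathcal M_{\mathrm{data}}$.

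\textbf{Main obstacle.} The delicate step is Step 3, specifically the applicability of uniqueness for this particular $v^\star$. Near $t=0$ the field $v^\star$ may be singular when $p_0$ is concentrated on a low-dimensional $\mathcal M_{\mathrm{data}}$, so Lipschitz bounds can fail there. I would handle this exactly as the statement permits: the well-posedness of the flow and the uniqueness of the continuity equation, for example in the DiPerna--Lions/Ambrosio class, are taken as hypotheses. To reach the endpoint I would apply uniqueness on $[\delta,1]$ and then let $\delta\downarrow0$. On the $p_t$ side, $p_\delta\to p_0$ weakly because $\mathbf x_\delta\to\mathbf x_0$ almost surely. On the flow side, $q_\delta\to q_0$ because the flow map is continuous in time, and this continuity is part of the assumption that $\Phi^{v^\star}_{1\to0}$ exists.
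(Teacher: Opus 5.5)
Your proposal is correct and follows essentially the same route as the paper. You identify $v^\star$ as the conditional mean velocity $\mathbb E[\mathbf x_1-\mathbf x_0\mid \mathbf x_t=x]$ by $L^2$ projection, check via test functions and the tower property that the interpolation marginals solve the continuity equation weakly, and then use the assumed uniqueness with the common terminal datum $p_1$ to identify $(\Phi^{v^\star}_{1\to t})_\# p_1$ with the interpolation marginal at $t=0$; your truncation to $[\delta,1]$ with $\delta\downarrow 0$ is extra care the paper skips (it applies uniqueness directly on $[0,1]$), but it does not change the argument.
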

See \appref{app:proof_linear_fm_terminal_manifold} for the proof, and \figref{fig:four_images} for the empirical toy example with setup details in \appref{app:toy_setup}.

\paragraph{On the failure of FlowDPO.}
In contrast to the ideal FM benchmark, \thmref{thm:flowdpo_drift_main} gives a sufficient condition under which a preference update leaves the pretrained terminal manifold.

\begin{theorem}[Manifold drift under a nonzero normal component assumption]
    \label{thm:flowdpo_drift_main}
    Let $\mathcal M_0:=\operatorname{supp}\!\left((\Phi^{\mathrm{ref}}_{1\to 0})_\# p_1\right)$ be the pretrained terminal manifold, assume $\mathcal M_0\subset\mathbb R^d$ is a twice continuously differentiable embedded submanifold, and fix $\mathbf x_1\in\mathcal M_1$. Define
    \[
        F(\theta,\mathbf x_1):=\Phi^\theta_{1\to 0}(\mathbf x_1),\qquad
        \mathbf x_0^\star:=F(\theta_0,\mathbf x_1)\in\mathcal M_0 \,,
    \]
    where $\theta_0=\theta_{\mathrm{ref}}$ and $F(\theta,\mathbf x_1)$ is differentiable in $\theta$ at $\theta_0$.
    Let $\Pi_{N_{\mathbf x_0^\star}\mathcal M_0}$ denote the orthogonal projection onto the normal space $N_{\mathbf x_0^\star}\mathcal M_0$ and $\mathcal L$ be the loss function.
    Let $\theta_1=\theta_0-\alpha\nabla_\theta \mathcal L(\theta_0)$ be one gradient step.
    If
    \[
        \Pi_{N_{\mathbf x_0^\star}\mathcal M_0}
        D_\theta F(\theta_0,\mathbf x_1)\big[\nabla_\theta \mathcal L(\theta_0)\big]\neq 0 \,,
    \]
    then there exists $\alpha_0>0$ such that for all $\alpha\in(0,\alpha_0)$, $F(\theta_1,\mathbf x_1)\notin\mathcal M_0$.
\end{theorem}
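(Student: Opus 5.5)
The plan is a first-order Taylor expansion of the endpoint map in the step size, combined with a local description of $\mathcal M_0$ near $\mathbf x_0^\star$ as a graph over its tangent space. Write $g:=\nabla_\theta\mathcal L(\theta_0)$ and $w:=D_\theta F(\theta_0,\mathbf x_1)[g]$. Differentiability of $F(\cdot,\mathbf x_1)$ at $\theta_0$ gives
\[
F(\theta_0-\alpha g,\mathbf x_1)=\mathbf x_0^\star-\alpha w+r(\alpha),\qquad \|r(\alpha)\|=o(\alpha)\ \text{as }\alpha\downarrow 0 .
\]
Decompose $w=w_T+w_N$ with $w_T\in T_{\mathbf x_0^\star}\mathcal M_0$ and $w_N=\Pi_{N_{\mathbf x_0^\star}\mathcal M_0}w$. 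By hypothesis $w_N\neq 0$.

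Next I would use the $C^2$ embedded-submanifold assumption. It gives a neighbourhood $U$ of $\mathbf x_0^\star$, a radius $\rho>0$, and a $C^2$ map $h:B_\rho\subset T_{\mathbf x_0^\star}\mathcal M_0\to N_{\mathbf x_0^\star}\mathcal M_0$ with $h(0)=0$ and $Dh(0)=0$, such that
\[
\mathcal M_0\cap U=\{\mathbf x_0^\star+u+h(u):u\in B_\rho\}.
\]
Shrinking $\rho$, Taylor's theorem gives $\|h(u)\|\le C\|u\|^2$. Hence any point $\mathbf x_0^\star+y\in\mathcal M_0\cap U$ satisfies
\[
\|\Pi_N y\|\le C\|\Pi_T y\|^2 .
\]
Here I use that $\mathcal M_0$ is embedded, so its intersection with a small ambient neighbourhood is a single graph sheet. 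Points of $\mathcal M_0$ far away intrinsically cannot come back close to $\mathbf x_0^\star$; this is exactly where embeddedness, rather than mere immersion, is needed.

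Now apply this with $y(\alpha)=-\alpha w+r(\alpha)$. Since $F(\theta_1,\mathbf x_1)\to\mathbf x_0^\star$, the point lies in $U$ for small $\alpha$. Its normal part satisfies $\|\Pi_N y(\alpha)\|\ge\alpha\|w_N\|-o(\alpha)$. Its tangent part satisfies $\|\Pi_T y(\alpha)\|\le\alpha\|w_T\|+o(\alpha)=O(\alpha)$, so $C\|\Pi_T y\|^2=O(\alpha^2)$. For $\alpha$ below some $\alpha_0>0$ we therefore get $\alpha\|w_N\|-o(\alpha)>O(\alpha^2)$. This violates the graph inequality, so $F(\theta_1,\mathbf x_1)\notin\mathcal M_0$ for all $\alpha\in(0,\alpha_0)$. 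The orthogonal projection is nonexpansive, so the $o(\alpha)$ remainder passes through both projections unchanged in order.

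I expect the main obstacle to be the geometric step rather than the expansion. The task there is to justify the local graph representation with the quadratic normal bound, and to ensure that no other sheet of $\mathcal M_0$ passes near $\mathbf x_0^\star$. I would handle this by citing the standard local slice chart for embedded $C^2$ submanifolds together with the implicit function theorem, choosing $U$ as the domain of the slice chart. As a cleaner alternative, I could use the $C^1$ nearest-point projection on a tubular neighbourhood and show $\mathrm{dist}(F(\theta_1,\mathbf x_1),\mathcal M_0)\ge\alpha\|w_N\|-o(\alpha)>0$.
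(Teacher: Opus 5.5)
Your proposal is correct and follows essentially the same route as the paper. Both arguments combine the first-order expansion $F(\theta_0-\alpha g,\mathbf x_1)=\mathbf x_0^\star-\alpha w+o(\alpha)$ with the local-graph lemma that points of a $C^2$ embedded submanifold near $\mathbf x_0^\star$ have normal displacement bounded by a constant times the squared displacement. The only cosmetic differences are that you compare the order-$\alpha$ normal lower bound with the $O(\alpha^2)$ graph bound directly rather than by contradiction along a sequence $\alpha_n\downarrow 0$, and that you bound by $\|\Pi_T y\|^2$ rather than $\|y\|^2$; both choices are equivalent.
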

See \appref{app:proof_flowdpo_terminal_normal_drift} for the proof. The result is loss-agnostic: it does not assert that every FlowDPO update drifts, but identifies a nonzero normal component as sufficient. \figref{fig:four_images} shows this behavior in our controlled toy instance.
\begin{remark}[Why is FlowDPO prone to manifold drift?]
        At its optimum, FM regression $\E[\|v_\theta(\mathbf x_t,t)-(\mathbf x_1-\mathbf x_0)\|^2]$ preserves the target manifold under the assumptions of \thmref{thm:linear_fm_terminal_manifold}. The RFT term $\E[\|v_\theta(\mathbf x^w_t,t)-(\mathbf x_1-\mathbf x_0^w)\|^2]$ preserves the preferred region in the target manifold. FlowDPO can be simplied as $\E[\|v_\theta(\mathbf x^w_t,t)-(\mathbf x_1-\mathbf x_0^w)\|^2] - \E[\|v_\theta(\mathbf x^l_t,t)-(\mathbf x_1-\mathbf x_0^l)\|^2]$, where the substraction of the loser error gives a force to drift away from the manifold.
\end{remark}

\begin{figure*}[t]
    \centering
    \begin{minipage}{0.24\textwidth}
        \centering
        \includegraphics[width=\linewidth]{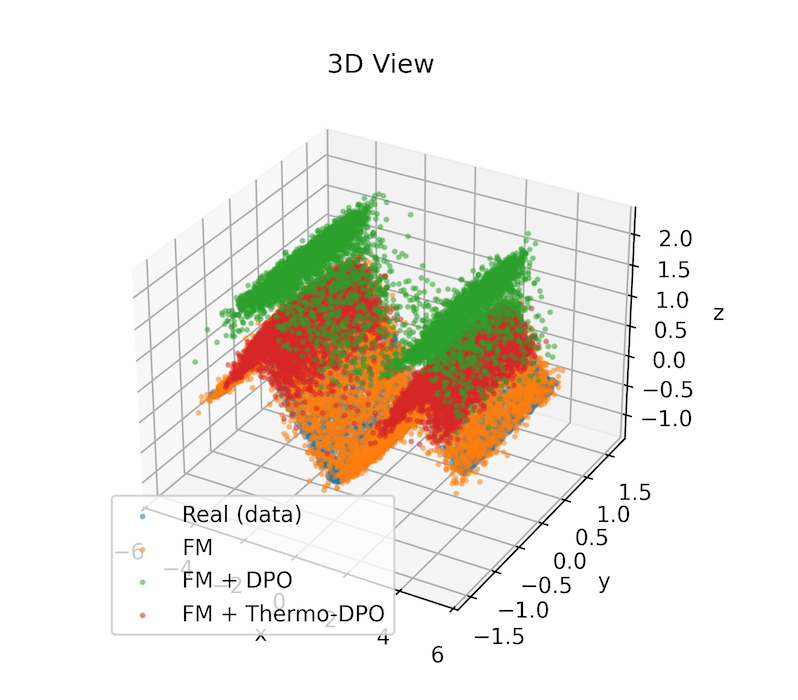}
    \end{minipage}
    \hfill
    \begin{minipage}{0.24\textwidth}
        \centering
        \includegraphics[width=\linewidth]{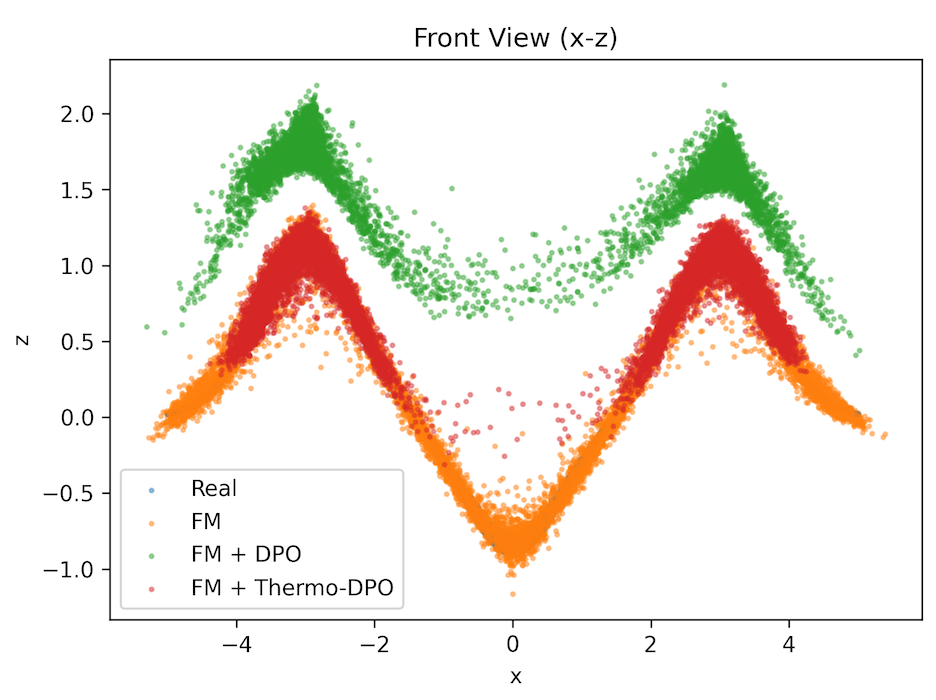}
    \end{minipage}
    \hfill
    \begin{minipage}{0.24\textwidth}
        \centering
        \includegraphics[width=\linewidth]{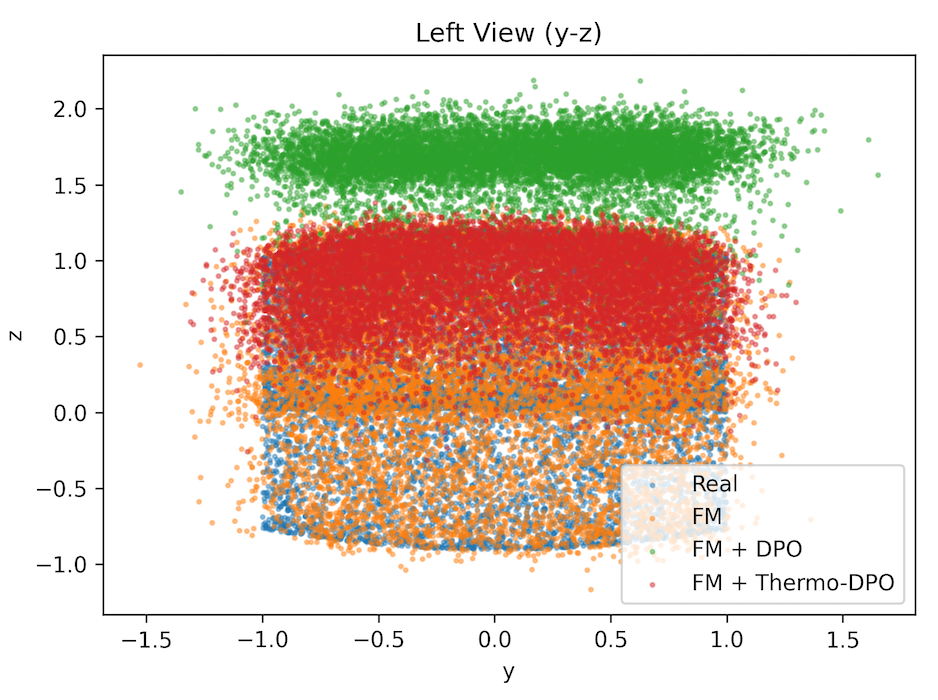}
    \end{minipage}
    \hfill
    \begin{minipage}{0.24\textwidth}
        \centering
        \includegraphics[width=\linewidth]{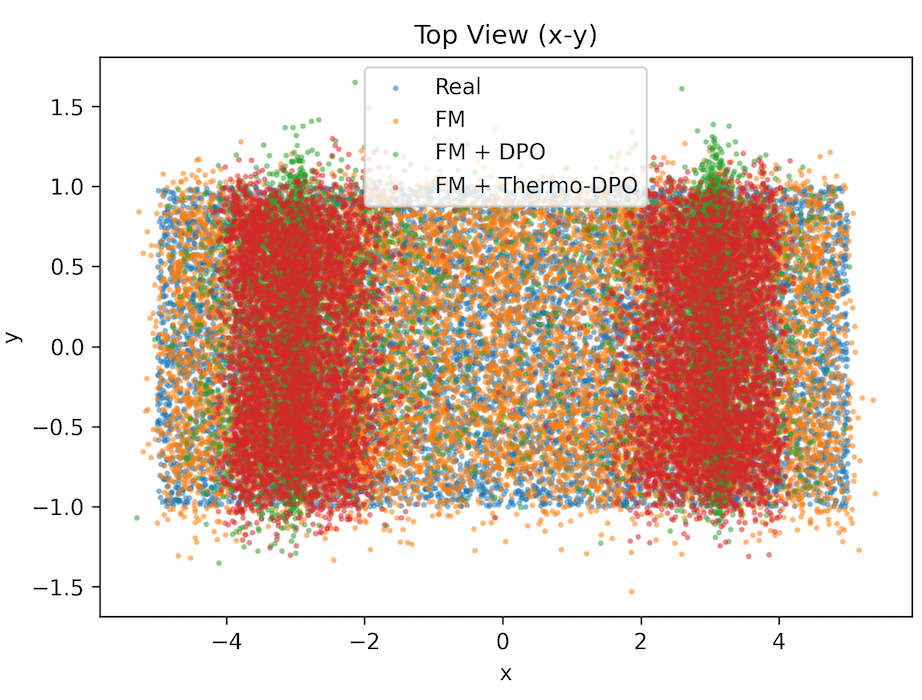}
    \end{minipage}
    \caption{
        \textbf{Toy example of manifold drift under direct preference optimization}.
        Starting from the same pretrained flow matching (FM) reference model, FlowDPO drives terminal samples toward preferred regions but also causes a deviation from the pretrained data manifold.
        In contrast, \method variant preserves the overall manifold structure much better while still improving alignment with the preference signal.
    }
    \label{fig:four_images}
\end{figure*}

\section{\method: Preference Optimization with Terminal Manifold Control}
\label{sec:terminal_manifold_preservation}

The possibility of FlowDPO leaving the terminal manifold (\secref{sec:manifold_drift}) raises a practical question: \emph{can we improve preference alignment while explicitly controlling terminal displacement?}

To resolve this tension, we introduce \textbf{\method}, a method that explicitly bridges preference alignment and manifold preservation.
Our approach is characterized by three key properties: a theoretical reduction to RFT (\secref{subsec:thermo_theory_relation}), an anchored-FlowDPO decomposition (\secref{subsec:thermo_theory_relation}), and a direct geometric bound on manifold drift (\secref{subsec:thermo_suppress_drift}).

\subsection{Temperature-Controlled Preference Optimization for Flow Models}
We formulate \method by viewing alignment as a thermodynamic balancing act, where a time-dependent temperature function $\tau(t)$ governs the trade-off between maximizing reward and anchoring mass to the pretrained manifold.
By varying $\tau(t)$, we can interpolate between the rigid constraints of rejection sampling (low temperature) and the flexible preference signal of FlowDPO (high temperature).
This formulation is inspired by Boltzmann distributions over energy states~\citep{aggarwal2025boltznce}, representing the competing goals of alignment and preservation.

Following the FlowDPO notations from \secref{sec:preliminaries}, we define three energy-like components that represent the preferred, rejected, and reference states:
\begin{align}
    \label{eq:thermo_energy_terms}
    E_w & = - \frac{\Delta_\theta^w}{\tau(t)} \,,
    \qquad
    E_l = - \frac{\Delta_\theta^l}{\tau(t)} \,,
    \qquad
    E_b = \frac{\ell_{\mathrm{ref}}^w}{\tau(t)} \,.
\end{align}
The \method objective then minimizes the negative log-probability of the preferred sample within this three-state system, effectively anchoring preference optimization:
\begin{equation}
    \label{eq:thermodpo_loss}
    \mathcal{L}_{\text{ThermoDPO}}(\theta) = \mathbb{E} \left[-\tau(t) \cdot t^2 \cdot \log \frac{e^{E_w}}{e^{E_w}+e^{E_l}+e^{E_b}}\right] \,.
\end{equation}
For our theoretical analysis, we use this standard formulation.
In practice, as the $t^2$ factor weakens the manifold anchor near the terminal endpoint ($t=0$), we propose and evaluate a reweighted variant, \methodweightedname, as detailed in \secref{subsec:thermo_practical_implementation}.

\subsection{Theoretical Relationship to Rejection Sampling Fine-Tuning and FlowDPO}
\label{subsec:thermo_theory_relation}

For the brevity of theoretical analysis, we rewrite the loss (in \eqref{eq:thermodpo_loss}) in terms of a single-sample integrand as follows:
\begin{equation}
    \label{eq:g_tau}
    \mathcal{L}_{\text{ThermoDPO}}(\theta)
    =
    \mathbb E\big[g_\tau(\theta)\big] \,, \quad
    g_\tau(\theta)
    :=
    t^2 \cdot \tau \cdot \log\!\left(
    1+
    \exp\!\left(\frac{\Delta_\theta^w-\Delta_\theta^l}{\tau}\right)
    +
    \exp\!\left(\frac{\ell_\theta^w}{\tau}\right)
    \right) \,.
\end{equation}
To analyze these properties pointwise, we fix a tuple $(\mathbf x_0^w,\mathbf x_0^l,t,\mathbf x_1)$ with $t\in(0,1]$, let $\tau := \tau(t)$, and recall $\ell_\theta^w := \|v_\theta(\mathbf x_t^w,t)-(\mathbf x_1-\mathbf x_0^w)\|^2$.
We first demonstrate that \method recovers rejection sampling fine-tuning (RFT)~\citep{xiong2025minimalist,chen2026nft} as the temperature vanishes.

\begin{theorem}[\method reduces to RFT]
    \label{thm:thermo_to_rft}
    For the integrand in \eqref{eq:g_tau},
    \begin{equation}
        \lim_{\tau \downarrow 0} g_\tau(\theta)
        = t^2 \cdot \max\!\left\{
        0 \,,
        \Delta_\theta^w-\Delta_\theta^l,\,
        \ell_\theta^w
        \right\} \,.
    \end{equation}
    Furthermore, if $\Delta_\theta^w-\Delta_\theta^l \leq \ell_\theta^w$, the objective reduces to the weighted reconstruction error:
    \[
        \lim_{\tau \downarrow 0} g_\tau(\theta) = \| \tilde{\mathbf x}_0^w - \mathbf x_0^w \|^2 = t^2 \ell_\theta^w \,,
    \]
    where $\tilde{\mathbf x}_0^w := \mathbf x_t^w - t \cdot v_\theta(\mathbf x_t^w, t)$ is the reconstructed preferred sample.
\end{theorem}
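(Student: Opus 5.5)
The plan is to treat $g_\tau(\theta)$ as $t^2$ times a log-sum-exp at temperature $\tau$ and use the standard fact that such a smoothed maximum converges to the hard maximum as $\tau\downarrow 0$. Write $a_1:=0$, $a_2:=\Delta_\theta^w-\Delta_\theta^l$, $a_3:=\ell_\theta^w$. These are fixed real numbers, since the tuple $(\mathbf x_0^w,\mathbf x_0^l,t,\mathbf x_1)$ and $\theta$ are fixed and $\tau$ enters only through the explicit division. Then
\[
g_\tau(\theta)=t^2\,\tau\log\bigl(e^{a_1/\tau}+e^{a_2/\tau}+e^{a_3/\tau}\bigr).
\]
Let $M:=\max\{a_1,a_2,a_3\}$. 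Each exponential lies between $0$ and $e^{M/\tau}$, and at least one equals $e^{M/\tau}$. This gives the sandwich
\[
e^{M/\tau}\le \sum_i e^{a_i/\tau}\le 3e^{M/\tau}.
\]
Taking $\tau\log$ yields $M\le \tau\log\sum_i e^{a_i/\tau}\le M+\tau\log 3$. Letting $\tau\downarrow 0$ and multiplying by the constant $t^2$ gives the first display of \thmref{thm:thermo_to_rft}.

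For the second claim, note first that $\ell_\theta^w\ge 0$ because it is a squared norm. Under the hypothesis $\Delta_\theta^w-\Delta_\theta^l\le \ell_\theta^w$, the maximum $M$ therefore equals $\ell_\theta^w$, so the limit is $t^2\ell_\theta^w$. It remains to identify this quantity with the reconstruction error. Substitute $\mathbf x_t^w=(1-t)\mathbf x_0^w+t\mathbf x_1$ into $\tilde{\mathbf x}_0^w=\mathbf x_t^w-t\,v_\theta(\mathbf x_t^w,t)$, which gives
\[
\tilde{\mathbf x}_0^w-\mathbf x_0^w=-t\bigl(v_\theta(\mathbf x_t^w,t)-(\mathbf x_1-\mathbf x_0^w)\bigr).
\]
Taking squared norms gives $\|\tilde{\mathbf x}_0^w-\mathbf x_0^w\|^2=t^2\ell_\theta^w$.

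There is no serious obstacle here. The only point needing care is bookkeeping: the limit is pointwise in the fixed tuple, and $\tau=\tau(t)$ is treated as a free parameter sent to $0$ with $t$ held fixed. One should also note that ties among the $a_i$ are harmless, since they are absorbed by the factor $3$ in the sandwich. If one wants the limit to pass through the expectation in $\mathcal L_{\mathrm{ThermoDPO}}$, the same bound supplies the domination: $0\le g_\tau-t^2M\le t^2\tau\log 3$. This gives uniform convergence, so dominated convergence applies whenever $\E[t^2M]<\infty$. The theorem, however, only asserts the pointwise statement.
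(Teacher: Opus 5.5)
Your proof is correct and follows the paper's argument. You apply the log-sum-exp limit $\tau\log(1+e^{a/\tau}+e^{b/\tau})\to\max\{0,a,b\}$, then expand $\mathbf x_t^w=(1-t)\mathbf x_0^w+t\mathbf x_1$ to obtain $t^2\ell_\theta^w=\|\tilde{\mathbf x}_0^w-\mathbf x_0^w\|^2$. Beyond the paper, you prove that limit via the sandwich $M\le\tau\log\sum_i e^{a_i/\tau}\le M+\tau\log 3$, which the paper uses without proof, and you state explicitly that $\ell_\theta^w\ge 0$ is what lets you drop the $0$ entry of the maximum, a step the paper leaves implicit.
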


See \appref{app:proof_thermo_to_rft} for the proof.
In the low-temperature regime, \method effectively collapses to a time-weighted reconstruction objective whenever the preference signal is dominated by the manifold constraint.
Beyond this limit, we can analytically relate \method to the FlowDPO objective.
\looseness=-1

\begin{theorem}[\method as anchored FlowDPO]
    \label{thm:thermo_vs_flowdpo}
    For every $\tau>0$, the integrand $g_{\tau}(\theta)$ decomposes into a temperature-scaled FlowDPO objective and a nonnegative anchoring term:
    \begin{equation}
        \label{eq:thermo_vs_flowdpo_decomposition}
        g_{\tau}(\theta)
        =
        t^2 \cdot \tau \cdot \log\!\left(
        1+
        \exp\!\left(\frac{\Delta_\theta^w-\Delta_\theta^l}{\tau}\right)
        \right)
        +
        r_\tau(\theta) \,,
    \end{equation}
    where the anchoring term $r_\tau(\theta) \ge 0$ is defined as:
    \begin{equation}
        r_\tau(\theta)
        :=
        t^2 \cdot \tau \cdot \log\!\left(
        \frac{
            1+
            \exp\!\left(\frac{\Delta_\theta^w-\Delta_\theta^l}{\tau}\right)
            +
            \exp\!\left(\frac{\ell_\theta^w}{\tau}\right)
        }{
            1+
            \exp\!\left(\frac{\Delta_\theta^w-\Delta_\theta^l}{\tau}\right)
        }
        \right) \,.
    \end{equation}
\end{theorem}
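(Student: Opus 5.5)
The plan is to obtain the decomposition by adding and subtracting the FlowDPO-type term inside $g_\tau(\theta)$ from \eqref{eq:g_tau}, and then to prove $r_\tau(\theta)\ge 0$ using the monotonicity of the logarithm. Fix the tuple $(\mathbf x_0^w,\mathbf x_0^l,t,\mathbf x_1)$ and $\tau>0$. Introduce the shorthands
\[
A := 1+\exp\!\left(\frac{\Delta_\theta^w-\Delta_\theta^l}{\tau}\right) ,\qquad
B := \exp\!\left(\frac{\ell_\theta^w}{\tau}\right) .
\]
In this notation, $g_\tau(\theta)=t^2\tau\log(A+B)$. Since $A>1>0$, we can write $\log(A+B)=\log A+\log\bigl((A+B)/A\bigr)$. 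Multiplying by $t^2\tau$ gives exactly the first term of \eqref{eq:thermo_vs_flowdpo_decomposition} plus $r_\tau(\theta)$ as defined in the statement. This step is pure algebra, and it holds for any real values of $\Delta_\theta^w$, $\Delta_\theta^l$ and $\ell_\theta^w$.

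Next I would check that the first term is indeed a temperature-scaled FlowDPO loss. Using the identity $-\log\sigma(-z)=\log(1+e^{z})$ with $z=(\Delta_\theta^w-\Delta_\theta^l)/\tau$, the term equals $t^2\tau\cdot\bigl(-\log\sigma(-\beta(\Delta_\theta^w-\Delta_\theta^l))\bigr)$ with $\beta=1/\tau$. So it is the FlowDPO integrand with inverse temperature $1/\tau$, scaled by the positive factor $t^2\tau$.

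For nonnegativity, note that $B=e^{\ell_\theta^w/\tau}>0$. In fact $B\ge 1$, because $\ell_\theta^w\ge 0$ as a squared norm, although strict positivity of $B$ is already enough. Hence $(A+B)/A>1$, so $\log\bigl((A+B)/A\bigr)>0$. Since $t\in(0,1]$ and $\tau>0$, the prefactor $t^2\tau$ is positive, which gives $r_\tau(\theta)\ge 0$ (indeed $>0$). I could also add a quantitative bound as a remark: using $\ell_\theta^w\ge0$, $r_\tau(\theta)\ge t^2\tau\log(1+1/A)$. This would show that the anchor is never vacuous.

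There is no real obstacle here. The only points needing care are bookkeeping: the sign convention in the FlowDPO identification, namely that $-\log\sigma(-x)$ equals softplus of $x$, and the fact that $A>0$ so the division is legitimate.
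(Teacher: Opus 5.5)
Your proposal is correct and matches the paper's proof: you split $\log(A+B)=\log A+\log\bigl((A+B)/A\bigr)$ and multiply by $t^2\tau$, exactly as the paper does. You also spell out two points the paper leaves implicit here. The first is nonnegativity of $r_\tau(\theta)$, which holds because $e^{\ell_\theta^w/\tau}>0$; the paper states this argument only for the weighted variant. The second is the identification of the first term with $-\log\sigma(-\beta(\Delta_\theta^w-\Delta_\theta^l))$ at $\beta=1/\tau$.
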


The proof defers to \appref{app:proof_thm_thermo_vs_flowdpo}.
\thmref{thm:thermo_vs_flowdpo} shows algebraically that \method retains the pairwise FlowDPO term and adds the nonnegative winner-side penalty $r_\tau(\theta)$.
\looseness=-1

\subsection{\method Introduces Geometric Suppression of Manifold Drift}
\label{subsec:thermo_suppress_drift}
Beyond functional decomposition, \method provides direct geometric control over the terminal manifold departure by bounding the winner-side deviation of the reconstructed preferred sample.
This theoretical guarantee, formalized in \thmref{thm:thermo_exp_drift_suppression}, establishes the mathematical foundation for the anchored mass-redirection behavior (illustrated by the red line in \figref{fig:thermodpo_intuition}).
Specifically, under exact pretraining where $\mathcal M_0 = \mathcal M_{\mathrm{data}}$, this bound ensures that aligned mass remains anchored to the generative support even as the preference signal redirects it.

\begin{theorem}[Manifold drift control of \method]
    \label{thm:thermo_exp_drift_suppression}
    Let $\mathcal M_{\mathrm{data}}:=\operatorname{supp}(p_0)$.
    For the integrand in \eqref{eq:g_tau},
    \begin{equation}
        g_\tau(\theta)
        \ge
        \operatorname{dist}\!\bigl(\tilde{\mathbf x}_0^w,\mathcal M_{\mathrm{data}}\bigr)^2 \,.
        \label{eq:thermo_loss_lower_bound_distance}
    \end{equation}
    where $\tilde{\mathbf x}_0^w = \mathbf x_t^w - t \cdot v_\theta(\mathbf x_t^w, t)$ and $\operatorname{dist}(\tilde{\mathbf x}_0^w, \mathcal M_{\mathrm{data}}) = \inf_{y\in\mathcal M_{\mathrm{data}}} \|\tilde{\mathbf x}_0^w - y\|$.
\end{theorem}

See \appref{app:proof_thermo_exp_drift_suppression} for the proof.
Under exact FM pretraining, the same pointwise bound is relative to the pretrained terminal manifold $\mathcal M_0$. 

\subsection{From Theory to Practice: The Reweighted Variant of \method}
\label{subsec:thermo_practical_implementation}

While the theoretical objective \eqref{eq:thermodpo_loss} provides strong guarantees, its global $t^2$ coefficient causes the manifold anchor to vanish precisely near the terminal endpoint ($t=0$), where geometric preservation is most critical.
To resolve this weighting deficiency, we introduce \textbf{\methodweightedname}, which removes the global $t^2$ factor and instead activates the manifold anchor dynamically through a $(1-t)^2$ term:
\begin{equation}
    \label{eq:thermodpo_weighted_loss}
    \mathcal{L}_{\text{\methodweighted}}(\theta)
    =
    \mathbb E\left[
        \tau(t) \cdot \log\!\left(
        1+
        \exp\!\left(\frac{\Delta_\theta^w-\Delta_\theta^l}{\tau(t)}\right)
        +
        \exp\!\left(\frac{(1-t)^2 \cdot \ell_\theta^w}{\tau(t)}\right)
        \right)
        \right] \,.
\end{equation}
We evaluate \methodweighted as the definitive practical realization of the core objective across all experiments.
By substituting the vanishing $t^2$ weight with an endpoint-focused activation, this variant maintains a robust manifold anchor while inheriting all analytical guarantees of the core \method objective
(see \appref{app:thermo_weighted_analysis} for detailed discussion).
\looseness=-1

\section{Experiments}
\label{sec:experiments}
We evaluate \methodweighted on both synthetic and real-world image benchmarks.
Our synthetic experiments validate the core intuition (\figref{fig:thermodpo_intuition}) and analyze the trade-offs between preference alignment and manifold preservation.
On real-world tasks, we assess the performance of \methodweighted against RFT and FlowDPO variants using comprehensive automated and human metrics.

\subsection{Toy Experiments}
\label{subsec:toy_experiments}

\begin{table*}[t]
    \centering
    \small
    \setlength{\tabcolsep}{4.5pt}
    \renewcommand{\arraystretch}{1.04}
    \caption{\small
        \textbf{Toy results comparing RFT, FlowDPO variants, Diffusion-SDPO, Linear-DPO, $\chi$PO, and \methodweighted}.
        All methods start from the same pretrained flow-matching reference model and are fine-tuned for 10K steps.
        \emph{Win} and \emph{Loss} measure occupancy of the preferred and dispreferred regions regardless of manifold validity; \emph{StrictWin} requires samples to be both preferred and on-manifold; \emph{OnManifold} measures geometric validity; \emph{WinQuality} is the fraction of preferred samples that remain on-manifold; and $\mathrm{StrictScore}:=0.5\cdot\mathrm{StrictWin}+0.5\cdot\mathrm{OnManifold}$ summarizes the alignment-preservation trade-off.
        Best results are highlighted in \textbf{bold}; second-best results are \underline{underlined}.
        \looseness=-1
    }
    \label{tab:toy_main_results}
    \resizebox{\linewidth}{!}{
        \begin{tabular}{lcccccc}
            \toprule
            Method                                               & Win (\%) $\uparrow$ & Loss (\%) $\downarrow$ & StrictWin (\%) $\uparrow$ & OnManifold (\%) $\uparrow$ & WinQuality $\uparrow$ & StrictScore $\uparrow$ \\
            \midrule
            RFT                                                  & \textbf{93.6}       & \underline{0.3}        & 83.4                      & 88.3                       & 0.891                 & 0.858                  \\
            \midrule
            FlowDPO ($\beta=1$)                                  & 92                  & 0.8                    & 0                         & 0                          & 0                     & 0                      \\
            FlowDPO ($\beta=10$)                                 & 76.7                & 2.7                    & 3.2                       & 3.2                        & 0.042                 & 0.033                  \\
            FlowDPO ($\beta=100$)                                & 43.4                & 17.5                   & 37.7                      & 88.2                       & 0.868                 & 0.629                  \\
            FlowDPO ($\beta=500$)                                & 41.8                & 18.2                   & 35.6                      & 88.3                       & 0.852                 & 0.620                  \\
            \addlinespace[1pt]
            FlowDPO ($\beta=1$) + RFT                            & 91.7                & \underline{0.3}        & 82.9                      & 88.5                       & 0.903                 & 0.857                  \\
            FlowDPO ($\beta=10$) + RFT                           & 87.1                & 0.9                    & 77.7                      & 85.6                       & 0.893                 & 0.817                  \\
            FlowDPO ($\beta=100$) + RFT                          & 53.9                & 11.7                   & 47.6                      & 87.5                       & 0.884                 & 0.676                  \\
            FlowDPO ($\beta=500$) + RFT                          & 47.9                & 14.5                   & 41.5                      & 87.9                       & 0.866                 & 0.647                  \\
            \addlinespace[1pt]
            FlowDPO ($\beta=1$) + KL                             & 51.7                & 11.7                   & 45.7                      & 87.4                       & 0.885                 & 0.666                  \\
            FlowDPO ($\beta=10$) + KL                            & 43.8                & 18                     & 37.8                      & 84.3                       & 0.864                 & 0.611                  \\
            FlowDPO ($\beta=100$) + KL                           & 41.9                & 18.6                   & 36.4                      & 89.1                       & 0.868                 & 0.627                  \\
            FlowDPO ($\beta=500$) + KL                           & 40.7                & 19.2                   & 34.7                      & 88.4                       & 0.854                 & 0.616                  \\
            \addlinespace[1pt]
            Diffusion-SDPO ($\beta=1$, $\mu=0.99$)              & 66.1                & 5.5                    & 21.5                      & 21.6                       & 0.325                 & 0.215                  \\
            Diffusion-SDPO ($\beta=10$, $\mu=0.99$)             & 47.8                & 13.4                   & 30.7                      & 53.6                       & 0.642                 & 0.421                  \\
            Diffusion-SDPO ($\beta=100$, $\mu=0.99$)            & 42.2                & 17.5                   & 37.5                      & 91.3                       & 0.888                 & 0.644                  \\
            Diffusion-SDPO ($\beta=500$, $\mu=0.99$)            & 40.9                & 18.2                   & 36.3                      & 91.3                       & 0.887                 & 0.638                  \\
            \addlinespace[1pt]
            Linear-DPO ($\beta=1$)                               & 78.7                & \textbf{0.2}           & 26.9                      & 41.4                       & 0.341                 & 0.341                  \\
            Linear-DPO ($\beta=10$)                              & 85.8                & 0.5                    & 74.3                      & 78.0                       & 0.866                 & 0.761                  \\
            Linear-DPO ($\beta=100$)                             & 89.3                & 0.7                    & 56.9                      & 56.9                       & 0.638                 & 0.569                  \\
            Linear-DPO ($\beta=500$)                             & 89.6                & 0.8                    & 44.6                      & 44.6                       & 0.498                 & 0.446                  \\
            \addlinespace[1pt]
            $\chi$PO ($\beta=1$)                                 & \underline{92.7}    & 1                      & 0                         & 0                          & 0                     & 0                      \\
            $\chi$PO ($\beta=10$)                                & 80.3                & 3.3                    & 0.8                       & 0.8                        & 0.01                  & 0.008                  \\
            $\chi$PO ($\beta=100$)                               & 43                  & 17.8                   & 37.2                      & 88.4                       & 0.865                 & 0.628                  \\
            $\chi$PO ($\beta=500$)                               & 41.1                & 17.5                   & 35                        & 88.2                       & 0.851                 & 0.616                  \\
            \midrule
            \rowcolor{gray!15}\multicolumn{7}{l}{\textit{\textup{\methodweightedname} with $\tau(t)=\frac{t}{\beta}$}}                                                                                                    \\
            \rowcolor{gray!15}\methodweighted ($t$, $\beta=1$)   & \underline{92.7}    & 0.5                    & \textbf{87.6}             & \textbf{92.2}              & \textbf{0.945}        & \textbf{0.899}         \\
            \rowcolor{gray!15}\methodweighted ($t$, $\beta=10$)  & 92.0                & 0.4                    & 86.1                      & 91.5                       & 0.935                 & 0.888                  \\
            \rowcolor{gray!15}\methodweighted ($t$, $\beta=100$) & 91.2                & 0.4                    & 85.9                      & \underline{92.1}           & \underline{0.941}     & 0.89                   \\
            \rowcolor{gray!15}\methodweighted ($t$, $\beta=500$) & 91.9                & 0.4                    & \underline{86.2}          & \textbf{92.2}              & 0.939                 & \underline{0.892}      \\
            \bottomrule
        \end{tabular}
    }
\end{table*}

\paragraph{Experimental setup.}
We use the analytic surface $z=f(x,y)$ in $\mathbb R^3$ shown in \figref{fig:four_images}: the two bumps near $x=\pm3$ are preferred, the central dip near $x=0$ is dispreferred, and the remaining surface is neutral. All methods start from the same three-layer flow-matching MLP, use the same winner--loser pairs and $10{,}000$-step fine-tuning budget, and are evaluated on $10{,}000$ generated samples. A point is on-manifold when it lies in the surface domain and satisfies $|z-f(x,y)|\le0.15$. We compare RFT~\cite{xiong2025minimalist,chen2026nft}, FlowDPO~\cite{liu2025improving}, $\chi$PO~\cite{huang2024correcting}, FlowDPO+KL, FlowDPO+RFT, Diffusion-SDPO~\citep{fu2025diffusionsdpo}, Linear-DPO~\citep{li2026lineardpo}, and \methodweighted; full architecture and optimizer details are in \appref{app:toy_setup}.

\paragraph{Results.}

\tabref{tab:toy_main_results} shows a sharp trade-off between preference optimization and manifold preservation.
\begin{itemize}[leftmargin=*]
    \item \textbf{Vanilla FlowDPO and $\chi$PO fails to preserve terminal manifold}: Vanilla FlowDPO and $\chi$PO can achieve high Winner Ratio, but often does so by leaving the manifold: for example, at $\beta=1$ it reaches 92\% Win while OnManifold drops to 0\%, causing StrictWin and WinQuality to collapse. This is exactly the failure mode we call manifold drift.
    \item \textbf{Adding explicit regularization term helps to preserve terminal manifold, while \methodweighted performs best}: FlowDPO+RFT and \methodweighted both preserve high OnManifold scores while recovering strong preference performance. RFT is a strong baseline, but among pairwise preference objectives \methodweighted achieves the best balance: with $\tau(t)=t$, it attains 92.7\% Win, 87.6\% StrictWin, and the best StrictScore of 0.899. KL regularization preserves the manifold more than vanilla FlowDPO, but improves preference less.
\end{itemize}

\newsavebox{\toyTauAblationTableBox}
\begin{wraptable}{r}{0.45\columnwidth}
    \vspace{-0.7\baselineskip}
    \centering
    \sbox{\toyTauAblationTableBox}{%
        \scriptsize
        \setlength{\tabcolsep}{2pt}%
        \renewcommand{\arraystretch}{1.04}%
        \begin{tabular}{@{}lccc@{}}
            \toprule
            $\beta\tau(t)$ & Win $\uparrow$ & OnM. $\uparrow$ & Strict $\uparrow$ \\
            \midrule
            $t$ & 92.7 & \textbf{92.2} & 0.899 \\
            $\frac{t^2}{t^2+(1-t)^2}$ & 92.5 & 91.9 & 0.895 \\
            $t^{10}$ & 91.8 & \textbf{92.2} & 0.893 \\
            $t^{0.1}$ & \textbf{93.1} & \textbf{92.2} & \textbf{0.900} \\
            \bottomrule
        \end{tabular}%
    }
    \captionsetup{font=footnotesize,skip=3pt,width=0.9\linewidth}
    \caption{
        \textbf{Temperature ablation} at $\beta=1$.
        The full sweep is in \tabref{tab:thermo_dpo_results}.}
    \label{tab:toy_tau_ablation_main}
    \usebox{\toyTauAblationTableBox}
    \vspace{-0.5\baselineskip}
\end{wraptable}

\paragraph{Sensitivity analysis.}
Across the four schedules in \tabref{tab:toy_tau_ablation_main}, StrictScore ranges from $0.893$ to $0.900$ at $\beta=1$. For the linear schedule in \tabref{tab:toy_main_results}, the score ranges from $0.888$ to $0.899$ over $\beta\in\{1,10,100,500\}$. These results support stability only within this toy grid; the complete sweep, including the more sensitive unweighted objective, appears in \appref{app:toy_exp_results_thermo_dpo_variants}.

\subsection{Real-Image Generation Experiments}
\label{subsec:real_image}

In this section, we aim to test the practical consequence on real-image generation: can preference optimization improve reward without sacrificing prompt fidelity, perceptual quality, or agreement with human judgment?

\paragraph{Experimental setup.}
All real-image runs start from the Stable Diffusion 3.5-M checkpoint and use the same OCR preference-pair dataset. We evaluate the optimized OCR metric together with GenEval~\cite{ghosh2023geneval}, HPSv3.0~\cite{ma2025hpsv3widespectrumhumanpreference}, and UniReward~\cite{unifiedreward}; full training and sampling details are in \appref{app:real_image_setup}.

\paragraph{Reward-model evaluation.}
We will report automatic scores from the training reward and held-out evaluators. The key question is whether a method improves OCR while retaining gains on metrics it was not directly optimized for. Improvements restricted to the optimized reward are more suggestive of reward hacking.

\begin{table*}[t]
    \centering
    \small
    \setlength{\tabcolsep}{4.5pt}
    \renewcommand{\arraystretch}{1.12}
    \caption{
        \textbf{Quantitative comparison on \texttt{SD3.5-M}.}
        All compared methods are trained using the OCR preference pair dataset, while evaluation is conducted across GenEval, OCR, HPSv3.0, and UniRwd.
        For each metric, we report the absolute score at both CFG settings, and report the relative change (\%) with respect to the \texttt{SD3.5-M} baseline at the CFG$=4.5$.
        The overall score is defined as the macro-average relative gain across these four metrics.
        Best results are highlighted in \textbf{bold} and the second-best results are \underline{underlined}.}
    \label{tab:main_results_relative_cfg45}
    \resizebox{1\textwidth}{!}{
        \begin{tabular}{l c c >{\columncolor{gray!15}}c c c c}
            \toprule
            \textbf{Model} & \textbf{CFG}
                           & \textbf{GenEval~\cite{ghosh2023geneval} $\uparrow$}
                           & \textbf{OCR $\uparrow$}
                           & \textbf{HPSv3.0~\cite{ma2025hpsv3widespectrumhumanpreference} $\uparrow$}
                           & \textbf{UniRwd~\cite{unifiedreward} $\uparrow$}
                           & \textbf{Overall Gain $\uparrow$}                                          \\
            \midrule

            \multirow{2}{*}{SD3.5-M (Baseline)}
                           & $2.0$
                           & \makecell[c]{0.53}
                           & \makecell[c]{$\ \ $0.36$\ \ \ $}
                           & \makecell[c]{5.38}
                           & \makecell[c]{2.78}
                           & \makecell[c]{--}                                                          \\
                           & $4.5$
                           & \makecell[c]{0.63}
                           & \makecell[c]{$\ \ $0.59$\ \ \ $}
                           & \makecell[c]{8.68}
                           & \makecell[c]{3.03}
                           & \makecell[c]{--}                                                          \\
            \midrule

            \multirow{2}{*}{RFT~\cite{xiong2025minimalist,chen2026nft}}
                           & $2.0$
                           & \makecell[c]{\underline{0.68} \\ {\scriptsize \posc{(+7.9\%)}}}
                           & \makecell[c]{0.67 \\ {\scriptsize \posc{(+13.6\%)}}}
                           & \makecell[c]{8.98 \\ {\scriptsize \posc{(+3.5\%)}}}
                           & \makecell[c]{3.11 \\ {\scriptsize \posc{(+2.6\%)}}}
                           & \makecell[c]{\posc{+6.9\%}}                                               \\
                           & $4.5$
                           & \makecell[c]{\textbf{0.70} \\ {\scriptsize \posc{(+11.1\%)}}}
                           & \makecell[c]{0.74 \\ {\scriptsize \posc{(+25.4\%)}}}
                           & \makecell[c]{\textbf{9.59} \\ {\scriptsize \posc{(+10.5\%)}}}
                           & \makecell[c]{\textbf{3.19} \\ {\scriptsize \posc{(+5.3\%)}}}
                           & \makecell[c]{\posc{+13.1\%}}                                              \\
            \cmidrule(l){2-7}

            \multirow{2}{*}{FlowDPO~\cite{liu2025improving} ($\beta=100$)}
                           & $2.0$
                           & \makecell[c]{0.49 \\ {\scriptsize \negc{(-22.2\%)}}}
                           & \makecell[c]{\textbf{0.93} \\ {\scriptsize \posc{(+57.6\%)}}}
                           & \makecell[c]{7.18 \\ {\scriptsize \negc{(-17.3\%)}}}
                           & \makecell[c]{2.97 \\ {\scriptsize \negc{(-2.0\%)}}}
                           & \makecell[c]{\posc{+4.0\%}}                                               \\
                           & $4.5$
                           & \makecell[c]{0.46 \\ {\scriptsize \negc{(-27.0\%)}}}
                           & \makecell[c]{0.70 \\ {\scriptsize \posc{(+18.6\%)}}}
                           & \makecell[c]{6.83 \\ {\scriptsize \negc{(-21.3\%)}}}
                           & \makecell[c]{2.89 \\ {\scriptsize \negc{(-4.6\%)}}}
                           & \makecell[c]{\negc{-8.6\%}}                                               \\
            \cmidrule(l){2-7}

            \multirow{2}{*}{FlowDPO + RFT ($\beta=100$)}
                           & $2.0$
                           & \makecell[c]{0.58 \\ {\scriptsize \negc{(-7.9\%)}}}
                           & \makecell[c]{\underline{0.92} \\ {\scriptsize \posc{(+55.9\%)}}}
                           & \makecell[c]{7.68 \\ {\scriptsize \negc{(-11.5\%)}}}
                           & \makecell[c]{3.05 \\ {\scriptsize \posc{(+0.7\%)}}}
                           & \makecell[c]{\posc{+9.3\%}}                                               \\
                           & $4.5$
                           & \makecell[c]{0.64 \\ {\scriptsize \posc{(+1.6\%)}}}
                           & \makecell[c]{0.88 \\ {\scriptsize \posc{(+49.2\%)}}}
                           & \makecell[c]{8.85 \\ {\scriptsize \posc{(+2.0\%)}}}
                           & \makecell[c]{3.10 \\ {\scriptsize \posc{(+2.3\%)}}}
                           & \makecell[c]{\posc{\underline{+13.8\%}}}                                  \\
            \cmidrule(l){2-7}

            \multirow{2}{*}{FlowDPO + KL ($\beta=100$)}
                           & $2.0$
                           & \makecell[c]{0.56 \\ {\scriptsize \negc{(-11.1\%)}}}
                           & \makecell[c]{\underline{0.92} \\ {\scriptsize \posc{(+55.9\%)}}}
                           & \makecell[c]{6.58 \\ {\scriptsize \negc{(-24.2\%)}}}
                           & \makecell[c]{2.99 \\ {\scriptsize \negc{(-1.3\%)}}}
                           & \makecell[c]{\posc{+4.8\%}}                                               \\
                           & $4.5$
                           & \makecell[c]{0.62 \\ {\scriptsize \negc{(-1.6\%)}}}
                           & \makecell[c]{\underline{0.92} \\ {\scriptsize \posc{(+55.9\%)}}}
                           & \makecell[c]{8.58 \\ {\scriptsize \negc{(-1.2\%)}}}
                           & \makecell[c]{3.09 \\ {\scriptsize \posc{(+2.0\%)}}}
                           & \makecell[c]{\posc{\underline{+13.8\%}}}                                  \\
            \cmidrule(l){2-7}

            \multirow{2}{*}{\methodweighted ($t$, $\beta=100$)}
                           & $2.0$
                           & \makecell[c]{0.59 \\ {\scriptsize \negc{(-6.3\%)}}}
                           & \makecell[c]{0.84 \\ {\scriptsize \posc{(+42.4\%)}}}
                           & \makecell[c]{7.29 \\ {\scriptsize \negc{(-16.0\%)}}}
                           & \makecell[c]{2.99 \\ {\scriptsize \negc{(-1.3\%)}}}
                           & \makecell[c]{\posc{+4.7\%}}                                               \\
                           & $4.5$
                           & \makecell[c]{0.65 \\ {\scriptsize \posc{(+3.2\%)}}}
                           & \makecell[c]{0.87 \\ {\scriptsize \posc{(+47.5\%)}}}
                           & \makecell[c]{\underline{9.46} \\ {\scriptsize \posc{(+9.0\%)}}}
                           & \makecell[c]{\underline{3.16} \\ {\scriptsize \posc{(+4.3\%)}}}
                           & \makecell[c]{\textbf{\posc{+16.0\%}}}                                     \\
            \bottomrule
        \end{tabular}
    }
\end{table*}

Quantitavely, we report experimental results in \tabref{tab:main_results_relative_cfg45}, and \textbf{\methodweightedname achieves the strongest overall gain} within the FlowDPO family and improves Geneval, OCR, HPSv3.0, and UniReward over the pretrained baseline. RFT remains a strong baseline on several held-out metrics.

Qualitatively, we report the generated images of all the compared methods in \appref{app:real_image_qualitative}. The qualitative results show that \textbf{\methodweightedname and RFT retain visual quality closer to the pretrained model, whereas FlowDPO, FlowDPO+RFT, and FlowDPO+KL exhibit noticeable quality degradation.} Together, these results support our claim that \methodweighted improves target metrics without visible quality degradation.

\paragraph{Human evaluation.}
Reward models cannot fully determine whether improved scores correspond to genuinely better images, so we also run pairwise human evaluation on 30 prompts following \appref{app:real_image_human}. \methodweighted remains competitive on text accuracy and is generally preferred on visual quality. The result of human evaluation is shown in \figref{fig:human_evaluation}.

\begin{figure*}[h]
    \centering
    \includegraphics[width=0.9\textwidth]{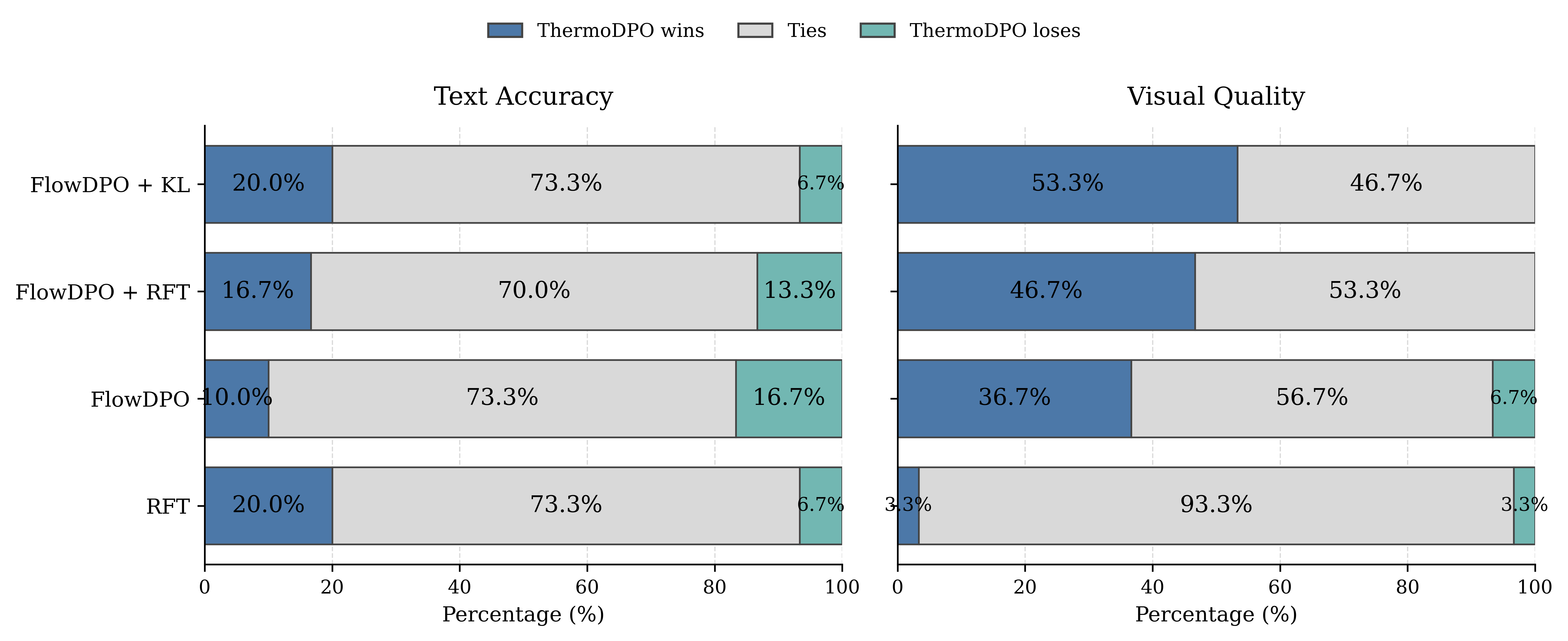}
    \caption{
        \textbf{Pairwise human evaluation of \methodweighted against different baselines} on text accuracy and visual quality over 30 prompts.
        Each stacked bar reports the percentage of prompts for which \methodweighted is preferred, tied, or dispreferred relative to the corresponding baseline.
        \methodweighted shows consistently stronger performance on visual quality while remaining competitive on text accuracy.
    }
    \label{fig:human_evaluation}
\end{figure*}

\section{Limitations}
\label{sec:limitations}

This paper studies flow-based preference optimization in the offline setting. All methods are trained from a fixed winner--loser dataset, so we do not address the additional exploration, reward-updating, and stability issues that appear in online RLHF. We also do not test whether the same idea transfers cleanly to other continuous-time or diffusion-based alignment algorithms. In addition, although \method is motivated by thermodynamic energy functions and Boltzmann distributions, our analysis only establishes its optimization and manifold-control properties rather than a fully principled physical derivation; a better physically grounded objective may therefore exist. Besides, extending manifold-drift control to online RL algorithms for flow preference optimization is a natural next step.
\looseness=-1

\section{Conclusion}

This paper studies preference optimization for continuous-time generative models through the lens of \emph{manifold drift}. We argue that, in flow-based models, preference optimization does not only change which outputs are favored, it also changes the transport dynamics that produce them. This creates a failure mode in which reward-based metrics improve while terminal samples move away from regions supported by the pretrained model. To make this issue explicit, we formalize manifold drift, show that optimal Flow Matching recovers the terminal data distribution, and give a first-order result showing that FlowDPO can admit off-manifold updates.

Motivated by this analysis, we introduce \method, a temperature-controlled objective that adds a winner-side anchor to pairwise preference optimization. Our theory shows that the \method objective reduces to RFT in the low-temperature regime under a mild condition, decomposes into a temperature-scaled FlowDPO term plus a nonnegative anchoring term, and upper bounds a reconstruction-based manifold-distance surrogate on preferred samples. Because the \method objective weakens the anchor near the terminal endpoint in practice, we evaluate a reweighted implementation, \methodweighted, in all experiments.

Empirically, the toy experiment shows a clear trade-off between preference optimization and manifold preservation, and demonstrates that \methodweighted achieves a substantially better balance than vanilla FlowDPO and its regularized variants. On real-image generation, \methodweighted improves OCR-oriented alignment while remaining competitive on held-out automatic metrics and human evaluation. Taken together, these results suggest that \methodweighted is a practical and effective method for preference optimization in continuous-time generative models.


\bibliography{resources/reference}

\begin{thebibliography}{72}
\providecommand{\natexlab}[1]{#1}
\providecommand{\url}[1]{\texttt{#1}}
\expandafter\ifx\csname urlstyle\endcsname\relax
  \providecommand{\doi}[1]{doi: #1}\else
  \providecommand{\doi}{doi: \begingroup \urlstyle{rm}\Url}\fi

\bibitem[Aggarwal et~al.(2025)Aggarwal, Chen, Boffi, and Koes]{aggarwal2025boltznce}
Rishal Aggarwal, Jacky Chen, Nicholas~M Boffi, and David~Ryan Koes.
\newblock Boltznce: Learning likelihoods for boltzmann generation with stochastic interpolants and noise contrastive estimation.
\newblock \emph{arXiv preprint arXiv:2507.00846}, 2025.

\bibitem[Albergo and Vanden-Eijnden(2022)]{albergo2022building}
Michael~S Albergo and Eric Vanden-Eijnden.
\newblock Building normalizing flows with stochastic interpolants.
\newblock \emph{arXiv preprint arXiv:2209.15571}, 2022.

\bibitem[Albergo et~al.(2025)Albergo, Boffi, and Vanden-Eijnden]{albergo2025stochastic}
Michael~S. Albergo, Nicholas~M. Boffi, and Eric Vanden-Eijnden.
\newblock Stochastic interpolants: A unifying framework for flows and diffusions.
\newblock \emph{Journal of Machine Learning Research}, 26\penalty0 (209):\penalty0 1--80, 2025.
\newblock URL \url{https://arxiv.org/abs/2303.08797}.

\bibitem[Black et~al.(2024)Black, Janner, Du, Kostrikov, and Levine]{black2024ddpo}
Kevin Black, Michael Janner, Yilun Du, Ilya Kostrikov, and Sergey Levine.
\newblock Training diffusion models with reinforcement learning.
\newblock In \emph{The Twelfth International Conference on Learning Representations}, 2024.
\newblock URL \url{https://arxiv.org/abs/2305.13301}.

\bibitem[Chen et~al.(2026)Chen, Zheng, Zhang, Cui, Cui, Ye, Lin, Liu, Zhu, and Wang]{chen2026nft}
Huayu Chen, Kaiwen Zheng, Qinsheng Zhang, Ganqu Cui, Yin Cui, Haotian Ye, Tsung-Yi Lin, Ming-Yu Liu, Jun Zhu, and Haoxiang Wang.
\newblock Nft: Bridging supervised learning and reinforcement learning in math reasoning.
\newblock In \emph{International Conference on Learning Representations}, volume 2026, pages 124025--124042, 2026.

\bibitem[Chen et~al.(2018)Chen, Rubanova, Bettencourt, and Duvenaud]{chen2018neural}
Ricky~TQ Chen, Yulia Rubanova, Jesse Bettencourt, and David~K Duvenaud.
\newblock Neural ordinary differential equations.
\newblock \emph{Advances in neural information processing systems}, 31, 2018.

\bibitem[Clark et~al.(2024)Clark, Vicol, Swersky, and Fleet]{clark2024draft}
Kevin Clark, Paul Vicol, Kevin Swersky, and David~J. Fleet.
\newblock Directly fine-tuning diffusion models on differentiable rewards.
\newblock In \emph{The Twelfth International Conference on Learning Representations}, 2024.
\newblock URL \url{https://arxiv.org/abs/2309.17400}.

\bibitem[Ding et~al.(2026)Ding, Chen, Lyu, Yuan, Zhu, Tian, Zhu, Wang, Deng, Mi, Shang, and Lin]{ding2026rethinking}
Bowen Ding, Yuhan Chen, Jiayang Lyu, Jiyao Yuan, Qi~Zhu, Shuangshuang Tian, Dantong Zhu, Futing Wang, Heyuan Deng, Fei Mi, Lifeng Shang, and Tao Lin.
\newblock Rethinking expert trajectory utilization in {LLM} post-training for mathematical reasoning.
\newblock In \emph{Proceedings of the 64th Annual Meeting of the Association for Computational Linguistics (Volume 1: Long Papers)}, pages 33081--33106. Association for Computational Linguistics, 2026.
\newblock \doi{10.18653/v1/2026.acl-long.1528}.
\newblock URL \url{https://aclanthology.org/2026.acl-long.1528/}.

\bibitem[Domingo-Enrich et~al.()Domingo-Enrich, Drozdzal, Karrer, and Chen]{domingoadjoint}
Carles Domingo-Enrich, Michal Drozdzal, Brian Karrer, and Ricky~TQ Chen.
\newblock Adjoint matching: Fine-tuning flow and diffusion generative models with memoryless stochastic optimal control.
\newblock In \emph{The Thirteenth International Conference on Learning Representations}.

\bibitem[Dong et~al.(2023)Dong, Xiong, Goyal, Zhang, Chow, Pan, Diao, Zhang, Shum, and Zhang]{dong2023raft}
Hanze Dong, Wei Xiong, Deepanshu Goyal, Yihan Zhang, Winnie Chow, Rui Pan, Shizhe Diao, Jipeng Zhang, Kashun Shum, and Tong Zhang.
\newblock {RAFT}: Reward ranked finetuning for generative foundation model alignment.
\newblock \emph{Transactions on Machine Learning Research}, 2023.
\newblock URL \url{https://arxiv.org/abs/2304.06767}.

\bibitem[Esser et~al.(2024)Esser, Kulal, Blattmann, Entezari, M{\"u}ller, Saini, Levi, Lorenz, Sauer, Boesel, et~al.]{esser2024scaling}
Patrick Esser, Sumith Kulal, Andreas Blattmann, Rahim Entezari, Jonas M{\"u}ller, Harry Saini, Yam Levi, Dominik Lorenz, Axel Sauer, Frederic Boesel, et~al.
\newblock Scaling rectified flow transformers for high-resolution image synthesis.
\newblock In \emph{Forty-first international conference on machine learning}, 2024.

\bibitem[Fan et~al.(2023)Fan, Watkins, Du, Liu, Ryu, Boutilier, Abbeel, Ghavamzadeh, Lee, and Lee]{fan2023dpok}
Ying Fan, Olivia Watkins, Yuqing Du, Hao Liu, Moonkyung Ryu, Craig Boutilier, Pieter Abbeel, Mohammad Ghavamzadeh, Kangwook Lee, and Kimin Lee.
\newblock {DPOK}: Reinforcement learning for fine-tuning text-to-image diffusion models.
\newblock In \emph{Advances in Neural Information Processing Systems}, volume~36, 2023.
\newblock URL \url{https://arxiv.org/abs/2305.16381}.

\bibitem[Farghly et~al.(2025)Farghly, Potaptchik, Howard, Deligiannidis, and Pidstrigach]{farghly2025diffusion}
Tyler Farghly, Peter Potaptchik, Samuel Howard, George Deligiannidis, and Jakiw Pidstrigach.
\newblock Diffusion models and the manifold hypothesis: Log-domain smoothing is geometry adaptive.
\newblock \emph{arXiv preprint arXiv:2510.02305}, 2025.

\bibitem[Fefferman et~al.(2016)Fefferman, Mitter, and Narayanan]{fefferman2016testing}
Charles Fefferman, Sanjoy Mitter, and Hariharan Narayanan.
\newblock Testing the manifold hypothesis.
\newblock \emph{Journal of the American Mathematical Society}, 29\penalty0 (4):\penalty0 983--1049, 2016.

\bibitem[Fu et~al.(2025)Fu, Wang, Cui, Chen, Xu, Luo, and Zhang]{fu2025diffusionsdpo}
Minghao Fu, Guo-Hua Wang, Tianyu Cui, Qing-Guo Chen, Zhao Xu, Weihua Luo, and Kaifu Zhang.
\newblock Diffusion-{SDPO}: Safeguarded direct preference optimization for diffusion models.
\newblock \emph{arXiv preprint arXiv:2511.03317}, 2025.
\newblock URL \url{https://arxiv.org/abs/2511.03317}.

\bibitem[Furuta et~al.(2024)Furuta, Zen, Schuurmans, Faust, Matsuo, Liang, and Yang]{furuta2024dynamic}
Hiroki Furuta, Heiga Zen, Dale Schuurmans, Aleksandra Faust, Yutaka Matsuo, Percy Liang, and Sherry Yang.
\newblock Improving dynamic object interactions in text-to-video generation with {AI} feedback.
\newblock \emph{arXiv preprint arXiv:2412.02617}, 2024.
\newblock \doi{10.48550/arXiv.2412.02617}.
\newblock URL \url{https://arxiv.org/abs/2412.02617}.

\bibitem[Ghosh et~al.(2023)Ghosh, Hajishirzi, and Schmidt]{ghosh2023geneval}
Dhruba Ghosh, Hannaneh Hajishirzi, and Ludwig Schmidt.
\newblock Geneval: An object-focused framework for evaluating text-to-image alignment.
\newblock \emph{Advances in Neural Information Processing Systems}, 36:\penalty0 52132--52152, 2023.

\bibitem[Guo et~al.(2025)Guo, Cui, Bo, and Huang]{guo2025shortft}
Xiefan Guo, Miaomiao Cui, Liefeng Bo, and Di~Huang.
\newblock {ShortFT}: Diffusion model alignment via shortcut-based fine-tuning.
\newblock In \emph{Proceedings of the IEEE/CVF International Conference on Computer Vision}, pages 678--687, 2025.
\newblock \doi{10.1109/ICCV51701.2025.00071}.
\newblock URL \url{https://arxiv.org/abs/2507.22604}.

\bibitem[He et~al.(2025{\natexlab{a}})He, Liang, Wang, Wan, Zhang, Gai, and Pan]{he2025evosearch}
Haoran He, Jiajun Liang, Xintao Wang, Pengfei Wan, Di~Zhang, Kun Gai, and Ling Pan.
\newblock Scaling image and video generation via test-time evolutionary search.
\newblock \emph{arXiv preprint arXiv:2505.17618}, 2025{\natexlab{a}}.
\newblock \doi{10.48550/arXiv.2505.17618}.
\newblock URL \url{https://arxiv.org/abs/2505.17618}.

\bibitem[He et~al.(2025{\natexlab{b}})He, Fu, Zhao, Li, Yang, Yin, Rao, and Zhang]{he2025tempflow}
Xiaoxuan He, Siming Fu, Yuke Zhao, Wanli Li, Jian Yang, Dacheng Yin, Fengyun Rao, and Bo~Zhang.
\newblock Tempflow-grpo: When timing matters for grpo in flow models.
\newblock \emph{arXiv preprint arXiv:2508.04324}, 2025{\natexlab{b}}.

\bibitem[He et~al.(2024)He, Jiang, Zhang, Ku, Soni, Siu, Chen, Chandra, Jiang, Arulraj, Wang, Do, Ni, Lyu, Narsupalli, Fan, Lyu, Lin, and Chen]{he2024videoscore}
Xuan He, Dongfu Jiang, Ge~Zhang, Max Ku, Achint Soni, Sherman Siu, Haonan Chen, Abhranil Chandra, Ziyan Jiang, Aaran Arulraj, Kai Wang, Quy~Duc Do, Yuansheng Ni, Bohan Lyu, Yaswanth Narsupalli, Rongqi Fan, Zhiheng Lyu, Yuchen Lin, and Wenhu Chen.
\newblock {VideoScore}: Building automatic metrics to simulate fine-grained human feedback for video generation.
\newblock \emph{arXiv preprint arXiv:2406.15252}, 2024.
\newblock \doi{10.48550/arXiv.2406.15252}.
\newblock URL \url{https://arxiv.org/abs/2406.15252}.

\bibitem[He et~al.(2023)He, Murata, Lai, Takida, Uesaka, Kim, Liao, Mitsufuji, Kolter, Salakhutdinov, and Ermon]{he2023mpgd}
Yutong He, Naoki Murata, Chieh-Hsin Lai, Yuhta Takida, Toshimitsu Uesaka, Dongjun Kim, Wei-Hsiang Liao, Yuki Mitsufuji, J.~Zico Kolter, Ruslan Salakhutdinov, and Stefano Ermon.
\newblock Manifold preserving guided diffusion.
\newblock \emph{arXiv preprint arXiv:2311.16424}, 2023.
\newblock URL \url{https://arxiv.org/abs/2311.16424}.

\bibitem[Ho et~al.(2020)Ho, Jain, and Abbeel]{ho2020denoising}
Jonathan Ho, Ajay Jain, and Pieter Abbeel.
\newblock Denoising diffusion probabilistic models.
\newblock \emph{Advances in neural information processing systems}, 33:\penalty0 6840--6851, 2020.

\bibitem[Hong et~al.(2026)Hong, Paul, Lee, Rasul, Thorne, and Jeong]{hong2026mapo}
Jiwoo Hong, Sayak Paul, Noah Lee, Kashif Rasul, James Thorne, and Jongheon Jeong.
\newblock Margin-aware preference optimization for aligning diffusion models without reference.
\newblock \emph{Proceedings of the AAAI Conference on Artificial Intelligence}, 40\penalty0 (6):\penalty0 4744--4752, 2026.
\newblock \doi{10.1609/aaai.v40i6.42476}.
\newblock URL \url{https://arxiv.org/abs/2406.06424}.

\bibitem[Huang et~al.(2024)Huang, Zhan, Xie, Lee, Sun, Krishnamurthy, and Foster]{huang2024correcting}
Audrey Huang, Wenhao Zhan, Tengyang Xie, Jason~D Lee, Wen Sun, Akshay Krishnamurthy, and Dylan~J Foster.
\newblock Correcting the mythos of kl-regularization: Direct alignment without overoptimization via chi-squared preference optimization.
\newblock \emph{arXiv preprint arXiv:2407.13399}, 2024.

\bibitem[Kang et~al.(2025)Kang, Lim, Baek, and Shim]{kang2025rethinkingdpo}
Junyong Kang, Seohyun Lim, Kyungjune Baek, and Hyunjung Shim.
\newblock Rethinking direct preference optimization in diffusion models.
\newblock \emph{arXiv preprint arXiv:2505.18736}, 2025.
\newblock URL \url{https://arxiv.org/abs/2505.18736}.

\bibitem[Labs(2025)]{flux-2-2025}
Black~Forest Labs.
\newblock {FLUX.2: Frontier Visual Intelligence}.
\newblock \url{https://bfl.ai/blog/flux-2}, 2025.

\bibitem[Lee et~al.(2023)Lee, Liu, Ryu, Watkins, Du, Boutilier, Abbeel, Ghavamzadeh, and Gu]{lee2023aligning}
Kimin Lee, Hao Liu, Moonkyung Ryu, Olivia Watkins, Yuqing Du, Craig Boutilier, Pieter Abbeel, Mohammad Ghavamzadeh, and Shixiang~Shane Gu.
\newblock Aligning text-to-image models using human feedback.
\newblock \emph{arXiv preprint arXiv:2302.12192}, 2023.
\newblock \doi{10.48550/arXiv.2302.12192}.
\newblock URL \url{https://arxiv.org/abs/2302.12192}.

\bibitem[Lei et~al.(2020)Lei, An, Guo, Su, Liu, Luo, Yau, and Gu]{lei2020geometric}
Na~Lei, Dongsheng An, Yang Guo, Kehua Su, Shixia Liu, Zhongxuan Luo, Shing-Tung Yau, and Xianfeng Gu.
\newblock A geometric understanding of deep learning.
\newblock \emph{Engineering}, 6\penalty0 (3):\penalty0 361--374, 2020.

\bibitem[Li et~al.(2025{\natexlab{a}})Li, Xu, Han, Dang, and Ermon]{li2025divergence}
Binxu Li, Minkai Xu, Jiaqi Han, Meihua Dang, and Stefano Ermon.
\newblock Divergence minimization preference optimization for diffusion model alignment.
\newblock \emph{arXiv preprint arXiv:2507.07510}, 2025{\natexlab{a}}.

\bibitem[Li et~al.(2024{\natexlab{a}})Li, Feng, Fu, Wang, Basu, Chen, and Wang]{li2024t2vturbo}
Jiachen Li, Weixi Feng, Tsu-Jui Fu, Xinyi Wang, Sugato Basu, Wenhu Chen, and William~Yang Wang.
\newblock {T2V-Turbo}: Breaking the quality bottleneck of video consistency model with mixed reward feedback.
\newblock \emph{arXiv preprint arXiv:2405.18750}, 2024{\natexlab{a}}.
\newblock \doi{10.48550/arXiv.2405.18750}.
\newblock URL \url{https://arxiv.org/abs/2405.18750}.

\bibitem[Li et~al.(2025{\natexlab{b}})Li, Long, Zheng, Gao, Piramuthu, Chen, and Wang]{li2025t2vturbov2}
Jiachen Li, Qian Long, Jian Zheng, Xiaofeng Gao, Robinson Piramuthu, Wenhu Chen, and William~Yang Wang.
\newblock {T2V-Turbo-v2}: Enhancing video model post-training through data, reward, and conditional guidance design.
\newblock In \emph{The Thirteenth International Conference on Learning Representations}, 2025{\natexlab{b}}.
\newblock URL \url{https://arxiv.org/abs/2410.05677}.

\bibitem[Li et~al.(2025{\natexlab{c}})Li, Cui, Huang, Ma, Fan, Yang, and Zhong]{li2025mixgrpo}
Junzhe Li, Yutao Cui, Tao Huang, Yinping Ma, Chun Fan, Miles Yang, and Zhao Zhong.
\newblock Mixgrpo: Unlocking flow-based grpo efficiency with mixed ode-sde.
\newblock \emph{arXiv preprint arXiv:2507.21802}, 2025{\natexlab{c}}.

\bibitem[Li et~al.(2026)Li, Xu, Tseng, Lu, Liu, and Lan]{li2026lineardpo}
Kesong Li, Yixuan Xu, Kuo-kun Tseng, Weiyi Lu, Kan Liu, and Tao Lan.
\newblock Linear-{DPO}: Linear direct preference optimization for diffusion and flow-matching generative models.
\newblock \emph{arXiv preprint arXiv:2605.21123}, 2026.
\newblock URL \url{https://arxiv.org/abs/2605.21123}.

\bibitem[Li et~al.(2024{\natexlab{b}})Li, Kallidromitis, Gokul, Kato, and Kozuka]{li2024diffusionkto}
Shufan Li, Konstantinos Kallidromitis, Akash Gokul, Yusuke Kato, and Kazuki Kozuka.
\newblock Aligning diffusion models by optimizing human utility.
\newblock In \emph{Advances in Neural Information Processing Systems}, volume~37, pages 24897--24925, 2024{\natexlab{b}}.
\newblock \doi{10.52202/079017-0785}.
\newblock URL \url{https://arxiv.org/abs/2404.04465}.

\bibitem[Liang et~al.(2025)Liang, Yuan, Gu, Chen, Hang, Cheng, Li, and Zheng]{liang2025spo}
Zhanhao Liang, Yuhui Yuan, Shuyang Gu, Bohan Chen, Tiankai Hang, Mingxi Cheng, Ji~Li, and Liang Zheng.
\newblock Aesthetic post-training diffusion models from generic preferences with step-by-step preference optimization.
\newblock In \emph{Proceedings of the IEEE/CVF Conference on Computer Vision and Pattern Recognition}, pages 13199--13208, 2025.
\newblock URL \url{https://arxiv.org/abs/2406.04314}.

\bibitem[Lipman et~al.(2022)Lipman, Chen, Ben-Hamu, Nickel, and Le]{lipman2022flow}
Yaron Lipman, Ricky~TQ Chen, Heli Ben-Hamu, Maximilian Nickel, and Matt Le.
\newblock Flow matching for generative modeling.
\newblock \emph{arXiv preprint arXiv:2210.02747}, 2022.

\bibitem[Liu et~al.(2025{\natexlab{a}})Liu, Liu, Liang, Li, Liu, Wang, Wan, Zhang, and Ouyang]{liu2025flow}
Jie Liu, Gongye Liu, Jiajun Liang, Yangguang Li, Jiaheng Liu, Xintao Wang, Pengfei Wan, Di~Zhang, and Wanli Ouyang.
\newblock Flow-grpo: Training flow matching models via online rl.
\newblock \emph{arXiv preprint arXiv:2505.05470}, 2025{\natexlab{a}}.

\bibitem[Liu et~al.(2025{\natexlab{b}})Liu, Liu, Liang, Yuan, Liu, Zheng, Wu, Wang, Xia, Wang, et~al.]{liu2025improving}
Jie Liu, Gongye Liu, Jiajun Liang, Ziyang Yuan, Xiaokun Liu, Mingwu Zheng, Xiele Wu, Qiulin Wang, Menghan Xia, Xintao Wang, et~al.
\newblock Improving video generation with human feedback.
\newblock \emph{arXiv preprint arXiv:2501.13918}, 2025{\natexlab{b}}.

\bibitem[Liu et~al.(2025{\natexlab{c}})Liu, Wu, Zheng, Wei, He, Pi, and Chen]{liu2025videodpo}
Runtao Liu, Haoyu Wu, Ziqiang Zheng, Chen Wei, Yingqing He, Renjie Pi, and Qifeng Chen.
\newblock {VideoDPO}: Omni-preference alignment for video diffusion generation.
\newblock In \emph{Proceedings of the IEEE/CVF Conference on Computer Vision and Pattern Recognition}, pages 8009--8019, 2025{\natexlab{c}}.
\newblock URL \url{https://arxiv.org/abs/2412.14167}.

\bibitem[Liu et~al.(2022)Liu, Gong, and Liu]{liu2022flow}
Xingchao Liu, Chengyue Gong, and Qiang Liu.
\newblock Flow straight and fast: Learning to generate and transfer data with rectified flow.
\newblock \emph{arXiv preprint arXiv:2209.03003}, 2022.

\bibitem[Lu et~al.(2025)Lu, Wang, Cao, Xu, and Zhang]{lu2025smpo}
Yunhong Lu, Qichao Wang, Hengyuan Cao, Xiaoyin Xu, and Min Zhang.
\newblock Smoothed preference optimization via {R}e{N}oise inversion for aligning diffusion models with varied human preferences.
\newblock In \emph{Proceedings of the 42nd International Conference on Machine Learning}, volume 267 of \emph{Proceedings of Machine Learning Research}, pages 40709--40725. PMLR, 2025.
\newblock URL \url{https://proceedings.mlr.press/v267/lu25l.html}.

\bibitem[Luo et~al.(2025)Luo, Hu, and Tang]{luo2025dgpo}
Yihong Luo, Tianyang Hu, and Jing Tang.
\newblock Reinforcing diffusion models by direct group preference optimization.
\newblock \emph{arXiv preprint arXiv:2510.08425}, 2025.
\newblock \doi{10.48550/arXiv.2510.08425}.
\newblock URL \url{https://arxiv.org/abs/2510.08425}.

\bibitem[Ma et~al.(2024)Ma, Goldstein, Albergo, Boffi, Vanden-Eijnden, and Xie]{ma2024sit}
Nanye Ma, Mark Goldstein, Michael~S. Albergo, Nicholas~M. Boffi, Eric Vanden-Eijnden, and Saining Xie.
\newblock {SiT}: Exploring flow and diffusion-based generative models with scalable interpolant transformers.
\newblock In \emph{European Conference on Computer Vision}, 2024.
\newblock URL \url{https://arxiv.org/abs/2401.08740}.

\bibitem[Ma et~al.(2025)Ma, Wu, Sun, and Li]{ma2025hpsv3widespectrumhumanpreference}
Yuhang Ma, Xiaoshi Wu, Keqiang Sun, and Hongsheng Li.
\newblock Hpsv3: Towards wide-spectrum human preference score, 2025.
\newblock URL \url{https://arxiv.org/abs/2508.03789}.

\bibitem[Oshima et~al.(2025)Oshima, Suzuki, Matsuo, and Furuta]{oshima2025beamsearch}
Yuta Oshima, Masahiro Suzuki, Yutaka Matsuo, and Hiroki Furuta.
\newblock Inference-time text-to-video alignment with diffusion latent beam search.
\newblock \emph{arXiv preprint arXiv:2501.19252}, 2025.
\newblock \doi{10.48550/arXiv.2501.19252}.
\newblock URL \url{https://arxiv.org/abs/2501.19252}.

\bibitem[Ouyang et~al.(2022)Ouyang, Wu, Jiang, Almeida, Wainwright, Mishkin, Zhang, Agarwal, Slama, Ray, et~al.]{ouyang2022training}
Long Ouyang, Jeffrey Wu, Xu~Jiang, Diogo Almeida, Carroll Wainwright, Pamela Mishkin, Chong Zhang, Sandhini Agarwal, Katarina Slama, Alex Ray, et~al.
\newblock Training language models to follow instructions with human feedback.
\newblock \emph{Advances in neural information processing systems}, 35:\penalty0 27730--27744, 2022.

\bibitem[Ping et~al.(2026)Ping, Jia, Luo, Qian, and Tsang]{ping2026flowfactory}
Bowen Ping, Chengyou Jia, Minnan Luo, Hangwei Qian, and Ivor Tsang.
\newblock {Flow-Factory}: A unified framework for reinforcement learning in flow-matching models.
\newblock \emph{arXiv preprint arXiv:2602.12529}, 2026.
\newblock URL \url{https://arxiv.org/abs/2602.12529}.

\bibitem[Prabhudesai et~al.(2023)Prabhudesai, Goyal, Pathak, and Fragkiadaki]{prabhudesai2023alignprop}
Mihir Prabhudesai, Anirudh Goyal, Deepak Pathak, and Katerina Fragkiadaki.
\newblock Aligning text-to-image diffusion models with reward backpropagation.
\newblock \emph{arXiv preprint arXiv:2310.03739}, 2023.
\newblock \doi{10.48550/arXiv.2310.03739}.
\newblock URL \url{https://arxiv.org/abs/2310.03739}.

\bibitem[Prabhudesai et~al.(2024)Prabhudesai, Mendonca, Qin, Fragkiadaki, and Pathak]{prabhudesai2024vader}
Mihir Prabhudesai, Russell Mendonca, Zheyang Qin, Katerina Fragkiadaki, and Deepak Pathak.
\newblock Video diffusion alignment via reward gradients.
\newblock \emph{arXiv preprint arXiv:2407.08737}, 2024.
\newblock \doi{10.48550/arXiv.2407.08737}.
\newblock URL \url{https://arxiv.org/abs/2407.08737}.

\bibitem[Rafailov et~al.(2023)Rafailov, Sharma, Mitchell, Manning, Ermon, and Finn]{rafailov2023direct}
Rafael Rafailov, Archit Sharma, Eric Mitchell, Christopher~D Manning, Stefano Ermon, and Chelsea Finn.
\newblock Direct preference optimization: Your language model is secretly a reward model.
\newblock \emph{Advances in neural information processing systems}, 36:\penalty0 53728--53741, 2023.

\bibitem[Saharia et~al.(2022)Saharia, Chan, Saxena, Li, Whang, Denton, Ghasemipour, Gontijo~Lopes, Karagol~Ayan, Salimans, et~al.]{saharia2022photorealistic}
Chitwan Saharia, William Chan, Saurabh Saxena, Lala Li, Jay Whang, Emily~L Denton, Kamyar Ghasemipour, Raphael Gontijo~Lopes, Burcu Karagol~Ayan, Tim Salimans, et~al.
\newblock Photorealistic text-to-image diffusion models with deep language understanding.
\newblock \emph{Advances in neural information processing systems}, 35:\penalty0 36479--36494, 2022.

\bibitem[Shao et~al.(2025)Shao, Xiao, Zhu, Liu, Zhai, Cao, and Zha]{shao2025vgpo}
Yawen Shao, Jie Xiao, Kai Zhu, Yu~Liu, Wei Zhai, Yang Cao, and Zheng-Jun Zha.
\newblock Anchoring values in temporal and group dimensions for flow matching model alignment.
\newblock \emph{arXiv preprint arXiv:2512.12387}, 2025.
\newblock \doi{10.48550/arXiv.2512.12387}.
\newblock URL \url{https://arxiv.org/abs/2512.12387}.

\bibitem[Sohl-Dickstein et~al.(2015)Sohl-Dickstein, Weiss, Maheswaranathan, and Ganguli]{sohl2015deep}
Jascha Sohl-Dickstein, Eric Weiss, Niru Maheswaranathan, and Surya Ganguli.
\newblock Deep unsupervised learning using nonequilibrium thermodynamics.
\newblock In \emph{International conference on machine learning}, pages 2256--2265. pmlr, 2015.

\bibitem[Song et~al.(2020)Song, Sohl-Dickstein, Kingma, Kumar, Ermon, and Poole]{song2020score}
Yang Song, Jascha Sohl-Dickstein, Diederik~P Kingma, Abhishek Kumar, Stefano Ermon, and Ben Poole.
\newblock Score-based generative modeling through stochastic differential equations.
\newblock \emph{arXiv preprint arXiv:2011.13456}, 2020.

\bibitem[Sun et~al.(2025{\natexlab{a}})Sun, Liao, Han, Bai, Gao, Fu, Shen, Wan, Yan, Zhang, et~al.]{sun2025solopo}
Huashan Sun, Shengyi Liao, Yansen Han, Yu~Bai, Yang Gao, Cheng Fu, Weizhou Shen, Fanqi Wan, Ming Yan, Ji~Zhang, et~al.
\newblock Solopo: Unlocking long-context capabilities in llms via short-to-long preference optimization.
\newblock \emph{arXiv preprint arXiv:2505.11166}, 2025{\natexlab{a}}.

\bibitem[Sun et~al.(2025{\natexlab{b}})Sun, Jiang, and Lin]{sun2025unified}
Peng Sun, Yi~Jiang, and Tao Lin.
\newblock Unified continuous generative models.
\newblock \emph{arXiv preprint arXiv:2505.07447}, 2025{\natexlab{b}}.

\bibitem[Tang et~al.(2024{\natexlab{a}})Tang, Guo, Zheng, Calandriello, Munos, Rowland, Richemond, Valko, Pires, and Piot]{tang2024generalized}
Yunhao Tang, Zhaohan~Daniel Guo, Zeyu Zheng, Daniele Calandriello, R{\'e}mi Munos, Mark Rowland, Pierre~Harvey Richemond, Michal Valko, Bernardo~{\'A}vila Pires, and Bilal Piot.
\newblock Generalized preference optimization: A unified approach to offline alignment.
\newblock \emph{arXiv preprint arXiv:2402.05749}, 2024{\natexlab{a}}.

\bibitem[Tang et~al.(2024{\natexlab{b}})Tang, Peng, Tang, Hong, Wang, and Chang]{tang2024dno}
Zhiwei Tang, Jiangweizhi Peng, Jiasheng Tang, Mingyi Hong, Fan Wang, and Tsung-Hui Chang.
\newblock Inference-time alignment of diffusion models with direct noise optimization.
\newblock \emph{arXiv preprint arXiv:2405.18881}, 2024{\natexlab{b}}.
\newblock \doi{10.48550/arXiv.2405.18881}.
\newblock URL \url{https://arxiv.org/abs/2405.18881}.

\bibitem[Wallace et~al.(2024)Wallace, Dang, Rafailov, Zhou, Lou, Purushwalkam, Ermon, Xiong, Joty, and Naik]{wallace2024diffusion}
Bram Wallace, Meihua Dang, Rafael Rafailov, Linqi Zhou, Aaron Lou, Senthil Purushwalkam, Stefano Ermon, Caiming Xiong, Shafiq Joty, and Nikhil Naik.
\newblock Diffusion model alignment using direct preference optimization.
\newblock In \emph{Proceedings of the IEEE/CVF Conference on Computer Vision and Pattern Recognition}, pages 8228--8238, 2024.

\bibitem[Wang et~al.(2024)Wang, Tan, Wang, Yang, Jin, and Li]{wang2024lift}
Yibin Wang, Zhiyu Tan, Junyan Wang, Xiaomeng Yang, Cheng Jin, and Hao Li.
\newblock {LiFT}: Leveraging human feedback for text-to-video model alignment.
\newblock \emph{arXiv preprint arXiv:2412.04814}, 2024.
\newblock \doi{10.48550/arXiv.2412.04814}.
\newblock URL \url{https://arxiv.org/abs/2412.04814}.

\bibitem[Wang et~al.(2025)Wang, Zang, Li, Jin, and Wang]{unifiedreward}
Yibin Wang, Yuhang Zang, Hao Li, Cheng Jin, and Jiaqi Wang.
\newblock Unified reward model for multimodal understanding and generation.
\newblock \emph{arXiv preprint arXiv:2503.05236}, 2025.

\bibitem[Wang et~al.(2026)Wang, Li, Qian, Tulyakov, Fu, and Kag]{wang2026diffusiondrf}
Yifan Wang, Yanyu Li, Gordon~Guocheng Qian, Sergey Tulyakov, Yun Fu, and Anil Kag.
\newblock {Diffusion-DRF}: Free, rich, and differentiable reward for video diffusion fine-tuning.
\newblock \emph{arXiv preprint arXiv:2601.04153}, 2026.
\newblock \doi{10.48550/arXiv.2601.04153}.
\newblock URL \url{https://arxiv.org/abs/2601.04153}.

\bibitem[Wu et~al.(2025)Wu, Kag, Skorokhodov, Menapace, Mirzaei, Gilitschenski, Tulyakov, and Siarohin]{wu2025densedpo}
Ziyi Wu, Anil Kag, Ivan Skorokhodov, Willi Menapace, Ashkan Mirzaei, Igor Gilitschenski, Sergey Tulyakov, and Aliaksandr Siarohin.
\newblock {DenseDPO}: Fine-grained temporal preference optimization for video diffusion models.
\newblock In \emph{Advances in Neural Information Processing Systems}, volume~38, 2025.
\newblock URL \url{https://arxiv.org/abs/2506.03517}.

\bibitem[Xiong et~al.(2025)Xiong, Yao, Xu, Pang, Wang, Sahoo, Li, Jiang, Zhang, Xiong, et~al.]{xiong2025minimalist}
Wei Xiong, Jiarui Yao, Yuhui Xu, Bo~Pang, Lei Wang, Doyen Sahoo, Junnan Li, Nan Jiang, Tong Zhang, Caiming Xiong, et~al.
\newblock A minimalist approach to llm reasoning: from rejection sampling to reinforce.
\newblock \emph{arXiv preprint arXiv:2504.11343}, 2025.

\bibitem[Xu et~al.(2024)Xu, Huang, Cheng, Yang, Xu, Wang, Duan, Yang, Jin, Li, Teng, Yang, Zheng, Liu, Zhang, Ding, Zhang, Gu, Huang, Huang, Tang, and Dong]{xu2024visionreward}
Jiazheng Xu, Yu~Huang, Jiale Cheng, Yuanming Yang, Jiajun Xu, Yuan Wang, Wenbo Duan, Shen Yang, Qunlin Jin, Shurun Li, Jiayan Teng, Zhuoyi Yang, Wendi Zheng, Xiao Liu, Dan Zhang, Ming Ding, Xiaohan Zhang, Xiaotao Gu, Shiyu Huang, Minlie Huang, Jie Tang, and Yuxiao Dong.
\newblock {VisionReward}: Fine-grained multi-dimensional human preference learning for image and video generation.
\newblock \emph{arXiv preprint arXiv:2412.21059}, 2024.
\newblock \doi{10.48550/arXiv.2412.21059}.
\newblock URL \url{https://arxiv.org/abs/2412.21059}.

\bibitem[Xue et~al.(2025)Xue, Wu, Gao, Kong, Zhu, Chen, Liu, Liu, Guo, Huang, and Luo]{xue2025dancegrpo}
Zeyue Xue, Jie Wu, Yu~Gao, Fangyuan Kong, Lingting Zhu, Mengzhao Chen, Zhiheng Liu, Wei Liu, Qiushan Guo, Weilin Huang, and Ping Luo.
\newblock {DanceGRPO}: Unleashing {GRPO} on visual generation.
\newblock \emph{arXiv preprint arXiv:2505.07818}, 2025.
\newblock \doi{10.48550/arXiv.2505.07818}.
\newblock URL \url{https://arxiv.org/abs/2505.07818}.

\bibitem[Yang et~al.(2026)Yang, Yang, Gong, Qin, Tan, and Li]{yang2026dualipo}
Xiaomeng Yang, Mengping Yang, Jia Gong, Luozheng Qin, Zhiyu Tan, and Hao Li.
\newblock {Dual-IPO}: Dual-iterative preference optimization for text-to-video generation.
\newblock In \emph{The Fourteenth International Conference on Learning Representations}, 2026.
\newblock URL \url{https://arxiv.org/abs/2502.02088}.

\bibitem[Zhang et~al.(2026)Zhang, Wu, Chen, Ji, Xiao, Huang, and Han]{zhang2026onlinevpo}
Jiacheng Zhang, Jie Wu, Weifeng Chen, Yatai Ji, Xuefeng Xiao, Weilin Huang, and Kai Han.
\newblock Align video diffusion model with online video-centric preference optimization.
\newblock In \emph{Proceedings of the IEEE/CVF Winter Conference on Applications of Computer Vision}, pages 6142--6152, 2026.
\newblock \doi{10.1109/WACV61042.2026.00594}.
\newblock URL \url{https://arxiv.org/abs/2412.15159}.

\bibitem[Zhang et~al.(2025)Zhang, Da, Ding, Yang, Jin, Li, Gao, Zhang, Xiang, and Pan]{zhang2025lpo}
Tao Zhang, Cheng Da, Kun Ding, Huan Yang, Kun Jin, Yan Li, Tingting Gao, Di~Zhang, Shiming Xiang, and Chunhong Pan.
\newblock Diffusion model as a noise-aware latent reward model for step-level preference optimization.
\newblock In \emph{Advances in Neural Information Processing Systems}, volume~38, 2025.
\newblock URL \url{https://arxiv.org/abs/2502.01051}.

\bibitem[Zheng et~al.(2025)Zheng, Chen, Ye, Wang, Zhang, Jiang, Su, Ermon, Zhu, and Liu]{zheng2025diffusionnft}
Kaiwen Zheng, Huayu Chen, Haotian Ye, Haoxiang Wang, Qinsheng Zhang, Kai Jiang, Hang Su, Stefano Ermon, Jun Zhu, and Ming-Yu Liu.
\newblock Diffusionnft: Online diffusion reinforcement with forward process.
\newblock \emph{arXiv preprint arXiv:2509.16117}, 2025.

\bibitem[Zhu et~al.(2025)Zhu, Xiao, and Honavar]{zhu2025dspo}
Huaisheng Zhu, Teng Xiao, and Vasant Honavar.
\newblock {DSPO}: Direct score preference optimization for diffusion model alignment.
\newblock In \emph{The Thirteenth International Conference on Learning Representations}, 2025.
\newblock URL \url{https://openreview.net/forum?id=xyfb9HHvMe}.

\end{thebibliography}
\bibliographystyle{plainnat}

\appendix

\newpage

\begingroup
\setlength{\parskip}{0pt}
\hypersetup{linkcolor=black}
\tableofcontents
\endgroup

\newpage

\section{Broader Impacts}
\label{sec:broader_impacts}

This work studies how to reduce reward hacking in preference optimization for continuous generative models, which could have positive impact by making aligned image generators more reliable and by encouraging evaluation beyond a single optimized reward. In particular, methods that better preserve the pretrained data manifold may reduce some forms of quality degradation.

At the same time, improving preference optimization for image generation can also strengthen systems that may be misused to produce deceptive or harmful synthetic media. Better alignment to OCR-oriented or human-preference rewards does not by itself guarantee fairness, safety, or robustness to adversarial prompts, and it could be used to improve misuse-oriented generation quality as well as benign applications. For this reason, we view the method as a technical contribution for controlled offline research settings rather than a claim that preference-tuned image generators are safe for unrestricted deployment.

\section{Related Work}
\label{sec:related}

\paragraph{Continuous-Time Generative Modeling.}
Diffusion and score models learn iterative stochastic denoising~\citep{sohl2015deep,ho2020denoising,song2020score}, whereas Flow Matching (FM) and related transport formulations regress continuous velocity fields~\citep{lipman2022flow,albergo2022building,chen2018neural,sun2025unified}. Stochastic interpolants unify deterministic flows and diffusions, SiT scales this view, and Rectified Flow emphasizes straighter, efficient paths~\citep{albergo2025stochastic,ma2024sit,liu2022flow,esser2024scaling}. We study not a new transport model, but whether preference updates preserve its pretrained terminal support.

\paragraph{Offline Preference Optimization.}
Offline methods learn from fixed preference data rather than collecting rewards during training. DPO provides the canonical alternative to RLHF~\citep{rafailov2023direct,ouyang2022training}, while DiffusionDPO and FlowDPO replace likelihood ratios with denoising- or flow-matching-error surrogates~\citep{wallace2024diffusion,liu2025improving}. Diffusion-KTO, MaPO, SPO, and latent preference optimization relax requirements on paired labels, reference models, or uniform timestep supervision~\citep{li2024diffusionkto,hong2026mapo,liang2025spo,zhang2025lpo}; DSPO, SmPO-Diffusion, and Linear-DPO instead address score mismatch, heterogeneous preferences, or utility saturation~\citep{zhu2025dspo,lu2025smpo,li2026lineardpo}. VideoDPO extends sequence-level preference learning to clips, whereas DenseDPO introduces temporally aligned segment-level pairs~\citep{liu2025videodpo,wu2025densedpo}. These methods improve offline objectives or supervision granularity, but do not directly characterize terminal-support preservation after the update.

\paragraph{Reward Models and Reward-Based Fine-Tuning.}
Reward-based methods first construct a scalar feedback signal and then optimize against it. VideoScore and VisionReward learn fine-grained, multidimensional video rewards~\citep{he2024videoscore,xu2024visionreward}; LiFT incorporates rationale-annotated human feedback, while AI feedback can target dynamic object interactions~\citep{wang2024lift,furuta2024dynamic}. Given such feedback, reward-weighted or reward-ranked fine-tuning and policy gradients optimize generated samples directly~\citep{lee2023aligning,dong2023raft,black2024ddpo,fan2023dpok}. DRaFT, AlignProp, and ShortFT instead backpropagate differentiable rewards through full, truncated, or shortened sampling chains~\citep{clark2024draft,prabhudesai2023alignprop,guo2025shortft}; T2V-Turbo variants inject rewards into consistency distillation~\citep{li2024t2vturbo,li2025t2vturbov2}, while VADER and Diffusion-DRF propagate dense video-reward feedback~\citep{prabhudesai2024vader,wang2026diffusiondrf}. This line focuses on reward construction and credit propagation rather than fixed-pair preference objectives.

\paragraph{Online and Inference-Time Alignment.}
Online alignment refreshes preferences, rewards, or samples during optimization. Dual-IPO alternates reward-model and generator updates, whereas OnlineVPO constructs video-centric preferences online~\citep{yang2026dualipo,zhang2026onlinevpo}. Divergence objectives modify the alignment geometry~\citep{li2025divergence}, and Adjoint Matching, Flow-GRPO, TempFlow-GRPO, MixGRPO, DanceGRPO, and DiffusionNFT provide online updates for diffusion or flow models~\citep{domingoadjoint,liu2025flow,he2025tempflow,li2025mixgrpo,xue2025dancegrpo,zheng2025diffusionnft}. DGPO learns from group preferences with deterministic ODE sampling, while VGPO addresses temporal credit and vanishing group-relative rewards~\citep{luo2025dgpo,shao2025vgpo}. In contrast, Direct Noise Optimization, diffusion latent beam search, and EvoSearch steer trajectories at inference time without parameter updates~\citep{tang2024dno,oshima2025beamsearch,he2025evosearch}. Flow-Factory supplies modular infrastructure for these training regimes~\citep{ping2026flowfactory}. Both online adaptation and inference-time search are orthogonal to our fixed-dataset support-preservation question.

\paragraph{Manifold Preservation and Stabilization.}
Stabilization mechanisms address fine-tuning drift more directly. Reference regularization and timestep-aware training constrain deviations, while Diffusion-SDPO protects preferred reconstruction~\citep{kang2025rethinkingdpo,fu2025diffusionsdpo}. Complementary evidence from LLM post-training suggests that a sufficiently trained SFT foundation can improve subsequent RL, whereas severe SFT overfitting reduces optimization plasticity~\citep{ding2026rethinking}. MPGD instead imposes an autoencoder manifold during training-free guidance~\citep{he2023mpgd}. Motivated by the manifold hypothesis~\citep{fefferman2016testing,lei2020geometric} and geometry-adaptive diffusion smoothing~\citep{farghly2025diffusion}, \method anchors the winner in an offline flow-preference objective and provides a pointwise, rather than distribution-level, guarantee.

\section{Theoretical Results and Proof}

\subsection{Proof of Theorem~\ref{thm:linear_fm_terminal_manifold}}
\label{app:proof_linear_fm_terminal_manifold}

\begin{proof}
Let $\mathcal M_{\mathrm{data}}:=\operatorname{supp}(p_0)$.
Fix a coupling $\gamma$ of $p_0$ and $p_1$, which is the joint law used to sample $(\mathbf{x}_0,\mathbf{x}_1)$ in the Flow Matching objective.
For the linear interpolation
\[
\psi_t(\mathbf{x}_0, \mathbf{x}_1)=(1-t) \cdot \mathbf{x}_0 + t \cdot \mathbf{x}_1,
\]
the path velocity is
\[
\partial_t \psi_t(\mathbf{x}_0, \mathbf{x}_1) = \mathbf{x}_1 - \mathbf{x}_0.
\]
Define
\[
X_t:=\psi_t(\mathbf{x}_0, \mathbf{x}_1),\quad U_t := \mathbf{x}_1-\mathbf{x}_0,
\quad (\mathbf{x}_0, \mathbf{x}_1) \sim \gamma.
\]
Then the Flow Matching objective can be written as
\[
\mathcal L_{\mathrm{FM}}(v)
=
\int_0^1 \mathbb E\!\left[\|v(X_t,t)-U_t\|^2\right]dt.
\]

By the standard $L^2$ projection argument, any global minimizer $v^\star$ satisfies
\[
v^\star(x,t)=\mathbb E[U_t\mid X_t=x],
\quad \rho_t\text{-a.e. }x,
\]
that is,
\[
v^\star(\mathbf{x},t)=\mathbb E[\mathbf{x}_1 - \mathbf{x}_0\mid (1-t) \cdot \mathbf{x}_0 + t \cdot \mathbf{x}_1 = \mathbf{x}].
\]
This is exactly the conditional mean velocity field of the interpolating family, where $\rho_t$ denotes the law of $X_t$.
Since $X_t$ is generated by the linear interpolation, the family $\{\rho_t\}_{t\in[0,1]}$ satisfies the continuity equation
\[
\partial_t \rho_t + \nabla\cdot(\rho_t v^\star_t)=0
\]
in the weak sense, where $v^\star_t(\cdot):=v^\star(\cdot,t)$.
This equality is proved in the following. For any smooth compactly supported test function $\varphi$,
\[
\frac{d}{dt}\mathbb E[\varphi(X_t)]
=
\mathbb E[\nabla \varphi(X_t)\cdot \partial_t X_t]
=
\mathbb E[\nabla \varphi(X_t)\cdot U_t].
\]
Using conditional expectation with respect to $X_t$,
\[
\mathbb E[\nabla \varphi(X_t)\cdot U_t]
=
\mathbb E\!\left[
\nabla \varphi(X_t)\cdot \mathbb E[U_t\mid X_t]
\right]
=
\mathbb E[\nabla \varphi(X_t)\cdot v^\star(X_t,t)].
\]
Hence
\[
\frac{d}{dt}\int_{\mathbb R^d}\varphi(x)\,\rho_t(dx)
=
\int_{\mathbb R^d}\nabla\varphi(x)\cdot v^\star(x,t)\,\rho_t(dx),
\]
which is the weak form of
\[
\partial_t \rho_t + \nabla\cdot(\rho_t v^\star_t)=0.
\]

On the other hand, if we let
\[
\mu_t := (\Phi^{v^\star}_{1\to t})_\# p_1,
\]
then the curve $\{\mu_t\}_{t\in[0,1]}$ also satisfies
\[
\partial_t \mu_t + \nabla\cdot(\mu_t v^\star_t)=0
\]
with initial condition
\[
\mu_1 = p_1.
\]
But from the definition of the interpolation,
\[
\rho_1 = (\psi_1)_\#\gamma = (x_1)_\#\gamma = p_1.
\]
Therefore both $\{\rho_t\}$ and $\{\mu_t\}$ solve the same continuity equation with the same initial condition at $t=1$.
By uniqueness of weak solutions to this continuity equation under the stated regularity assumptions, the transported marginal must coincide with the interpolation marginal:
\[
\mu_t=\rho_t,\qquad \forall t\in[0,1].
\]
That is,
\[
(\Phi^{v^\star}_{1\to t})_\# p_1 = \rho_t.
\]

Evaluating at $t=0$ gives
\[
(\Phi^{v^\star}_{1\to 0})_\# p_1 = \rho_0.
\]
Since
\[
\rho_0 = (\psi_0)_\#\gamma = (x_0)_\#\gamma = p_0,
\]
we conclude
\[
(\Phi^{v^\star}_{1\to 0})_\# p_1 = p_0.
\]
Therefore,
\[
\operatorname{supp}\!\left((\Phi^{v^\star}_{1\to 0})_\# p_1\right)=\operatorname{supp}(p_0)=\mathcal M_{\mathrm{data}}.
\]
\end{proof}

\subsection{Proof of Theorem~\ref{thm:flowdpo_drift_main}}
\label{app:proof_flowdpo_terminal_normal_drift}

\begin{lemma}[On-manifold displacement has only second-order normal component]
\label{lem:normal_second_order}
Let $\mathcal M \subset \mathbb R^d$ be a $C^2$ embedded submanifold, and let $x \in \mathcal M$.
Then there exist a neighborhood $U$ of $x$ and a constant $C>0$ such that for every $y\in \mathcal M\cap U$,
\[
\bigl\|\Pi_{N_x\mathcal M}(y-x)\bigr\|
\le
C\|y-x\|^2.
\]
Equivalently,
\[
\bigl\|\Pi_{N_x\mathcal M}(y-x)\bigr\|
=
O(\|y-x\|^2)
\qquad\text{as } y\to x,\ y\in\mathcal M.
\]
\end{lemma}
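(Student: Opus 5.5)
The plan is to write $\mathcal M$ locally as a graph over its tangent space at $x$ and read off that the normal offset of the graph is quadratic. Let $k=\dim\mathcal M$, $T:=T_x\mathcal M$ and $N:=N_x\mathcal M$, so that $\mathbb R^d=T\oplus N$ orthogonally. Because $\mathcal M$ is a $C^2$ embedded submanifold, the implicit function theorem gives a neighborhood $U$ of $x$, an open ball $B\subset T$ around $0$, and a $C^2$ map $h:B\to N$ with $h(0)=0$ and $Dh(0)=0$ such that
\[
\mathcal M\cap U=\{\,x+u+h(u):u\in B\,\}.
\]
The condition $Dh(0)=0$ expresses that $T$ is the tangent space at $x$.

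Next, take $y\in\mathcal M\cap U$ and write $y-x=u+h(u)$ with $u\in B$. Then $\Pi_{T}(y-x)=u$ and $\Pi_{N}(y-x)=h(u)$.

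To bound $h$, shrink $B$ to a closed ball $\bar B_r$ on which $D^2h$ is bounded by some $K$. Taylor's theorem with $h(0)=0$ and $Dh(0)=0$ then gives
\[
\|h(u)\|\le \tfrac K2\|u\|^2 .
\]
Since $u$ and $h(u)$ are orthogonal, $\|u\|\le\|y-x\|$. Hence
\[
\|\Pi_N(y-x)\|\le \tfrac K2\|y-x\|^2 ,
\]
which is the claim with $C=K/2$. The neighborhood $U$ should be shrunk accordingly, so that its points correspond to $u\in\bar B_r$.

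The main obstacle is the first step, justifying the graph representation with the correct normalization. The embedded condition lets us take a local defining map $\Psi:\mathbb R^d\to\mathbb R^{d-k}$ of class $C^2$ with $\mathcal M\cap U=\Psi^{-1}(0)$ and $\ker D\Psi(x)=T$. Apply the implicit function theorem in the splitting $T\oplus N$; this works because $D\Psi(x)|_N$ is invertible. Differentiating $\Psi(x+u+h(u))=0$ at $u=0$ gives $D\Psi(x)|_N\,Dh(0)=0$, so $Dh(0)=0$. Everything after that is a routine second-order Taylor estimate.
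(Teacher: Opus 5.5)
Your proposal is correct and takes essentially the same route as the paper: write $\mathcal M$ near $x$ as a $C^2$ graph over $T_x\mathcal M$ with $h(0)=0$ and $Dh(0)=0$, bound the normal offset by a second-order Taylor estimate, and use orthogonality to get $\|u\|\le\|y-x\|$. Your implicit-function-theorem derivation of the graph form, including the computation showing $Dh(0)=0$, supplies a step that the paper only asserts.
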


\begin{proof}
Since $\mathcal M$ is a $C^2$ embedded submanifold, after a translation and an orthogonal change of coordinates, we may assume
\[
x=0,\qquad
T_x\mathcal M=\mathbb R^m\times\{0\}\subset \mathbb R^m\times\mathbb R^{d-m}.
\]
Then, in a neighborhood of $x$, the manifold can be written as the graph
\[
\mathcal M\cap U
=
\{(u,g(u)): u\in V\},
\]
where $g:V\subset\mathbb R^m\to\mathbb R^{d-m}$ is $C^2$ and satisfies
\[
g(0)=0,\qquad Dg(0)=0.
\]
Hence, by Taylor's theorem,
\[
\|g(u)\|\le C\|u\|^2
\]
for all $u$ sufficiently close to $0$.

Now let $y=(u,g(u))\in \mathcal M\cap U$. Since $N_x\mathcal M=\{0\}\times\mathbb R^{d-m}$ in these coordinates, we have
\[
\Pi_{N_x\mathcal M}(y-x)=\Pi_{N_x\mathcal M}(u,g(u))=(0,g(u)),
\]
and therefore
\[
\bigl\|\Pi_{N_x\mathcal M}(y-x)\bigr\|=\|g(u)\|\le C\|u\|^2.
\]
Since
\[
\|y-x\|=\|(u,g(u))\|\ge \|u\|,
\]
it follows that
\[
\bigl\|\Pi_{N_x\mathcal M}(y-x)\bigr\|
\le
C\|y-x\|^2.
\]
This proves the claim.
\end{proof}


In the following, we prove the \thmref{thm:flowdpo_drift_main}.


\begin{proof}
Let
\[
g:=\nabla_\theta \mathcal L(\theta_0),
\qquad
\theta_1=\theta_0-\alpha g,
\qquad
x_0^\star:=F(\theta_0,\mathbf x_1)\in\mathcal M_0.
\]
Since $F(\theta,\mathbf x_1)$ is differentiable with respect to $\theta$ at $\theta_0$,
\[
F(\theta_1,\mathbf x_1)-x_0^\star
=
-\alpha\,D_\theta F(\theta_0,\mathbf x_1)[g]
+
o(\alpha).
\]
Projecting onto the normal space $N_{x_0^\star}\mathcal M_0$ yields
\[
\Pi_{N_{x_0^\star}\mathcal M_0}\bigl(F(\theta_1,\mathbf x_1)-x_0^\star\bigr)
=
-\alpha\,
\Pi_{N_{x_0^\star}\mathcal M_0}
D_\theta F(\theta_0,\mathbf x_1)[g]
+
o(\alpha).
\]
By assumption,
\[
\Pi_{N_{x_0^\star}\mathcal M_0}
D_\theta F(\theta_0,\mathbf x_1)[g]\neq 0,
\]
so there exists $c>0$ such that for all sufficiently small $\alpha>0$,
\[
\Bigl\|
\Pi_{N_{x_0^\star}\mathcal M_0}\bigl(F(\theta_1,\mathbf x_1)-x_0^\star\bigr)
\Bigr\|
\ge c\alpha.
\]
Suppose, for contradiction, that there exists a sequence $\alpha_n\downarrow 0$ such that
\[
F(\theta_0-\alpha_n g,\mathbf x_1)\in\mathcal M_0.
\]
Since $F(\theta_0-\alpha_n g,\mathbf x_1)\to x_0^\star$, \lemref{lem:normal_second_order} implies
\[
\Bigl\|
\Pi_{N_{x_0^\star}\mathcal M_0}
\bigl(F(\theta_0-\alpha_n g,\mathbf x_1)-x_0^\star\bigr)
\Bigr\|
\le
C\bigl\|F(\theta_0-\alpha_n g,\mathbf x_1)-x_0^\star\bigr\|^2.
\]
But the differentiability of $F$ also gives
\[
\bigl\|F(\theta_0-\alpha_n g,\mathbf x_1)-x_0^\star\bigr\|=O(\alpha_n),
\]
hence
\[
\Bigl\|
\Pi_{N_{x_0^\star}\mathcal M_0}
\bigl(F(\theta_0-\alpha_n g,\mathbf x_1)-x_0^\star\bigr)
\Bigr\|
=
O(\alpha_n^2).
\]
This contradicts the lower bound
\[
\Bigl\|
\Pi_{N_{x_0^\star}\mathcal M_0}\bigl(F(\theta_0-\alpha_n g,\mathbf x_1)-x_0^\star\bigr)
\Bigr\|
\ge c\alpha_n
\]
for all sufficiently large $n$.
Therefore, there exists $\alpha_0>0$ such that for all $\alpha\in(0,\alpha_0)$,
\[
F(\theta_1,\mathbf x_1)\notin\mathcal M_0.
\]
\end{proof}

\subsection{Proof of Theorem~\ref{thm:thermo_to_rft}}
\label{app:proof_thermo_to_rft}

\begin{proof}
Because
\begin{equation}
    \lim_{\tau \downarrow 0} \tau \cdot \log\left(1 + \exp(\frac{a}{\tau}) + \exp(\frac{b}{\tau})\right) = \max\{0, a, b\}
\end{equation},
we have
\begin{equation}
\lim_{\tau \downarrow 0} g_\tau(\theta)
= t^2 \cdot \max\!\left\{
0,\,
\Delta_\theta^w-\Delta_\theta^l,\,
\ell_\theta^w
\right\}.
\end{equation}
If $\Delta_\theta^w-\Delta_\theta^l \leq \ell_\theta^w$, then
\begin{align}
\lim_{\tau \downarrow 0} g_\tau(\theta)
&= t^2 \cdot \ell_\theta^w\\
&= t^2 \cdot \| v_\theta(\mathbf x_t^w, t) - (\epsilon - \mathbf x_0^w)\|^2 \\
&= \| t \cdot v_\theta(\mathbf x_t^w, t) - t \cdot (\epsilon - \mathbf x_0^w)\|^2 \\
&= \| ((1-t)\cdot \mathbf x_0^w + t \cdot \epsilon) - t \cdot v_\theta(\mathbf x_t^w, t) - \mathbf x_0^w \|^2 \\
&= \|\mathbf x_t^w - t \cdot v_\theta(\mathbf x_t^w, t) - \mathbf x_0^w \|^2 \\
&= \|\tilde{\mathbf x}_0^w - \mathbf x_0^w\|^2.
\end{align}
\end{proof}

\subsection{Proof of Theorem~\ref{thm:thermo_vs_flowdpo}}
\label{app:proof_thm_thermo_vs_flowdpo}
\begin{proof}
    \begin{align}
        g_\tau(\theta)
        &=
        t^2 \cdot \tau \cdot \log\!\left(
        {
        1+
        \exp\!\left(\frac{\Delta_\theta^w-\Delta_\theta^l}{\tau}\right)
        +
        \exp\!\left(\frac{\ell_\theta^w}{\tau}\right)
        }
        \right)\\
        &=
        t^2 \cdot \tau \cdot \log\!\left(
        {
        1+
        \exp\!\left(\frac{\Delta_\theta^w-\Delta_\theta^l}{\tau}\right)
        }
        \right)\nonumber\\
        &\quad\ +
        t^2 \cdot \tau \cdot \log\!\left(
        \frac{
        1+
        \exp\!\left(\frac{\Delta_\theta^w-\Delta_\theta^l}{\tau}\right)
        +
        \exp\!\left(\frac{\ell_\theta^w}{\tau}\right)
        }{
        1+
        \exp\!\left(\frac{\Delta_\theta^w-\Delta_\theta^l}{\tau}\right)
        }
        \right).
    \end{align}
\end{proof}

\subsection{Proof of Theorem~\ref{thm:thermo_exp_drift_suppression}}
\label{app:proof_thermo_exp_drift_suppression}
\begin{proof}
Let $\mathcal M_{\mathrm{data}}:=\operatorname{supp}(p_0)$.
Now, we prove the inequality:
\begin{align*}
    g_\tau(\theta) 
    &=
    t^2 \cdot \tau \cdot \log\!\left(
    1+
    \exp\!\left(\frac{\Delta_\theta^w-\Delta_\theta^l}{\tau}\right)
    +
    \exp\!\left(\frac{\ell_\theta^w}{\tau}\right)
    \right)\\
    &\geq t^2 \cdot \tau \cdot \frac{\ell_\theta^w}{\tau} \\
    &=  t^2 \cdot \ell_\theta^w \\
    &= t^2 \cdot \| v_\theta(\mathbf x_t^w, t) - (\epsilon - \mathbf x_0^w)\|^2 \\
    &= \| (\mathbf x_t^w - t \cdot v_\theta(\mathbf x_t^w, t)) - \mathbf x_0^w\|^2 \qquad \text{(where $\mathbf x_t^w = (1 - t) \cdot \mathbf x_0^w + t \cdot \epsilon$)}\\
    &= \|\tilde{\mathbf x}_0^w - \mathbf x_0^w\|^2 \\
    &\geq \left(\inf_{y\in\mathcal M_{\mathrm{data}}} \|\tilde{\mathbf x}_0^w - y\|\right)^2
\end{align*}
By the definition of $\operatorname{dist}(\tilde{\mathbf x}_0^w, \mathcal M_{\mathrm{data}}) = \inf_{y\in\mathcal M_{\mathrm{data}}} \|\tilde{\mathbf x}_0^w - y\|$, we prove the inequality.
\end{proof}

\subsection{Extension of Theorem~\ref{thm:thermo_exp_drift_suppression}
to the ODE Endpoint}
\label{subsec:integrated-ode-extension}

The one-step reconstruction in \thmref{thm:thermo_exp_drift_suppression}
approximates the endpoint obtained by integrating the learned velocity field.
The following result transfers its manifold-distance guarantee to that endpoint.

\begin{theorem}[ODE endpoint drift control]
\label{thm:thermodpo_ode_control}
Fix the pointwise setting of
\thmref{thm:thermo_exp_drift_suppression}, and write
\(\mathcal M_{\mathrm{data}}:=\operatorname{supp}(p_0)\).
Let \(\{\mathbf x_s^{\theta,w}\}_{s\in[0,t]}\) solve
\[
    \frac{\mathrm d}{\mathrm ds}\mathbf x_s^{\theta,w}
    =v_\theta(\mathbf x_s^{\theta,w},s),
    \qquad
    \mathbf x_t^{\theta,w}=\mathbf x_t^w,
\]
and define
\(\Phi_{t\to0}^{\theta}(\mathbf x_t^w):=\mathbf x_0^{\theta,w}\).
Suppose \(v_\theta\) is continuously differentiable near this trajectory
and, for all \(s\in[0,t]\),
\begin{equation}
\label{eq:velocity_regularities}
    \bigl\|D_{\mathbf x}v_\theta(\mathbf x_s^{\theta,w},s)\bigr\|_{\mathrm{op}}
    \le L_x,\qquad
    \bigl\|\partial_s v_\theta(\mathbf x_s^{\theta,w},s)\bigr\|
    \le L_t,\qquad
    \bigl\|v_\theta(\mathbf x_s^{\theta,w},s)\bigr\|
    \le V.
\end{equation}
Then
\begin{equation}
\label{eq:integrated_ode_manifold_control}
    \operatorname{dist}\!\left(
        \Phi_{t\to0}^{\theta}(\mathbf x_t^w),
        \mathcal M_{\mathrm{data}}
    \right)
    \le
    \sqrt{g_\tau(\theta)}
    +\frac{L_t+L_xV}{2}\,t^2.
\end{equation}
\end{theorem}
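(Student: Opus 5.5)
Triangle inequality for the distance function, splitting the ODE endpoint into the one-step reconstruction plus the integration error:
\[
\operatorname{dist}\!\bigl(\Phi^\theta_{t\to0}(\mathbf x_t^w),\mathcal M_{\mathrm{data}}\bigr)
\le
\operatorname{dist}\!\bigl(\tilde{\mathbf x}_0^w,\mathcal M_{\mathrm{data}}\bigr)
+\bigl\|\Phi^\theta_{t\to0}(\mathbf x_t^w)-\tilde{\mathbf x}_0^w\bigr\|,
\]
which holds because $y\mapsto\operatorname{dist}(y,\mathcal M_{\mathrm{data}})$ is $1$-Lipschitz. The first term is at most $\sqrt{g_\tau(\theta)}$ directly by \thmref{thm:thermo_exp_drift_suppression}, after taking square roots (both sides are nonnegative). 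It therefore remains to show the local truncation bound $\|\mathbf x_0^{\theta,w}-\tilde{\mathbf x}_0^w\|\le \tfrac{L_t+L_xV}{2}t^2$.

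For the second term, I will compare the exact integral with the explicit Euler step. Since $\mathbf x_0^{\theta,w}=\mathbf x_t^w-\int_0^t v_\theta(\mathbf x_s^{\theta,w},s)\,ds$ and $\tilde{\mathbf x}_0^w=\mathbf x_t^w-t\,v_\theta(\mathbf x_t^w,t)$, the difference is
\[
\int_0^t\bigl(v_\theta(\mathbf x_t^{\theta,w},t)-v_\theta(\mathbf x_s^{\theta,w},s)\bigr)ds .
\]
Define $h(s):=v_\theta(\mathbf x_s^{\theta,w},s)$ along the trajectory. It is $C^1$ because $v_\theta$ is $C^1$ near the trajectory, and the chain rule gives $h'(s)=D_{\mathbf x}v_\theta\cdot v_\theta+\partial_s v_\theta$. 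Hence, by \eqref{eq:velocity_regularities}, $\|h'(s)\|\le L_xV+L_t$ for $s\in[0,t]$. The mean value inequality then gives $\|h(t)-h(s)\|\le (L_t+L_xV)(t-s)$. Integrating over $s\in[0,t]$ yields $(L_t+L_xV)t^2/2$, which is the claimed constant exactly.

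The step needing the most care is the chain-rule bound. The hypotheses are stated only on the trajectory points, and that is exactly where $h'$ is evaluated, so no extra neighborhood Lipschitz assumption is needed. Existence of the solution on $[0,t]$ is assumed in the statement. No further obstacle is expected.
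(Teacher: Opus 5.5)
Your proposal is correct and matches the paper's proof: the paper also bounds $\frac{\mathrm d}{\mathrm ds}v_\theta(\mathbf x_s,s)$ by $L_t+L_xV$ via the chain rule and integrates $(L_t+L_xV)(t-s)$ over $[0,t]$ to get the $\frac{L_t+L_xV}{2}t^2$ one-step Euler error. It then combines this with the square-rooted bound from \thmref{thm:thermo_exp_drift_suppression} using the $1$-Lipschitz property of $\operatorname{dist}(\cdot,\mathcal M_{\mathrm{data}})$.
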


Thus, the pointwise guarantee of
\thmref{thm:thermo_exp_drift_suppression} extends to the integrated ODE
endpoint up to the \(O(t^2)\) local error of a one-step Euler estimate.

\begin{proof}
For brevity, write
\(\mathbf x_s:=\mathbf x_s^{\theta,w}\) and
\(M_v:=L_t+L_xV\).
By the chain rule and \eqref{eq:velocity_regularities},
\[
    \left\|
        \frac{\mathrm d}{\mathrm ds}v_\theta(\mathbf x_s,s)
    \right\|
    =
    \left\|
        \partial_s v_\theta(\mathbf x_s,s)
        +D_{\mathbf x}v_\theta(\mathbf x_s,s)
         v_\theta(\mathbf x_s,s)
    \right\|
    \le M_v.
\]
Let
\(\widetilde{\mathbf x}_0^w
:=\mathbf x_t^w-t\,v_\theta(\mathbf x_t^w,t)\).
Since
\(\Phi_{t\to0}^{\theta}(\mathbf x_t^w)
=\mathbf x_t^w-\int_0^t v_\theta(\mathbf x_s,s)\,\mathrm ds\),
\begin{align}
    \left\|
        \Phi_{t\to0}^{\theta}(\mathbf x_t^w)
        -\widetilde{\mathbf x}_0^w
    \right\|
    &\le
    \int_0^t
        \left\|
            v_\theta(\mathbf x_t,t)-v_\theta(\mathbf x_s,s)
        \right\|
    \,\mathrm ds \notag\\
    &\le
    \int_0^t M_v(t-s)\,\mathrm ds
    =\frac{M_v}{2}\,t^2.
    \label{eq:euler_local_error}
\end{align}
Moreover, \eqref{eq:thermo_loss_lower_bound_distance} gives
\[
    \operatorname{dist}\!\left(
        \widetilde{\mathbf x}_0^w,\mathcal M_{\mathrm{data}}
    \right)
    \le \sqrt{g_\tau(\theta)}.
\]
The distance to a nonempty set is \(1\)-Lipschitz. Combining this fact
with \eqref{eq:euler_local_error} yields
\[
    \operatorname{dist}\!\left(
        \Phi_{t\to0}^{\theta}(\mathbf x_t^w),
        \mathcal M_{\mathrm{data}}
    \right)
    \le
    \left\|
        \Phi_{t\to0}^{\theta}(\mathbf x_t^w)
        -\widetilde{\mathbf x}_0^w
    \right\|
    +\operatorname{dist}\!\left(
        \widetilde{\mathbf x}_0^w,\mathcal M_{\mathrm{data}}
    \right),
\]
which proves \eqref{eq:integrated_ode_manifold_control}.
\end{proof}

\section{Additional Analysis of \methodweighted}
\label{app:thermo_weighted_analysis}

In this section, we provide additional analysis of \methodweighted:
\begin{itemize}[leftmargin=*]
    \item \propref{prop:thermo_weighted_low_temp} parallels \thmref{thm:thermo_to_rft} by establishing the connections to RFT for \methodweighted.
    \item \propref{prop:thermo_weighted_vs_flowdpo} parallels \thmref{thm:thermo_vs_flowdpo} by giving the anchored-FlowDPO decomposition.
    \item \propref{prop:thermo_weighted_drift} parallels \thmref{thm:thermo_exp_drift_suppression} by providing winner-side manifold drift control.
\end{itemize}
The \method objective in the main text is chosen because it yields the cleanest winner-side manifold distance surrogate statement. Equivalently, the practical loss used in both toy and real-image experiments can be written as
\begin{align}
\mathcal{L}_{\text{ThermoDPO-weighted}}(\theta)
&=
\mathbb E\big[g_{\tau}^{\mathrm{wt}}(\theta)\big], \\
g_{\tau}^{\mathrm{wt}}(\theta)
&:=
\tau\cdot\log\!\left(
1+
\exp\!\left(\frac{\Delta_\theta^w-\Delta_\theta^l}{\tau}\right)
+
\exp\!\left(\frac{(1-t)^2\ell_\theta^w}{\tau}\right)
\right), \nonumber
\end{align}
where, for the pointwise analysis below, we fix $(\mathbf x_0^w,\mathbf x_0^l,t,\epsilon)$ with $t\in(0,1)$ and write $\tau:=\tau(t)>0$. Relative to \method, removing the global $t^2$ prefactor and replacing $\ell_\theta^w$ by $(1-t)^2\ell_\theta^w$ shifts the winner-side anchor toward the terminal window $t\approx 0$.

\begin{proposition}[Low-temperature limit of \textup{\methodweightedname}]
\label{prop:thermo_weighted_low_temp}
Fix $(\mathbf x_0^w,\mathbf x_0^l,t,\epsilon)$ with $t\in(0,1)$. Then
\begin{equation}
\lim_{\tau \downarrow 0} g_{\tau}^{\mathrm{wt}}(\theta)
=
\max\!\left\{
0,\,
\Delta_\theta^w-\Delta_\theta^l,\,
(1-t)^2\ell_\theta^w
\right\}.
\end{equation}
Moreover, if $\Delta_\theta^w-\Delta_\theta^l \le (1-t)^2\ell_\theta^w$, then
\begin{equation}
\lim_{\tau \downarrow 0} g_{\tau}^{\mathrm{wt}}(\theta)
=
(1-t)^2\ell_\theta^w
=
\frac{(1-t)^2}{t^2}\|\tilde{\mathbf x}_0^w-\mathbf x_0^w\|^2,
\end{equation}
where $\tilde{\mathbf x}_0^w = \mathbf x_t^w - t\cdot v_\theta(\mathbf x_t^w,t)$.
\end{proposition}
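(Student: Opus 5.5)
The proof will mirror the argument used for \thmref{thm:thermo_to_rft}, with the anchor argument $\ell_\theta^w$ replaced by $(1-t)^2\ell_\theta^w$ and no global $t^2$ prefactor. Throughout, the tuple $(\mathbf x_0^w,\mathbf x_0^l,t,\epsilon)$ and the parameter $\theta$ are fixed. This makes $a:=\Delta_\theta^w-\Delta_\theta^l$ and $b:=(1-t)^2\ell_\theta^w$ finite real constants that do not depend on $\tau$. Hence $g_\tau^{\mathrm{wt}}(\theta)=\tau\log\bigl(e^{0/\tau}+e^{a/\tau}+e^{b/\tau}\bigr)$ is exactly a scaled log-sum-exp of the three numbers $0,a,b$.

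\textbf{Step 1 (the limit).} Let $m:=\max\{0,a,b\}$. I would sandwich the expression between elementary bounds. Since every summand is positive and the largest one equals $e^{m/\tau}$,
\[
e^{m/\tau}\le 1+e^{a/\tau}+e^{b/\tau}\le 3e^{m/\tau}.
\]
Taking $\tau\log$ gives $m\le g_\tau^{\mathrm{wt}}(\theta)\le m+\tau\log 3$. Letting $\tau\downarrow 0$ yields the first display. This is the same log-sum-exp limit invoked in the proof of \thmref{thm:thermo_to_rft}, and the sandwich makes it rigorous.

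\textbf{Step 2 (the special case).} Assume $a\le b$. Since $\ell_\theta^w\ge 0$, we also have $b\ge 0$, so $m=b=(1-t)^2\ell_\theta^w$. For the reconstruction identity I would reuse the algebra from the proofs of \thmref{thm:thermo_to_rft} and \thmref{thm:thermo_exp_drift_suppression}. Substituting $\mathbf x_t^w=(1-t)\mathbf x_0^w+t\epsilon$ gives
\[
t\bigl(v_\theta(\mathbf x_t^w,t)-(\epsilon-\mathbf x_0^w)\bigr)=-(\tilde{\mathbf x}_0^w-\mathbf x_0^w),
\]
so $t^2\ell_\theta^w=\|\tilde{\mathbf x}_0^w-\mathbf x_0^w\|^2$. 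Because $t\in(0,1)$, we may divide by $t^2$, which gives $(1-t)^2\ell_\theta^w=\frac{(1-t)^2}{t^2}\|\tilde{\mathbf x}_0^w-\mathbf x_0^w\|^2$.

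\textbf{Main obstacle.} There is no real obstacle here. The only care needed is twofold. First, the convergence must be justified rather than merely asserted, which the explicit $\tau\log 3$ bound handles. Second, $t\neq 0$ must be used so that dividing by $t^2$ is legitimate. The condition $t<1$ is not needed for the algebra; it only keeps the anchor nontrivial.
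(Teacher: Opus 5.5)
Your proposal is correct and follows the paper's proof. The paper invokes the log-sum-exp limit $\tau\log(1+e^{a/\tau}+e^{b/\tau})\to\max\{0,a,b\}$ and then rewrites $(1-t)^2\ell_\theta^w$ using $t^2\ell_\theta^w=\|\tilde{\mathbf x}_0^w-\mathbf x_0^w\|^2$. Your sandwich $m\le g_\tau^{\mathrm{wt}}(\theta)\le m+\tau\log 3$ and your remark that $b\ge 0$ (so the maximum equals $b$ when $a\le b$) make explicit two steps that the paper leaves unstated.
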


\begin{proof}
Because
\begin{equation}
\lim_{\tau \downarrow 0}\tau\cdot\log\!\left(
1+\exp\!\left(\frac{a}{\tau}\right)+\exp\!\left(\frac{b}{\tau}\right)
\right)
=
\max\{0,a,b\},
\end{equation}
we obtain
\begin{equation}
\lim_{\tau \downarrow 0} g_{\tau}^{\mathrm{wt}}(\theta)
=
\max\!\left\{
0,\,
\Delta_\theta^w-\Delta_\theta^l,\,
(1-t)^2\ell_\theta^w
\right\}.
\end{equation}
If $\Delta_\theta^w-\Delta_\theta^l \le (1-t)^2\ell_\theta^w$, then
\begin{align}
\lim_{\tau \downarrow 0} g_{\tau}^{\mathrm{wt}}(\theta)
&=
(1-t)^2\ell_\theta^w \\
&=
\frac{(1-t)^2}{t^2}\cdot t^2\ell_\theta^w \\
&=
\frac{(1-t)^2}{t^2}\|\tilde{\mathbf x}_0^w-\mathbf x_0^w\|^2,
\end{align}
where the last identity follows from $t^2\ell_\theta^w=\|\tilde{\mathbf x}_0^w-\mathbf x_0^w\|^2$ as in the proof of \thmref{thm:thermo_to_rft}.
\end{proof}

\begin{remark}[Interpretation]
Compared with \thmref{thm:thermo_to_rft}, the practical variant no longer reduces exactly to the terminal reconstruction error. Instead, it reduces to a reweighted winner-side anchor with factor $(1-t)^2/t^2$, which is largest near the terminal endpoint $t=0$. This is precisely the regime where the practical implementation is intended to strengthen terminal manifold preservation.
\end{remark}

\begin{proposition}[\textup{\methodweightedname} as anchored FlowDPO]
\label{prop:thermo_weighted_vs_flowdpo}
For every $\tau>0$, the single-sample \textup{\methodweightedname} integrand admits the decomposition
\begin{equation}
g_{\tau}^{\mathrm{wt}}(\theta)
=
\tau\cdot\log\!\left(
1+
\exp\!\left(\frac{\Delta_\theta^w-\Delta_\theta^l}{\tau}\right)
\right)
+
r_{\tau}^{\mathrm{wt}}(\theta),
\end{equation}
where
\begin{equation}
r_{\tau}^{\mathrm{wt}}(\theta)
:=
\tau\cdot\log\!\left(
\frac{
1+
\exp\!\left(\frac{\Delta_\theta^w-\Delta_\theta^l}{\tau}\right)
+
\exp\!\left(\frac{(1-t)^2\ell_\theta^w}{\tau}\right)
}{
1+
\exp\!\left(\frac{\Delta_\theta^w-\Delta_\theta^l}{\tau}\right)
}
\right)
\ge 0.
\end{equation}
Therefore, \textup{\methodweightedname} remains a strict upper envelope of the corresponding temperature-scaled FlowDPO objective, with the excess term acting as a reweighted winner-side anchoring penalty.
\end{proposition}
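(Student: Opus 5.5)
The argument will mirror the proof of \thmref{thm:thermo_vs_flowdpo} and has two parts: an algebraic identity, then a sign argument. To keep notation light, set
\[
a:=\frac{\Delta_\theta^w-\Delta_\theta^l}{\tau},\qquad b:=\frac{(1-t)^2\ell_\theta^w}{\tau},
\]
with $\tau=\tau(t)>0$ and the tuple $(\mathbf x_0^w,\mathbf x_0^l,t,\epsilon)$ fixed as in the pointwise setting.

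\emph{Step 1 (the decomposition).} Since $1+e^a>0$, we may multiply and divide by it inside the logarithm:
\[
\log\!\left(1+e^a+e^b\right)=\log\!\left(1+e^a\right)+\log\!\left(\frac{1+e^a+e^b}{1+e^a}\right).
\]
Multiplying by $\tau$ gives exactly the temperature-scaled FlowDPO term plus $r_\tau^{\mathrm{wt}}(\theta)$ as defined in the statement. All quantities are finite because $\ell_\theta^w$, $\Delta_\theta^w$ and $\Delta_\theta^l$ are finite reals.

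\emph{Step 2 (nonnegativity of the anchor).} Because $e^b>0$, the ratio $(1+e^a+e^b)/(1+e^a)$ is strictly greater than $1$. Its logarithm is therefore strictly positive, and multiplying by $\tau>0$ keeps it positive. Hence $r_\tau^{\mathrm{wt}}(\theta)>0$, which in particular gives $r_\tau^{\mathrm{wt}}(\theta)\ge 0$.

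\emph{Step 3 (the upper-envelope claim).} From Step 2 we get $g_\tau^{\mathrm{wt}}(\theta)>\tau\log(1+e^a)$ pointwise. Taking expectations preserves this strict inequality provided both expectations are finite, since the difference is an integrable, strictly positive random variable. This justifies the phrase ``strict upper envelope''.

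I expect no real obstacle. The only point needing care is this strictness under expectation, where I would state the integrability assumption explicitly. Unlike the $\tau\downarrow0$ regime of \propref{prop:thermo_weighted_low_temp}, no limiting argument is needed here.
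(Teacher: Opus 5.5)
Your proof is correct and is the paper's argument: factor $1+\exp\bigl((\Delta_\theta^w-\Delta_\theta^l)/\tau\bigr)$ out of the logarithm, then note that the numerator of the remaining ratio is at least the denominator. Your use of $e^{b}>0$ to get $r_\tau^{\mathrm{wt}}(\theta)>0$ strictly, together with the integrability remark for taking expectations, slightly strengthens the paper's proof: that proof only records $r_\tau^{\mathrm{wt}}(\theta)\ge 0$, whereas the statement claims a \emph{strict} upper envelope.
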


\begin{proof}
\begin{align}
g_{\tau}^{\mathrm{wt}}(\theta)
&=
\tau\cdot\log\!\left(
1+
\exp\!\left(\frac{\Delta_\theta^w-\Delta_\theta^l}{\tau}\right)
+
\exp\!\left(\frac{(1-t)^2\ell_\theta^w}{\tau}\right)
\right) \\
&=
\tau\cdot\log\!\left(
1+
\exp\!\left(\frac{\Delta_\theta^w-\Delta_\theta^l}{\tau}\right)
\right) \nonumber\\
&\quad+
\tau\cdot\log\!\left(
\frac{
1+
\exp\!\left(\frac{\Delta_\theta^w-\Delta_\theta^l}{\tau}\right)
+
\exp\!\left(\frac{(1-t)^2\ell_\theta^w}{\tau}\right)
}{
1+
\exp\!\left(\frac{\Delta_\theta^w-\Delta_\theta^l}{\tau}\right)
}
\right). \nonumber
\end{align}
The numerator in the second logarithm is no smaller than the denominator, so $r_{\tau}^{\mathrm{wt}}(\theta)\ge 0$.
\end{proof}

\begin{proposition}[Winner-side manifold drift control of \textup{\methodweightedname}]
\label{prop:thermo_weighted_drift}
Let $\mathcal M_{\mathrm{data}}:=\operatorname{supp}(p_0)$.
For every $(\mathbf x_0^w,\mathbf x_0^l,t,\epsilon)$ with $t\in(0,1)$ and $\tau>0$,
\begin{equation}
\operatorname{dist}\!\bigl(\tilde{\mathbf x}_0^w,\mathcal M_{\mathrm{data}}\bigr)^2
\le
\frac{t^2}{(1-t)^2}\,g_{\tau}^{\mathrm{wt}}(\theta),
\end{equation}
where $\tilde{\mathbf x}_0^w = \mathbf x_t^w - t\cdot v_\theta(\mathbf x_t^w,t)$. 
In particular, if $t\le \frac12$, then
\begin{equation}
\operatorname{dist}\!\bigl(\tilde{\mathbf x}_0^w,\mathcal M_{\mathrm{data}}\bigr)^2
\le
g_{\tau}^{\mathrm{wt}}(\theta).
\end{equation}
\end{proposition}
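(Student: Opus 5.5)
The plan mirrors the proof of \thmref{thm:thermo_exp_drift_suppression}, with one extra rescaling step. First, I will lower-bound the log-sum-exp by its winner-anchor summand. The arguments $1$ and $\exp\!\left(\frac{\Delta_\theta^w-\Delta_\theta^l}{\tau}\right)$ are strictly positive and $\log$ is increasing, so
\[
g_{\tau}^{\mathrm{wt}}(\theta)\ \ge\ \tau\cdot\log\exp\!\left(\frac{(1-t)^2\ell_\theta^w}{\tau}\right)=(1-t)^2\,\ell_\theta^w .
\]
This holds for every $\tau>0$, so no limiting argument is needed.

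Second, I will convert $\ell_\theta^w$ into a reconstruction error. The identity $t^2\ell_\theta^w=\|\tilde{\mathbf x}_0^w-\mathbf x_0^w\|^2$ has already been derived in the proof of \thmref{thm:thermo_to_rft}. It uses $\mathbf x_t^w=(1-t)\mathbf x_0^w+t\epsilon$, which gives $t\,(v_\theta-(\epsilon-\mathbf x_0^w))=\mathbf x_0^w-\tilde{\mathbf x}_0^w$. Since $t\in(0,1)$, both $t^2$ and $(1-t)^2$ are strictly positive. Multiplying the first inequality by $t^2/(1-t)^2$ therefore gives
\[
\frac{t^2}{(1-t)^2}\,g_{\tau}^{\mathrm{wt}}(\theta)\ \ge\ t^2\ell_\theta^w=\|\tilde{\mathbf x}_0^w-\mathbf x_0^w\|^2 .
\]

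Third, I will pass to the distance. The preferred sample $\mathbf x_0^w$ is a data sample, so $\mathbf x_0^w\in\mathcal M_{\mathrm{data}}=\operatorname{supp}(p_0)$; this is the same implicit assumption used in the last line of the proof of \thmref{thm:thermo_exp_drift_suppression}. Hence $\operatorname{dist}(\tilde{\mathbf x}_0^w,\mathcal M_{\mathrm{data}})\le\|\tilde{\mathbf x}_0^w-\mathbf x_0^w\|$, and squaring gives the main inequality.

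For the special case $t\le\frac12$, note that $t\le 1-t$, so $t^2/(1-t)^2\le 1$. Since $g_{\tau}^{\mathrm{wt}}(\theta)\ge 0$ (it is $\tau>0$ times the log of a quantity exceeding $1$), the factor can be dropped.

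There is no real analytic obstacle here. The only point requiring care is justifying $\mathbf x_0^w\in\operatorname{supp}(p_0)$, which I will state explicitly as the standing assumption that winners are drawn from $p_0$ (or lie in its support). I will also remark that the bound degenerates as $t\uparrow1$, which is why the restriction $t<1$ is needed.
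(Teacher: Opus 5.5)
Your proposal is correct and follows the paper's proof step for step. You lower-bound the log-sum-exp by the anchor term $(1-t)^2\ell_\theta^w$, use $t^2\ell_\theta^w=\|\tilde{\mathbf x}_0^w-\mathbf x_0^w\|^2$ together with $\mathbf x_0^w\in\mathcal M_{\mathrm{data}}$, rescale by $t^2/(1-t)^2$, and get the $t\le\frac12$ case from $t^2/(1-t)^2\le 1$; your explicit remarks that $g_{\tau}^{\mathrm{wt}}(\theta)\ge 0$ and that $\mathbf x_0^w\in\operatorname{supp}(p_0)$ is a standing assumption make precise two points the paper leaves implicit.
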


\begin{proof}
\begin{align}
g_{\tau}^{\mathrm{wt}}(\theta)
&=
\tau\cdot\log\!\left(
1+
\exp\!\left(\frac{\Delta_\theta^w-\Delta_\theta^l}{\tau}\right)
+
\exp\!\left(\frac{(1-t)^2\ell_\theta^w}{\tau}\right)
\right) \\
&\ge
\tau\cdot\frac{(1-t)^2\ell_\theta^w}{\tau} \nonumber\\
&=
(1-t)^2\ell_\theta^w \nonumber\\
&=
\frac{(1-t)^2}{t^2}\|\tilde{\mathbf x}_0^w-\mathbf x_0^w\|^2 \nonumber\\
&\ge
\frac{(1-t)^2}{t^2}\operatorname{dist}\!\bigl(\tilde{\mathbf x}_0^w,\mathcal M_{\mathrm{data}}\bigr)^2, \nonumber
\end{align}
because $\mathbf x_0^w\in\mathcal M_{\mathrm{data}}$. Rearranging gives
\begin{equation}
\operatorname{dist}\!\bigl(\tilde{\mathbf x}_0^w,\mathcal M_{\mathrm{data}}\bigr)^2
\le
\frac{t^2}{(1-t)^2}\,g_{\tau}^{\mathrm{wt}}(\theta).
\end{equation}
If $t\le \bar t$, then
\begin{equation}
\frac{t^2}{(1-t)^2}
\le
\frac{\bar t^2}{(1-\bar t)^2},
\end{equation}
which proves the truncated-window bound. The case $t\le \frac12$ follows because $\frac{t^2}{(1-t)^2}\le 1$.
\end{proof}

\newpage

\section{Experiments Details}

\subsection{Toy Experiment Setup Details}
\label{app:toy_setup}

\paragraph{Toy manifold.}
We construct a toy 3D data manifold to study how preference optimization affects terminal manifold preservation. Specifically, we define a curved surface in $\mathbb{R}^3$ by
\begin{equation}
\mathcal M
=
\left\{
(x,y,z)\in\mathbb{R}^3
:\;
x\in[-5,5],\;
y\in[-1,1],\;
z=f(x,y)
\right\},
\end{equation}
where
\begin{equation}
f(x,y)
=
\Bigl(
1.2 \times \exp\!\bigl(-(x-3)^2\bigr)
+
1.2 \times \exp\!\bigl(-(x+3)^2\bigr)
-
0.9 \times \exp\!\bigl(-x^2\bigr)
\Bigr)
\cdot
\Bigl(1 - 0.15 y^2\Bigr).
\end{equation}
This surface consists of two elevated bump regions centered around $x=\pm 3$ and one depressed dip region centered around $x=0$, with a mild modulation along the $y$-direction.

\paragraph{Preferred and dispreferred regions.}
To define synthetic preference labels, we partition the manifold according to the $x$-coordinate. Samples in the two bump regions are treated as preferred,
\begin{equation}
\mathcal M_{\mathrm{win}}
=
\{(x,y,z)\in\mathcal M:\ |x-3|<1 \ \text{or}\ |x+3|<1\},
\end{equation}
while samples in the central dip region are treated as dispreferred,
\begin{equation}
\mathcal M_{\mathrm{lose}}
=
\{(x,y,z)\in\mathcal M:\ |x|<1\}.
\end{equation}
The remaining manifold points are regarded as neutral and are not assigned preference labels. This construction produces a simple but geometrically meaningful preference task: the model is encouraged to shift probability mass from the dip region toward the two bump regions.

\paragraph{Training pipeline.}
We first pretrain a reference model on samples from the toy surface using standard flow matching, so that the model learns to generate points lying on the underlying data manifold. In the current implementation, the toy vector field is a three-layer MLP with hidden size $128$ and SiLU activations. We pretrain this model for $10{,}000$ steps with batch size $512$, Adam optimizer, and learning rate $10^{-3}$. Starting from this pretrained model, we then construct synthetic preference pairs by treating samples from the two bump regions as preferred and samples from the central dip region as dispreferred. Unless specified otherwise below, each second-stage method is fine-tuned for $10{,}000$ update steps with batch size $512$, Adam optimizer, learning rate $10^{-4}$, and gradient clipping at norm $1.0$. We evaluate every trained model on $10{,}000$ generated samples. All toy experiments are run on CPU only, and each toy configuration completes within roughly five minutes in our implementation. This controlled pipeline allows us to compare how different preference optimization methods improve preference alignment while affecting terminal manifold preservation.

\paragraph{Evaluation metrics.}
We evaluate both preference alignment and manifold preservation using the following manifold-aware metrics on generated samples:
\begin{itemize}[leftmargin=*]
\item \textsc{Winner ratio:} The fraction of generated samples that lie in the preferred region. This measures how many samples are in the preferred region without considering the manifold drift.
\item \textsc{Loser ratio:} The fraction of generated samples that lie in the dispreferred region. This measures how many samples remain in undesirable regions without considering the manifold drift.
\item \textsc{Strict winner ratio:} The fraction of generated samples that are both on-manifold and lie in the preferred region. This measures how much valid probability mass is assigned to preferred samples.
\item \textsc{On-manifold ratio:} The fraction of generated points that fall inside the valid $(x,y)$ domain and inside an $\varepsilon$-tube around the true toy surface, i.e., $|z-f(x,y)|\le \varepsilon$ with $\varepsilon=0.15$. 
\item \textsc{Winner quality:} The fraction of preferred-region samples that are also on-manifold. This measures whether samples attracted toward the preferred region remain geometrically valid.
\item \textsc{Strict preference score:} The equally weighted combination
\[
\mathrm{StrictScore}
=
0.5 \cdot \mathrm{StrictWin}
+
0.5 \cdot \mathrm{OnManifold},
\]
which summarizes the trade-off between preference satisfaction and manifold preservation.
\end{itemize}

\paragraph{Compared methods.}
We compare the following methods in the toy experiment:
\begin{itemize}[leftmargin=*]
\item \textbf{RFT}~\cite{xiong2025minimalist,chen2026nft}. Starting from the pretrained FM model, we fine-tune it with the standard RFT objective only on synthetic data constructed from the preferred bump regions.
\item \textbf{FlowDPO}~\cite{liu2025improving}. Starting from the pretrained FM model, we fine-tune it with the standard FlowDPO objective on synthetic preference pairs constructed from the preferred bump regions and the dispreferred dip region.
\item \textbf{Diffusion-SDPO}~\cite{fu2025diffusionsdpo}. We adapt its safeguarded DPO update to the squared flow-matching residuals. The forward preference margin is identical to that of FlowDPO, while the backward gradient through the loser residual is multiplied by a detached safeguard factor. Specifically, for the winner- and loser-side output gradients $g_w$ and $g_l$, respectively, the factor is $s_\mu=\operatorname{clip}((1-\mu)\|g_w\|^2/\langle g_w,g_l\rangle,0,1)$ when $\langle g_w,g_l\rangle>0$, and $s_\mu=1$ otherwise. We use a frozen pretrained reference model, set $\mu=0.99$, and sweep $\beta\in\{1,10,100,500\}$.
\item \textbf{Linear-DPO}~\cite{li2026lineardpo}. We adapt its linear preference objective to flow-matching residuals. With $\Delta_d:=\Delta_\theta^w-\Delta_\theta^l$, the implementation forms the detached utility weight $u=\operatorname{clip}(0.2\beta\Delta_d+0.5,0.01,1)$ and minimizes $\E[u(\ell_\theta^w-\ell_\theta^l)]$. The reference model is updated after every optimizer step using an exponential moving average with decay $0.9999$. Following the configuration in our implementation, we use learning rate $10^{-5}$ and sweep $\beta\in\{1,10,100,500\}$.
\item \textbf{$\chi$PO}~\cite{huang2024correcting}. Starting from the pretrained FM model, we fine-tune it with the standard $\chi$PO objective $\E[-\log \sigma(\beta\cdot [(\exp(-\Delta_\theta^w) - \Delta_\theta^w) - (\exp(-\Delta_\theta^l) - \Delta_\theta^l)])]$ on synthetic preference pairs constructed from the preferred bump regions and the dispreferred dip region.
\item \textbf{FlowDPO + KL.} We augment FlowDPO with an additional KL-style regularization term $\|v_\theta- v_{\text{ref}}\|$ that keeps the fine-tuned model close to the pretrained reference model. This baseline tests whether staying closer to the reference flow is sufficient to mitigate manifold drift.
\item \textbf{FlowDPO + RFT.} We combine the FlowDPO objective with an additional flow-matching term on preferred samples. This baseline tests whether explicitly anchoring optimization toward preferred data can improve the trade-off between preference alignment and manifold preservation.
\item \textbf{\methodname / \methodweightedname.} Starting from the same pretrained FM model, we fine-tune it with the prototype \method objective and with the practical \methodweighted variant on the same preference pairs. The main-text tables focus on \methodweighted because it is the practical implementation used throughout the empirical study. We also test different $\tau$ scheduler (see \figref{fig:tau_schedules}) which means different weights of preference optimization and terminal manifold perservation.
\end{itemize}

\begin{figure*}[ht]
    \centering
    \includegraphics[width=0.45\textwidth]{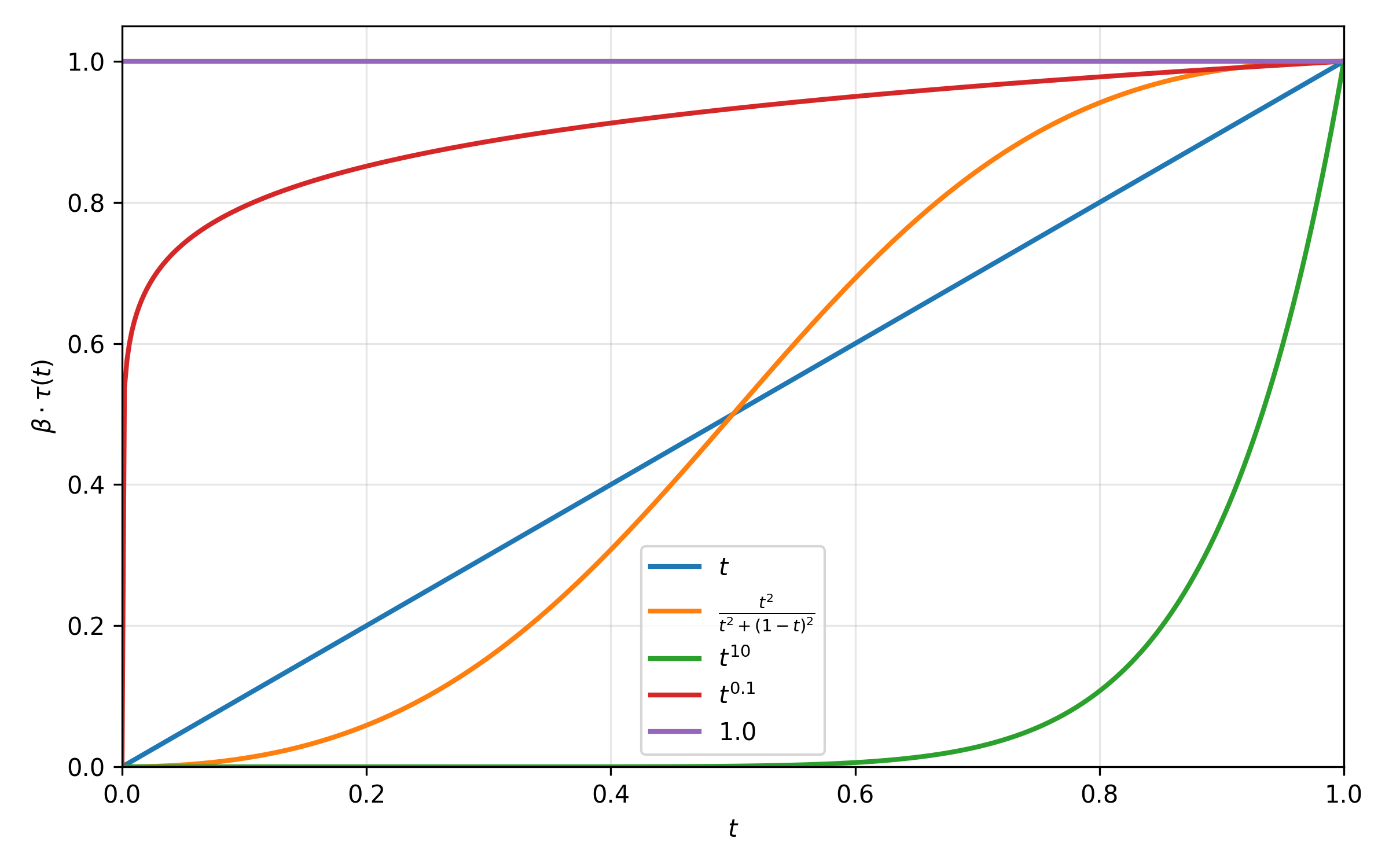}
    \caption{
    Illustration of different temperature schedules $\tau(t)$ used in \method, including the linear schedule $\tau(t)=t$, the SNR-style schedule $\beta \cdot \tau(t)=\frac{t^2}{t^2+(1-t)^2}$, and the power schedules $\beta \cdot \tau(t)=t^{10}$ and $\beta \cdot \tau(t)=t^{0.1}$.
    These schedules control the trade-off between terminal manifold preservation and preference-driven trajectory deformation.
    }
    \label{fig:tau_schedules}
\end{figure*}

\paragraph{Comparison to $\chi$PO.}
We compare against $\chi$PO~\cite{huang2024correcting}, a DPO variant designed to mitigate overoptimization by replacing the logarithmic link function in the standard DPO objective. On our toy benchmark, $\chi$PO performs comparably to DPO and does not yield a consistent advantage. We emphasize that this should be interpreted as a setting-specific observation rather than a contradiction of prior work: $\chi$PO is formulated for the conventional DPO objective over log-probability ratios, whereas our setting is for flow matching, so its benefits may not transfer directly.

\subsection{Additional Toy Experiments Results}
\label{app:toy_exp_results}

\subsubsection{Toy Results of \method Variants}
\label{app:toy_exp_results_thermo_dpo_variants}
We summarizes the additional sweep results of \method variants in \tabref{tab:thermo_dpo_results}.

\begin{table*}[ht]
    \centering
    \small
    \setlength{\tabcolsep}{5pt}
    \renewcommand{\arraystretch}{1.08}
    \caption{Toy experiment results comparing \method variants and \methodweighted variants. 
    Best results are in \textbf{bold}, and second-best results are \underline{underlined}. 
    For Loss, lower is better; for all other metrics, higher is better.}
    \label{tab:thermo_dpo_results}
    \resizebox{\linewidth}{!}{
    \begin{tabular}{lcccccc}
    \toprule
    Method & Win (\%) $\uparrow$ & Loss (\%) $\downarrow$ & StrictWin (\%) $\uparrow$ & OnManifold (\%) $\uparrow$ & WinQuality $\uparrow$ & StrictScore $\uparrow$ \\
    \midrule 
    \multicolumn{7}{l}{\textit{\textup{\methodname} with $\tau(t)=\frac{t}{\beta}$}} \\ 
    \method ($t$, $\beta=1$) & 81.7 & 1.0 & 63.5 & 72.4 & 0.778 & 0.68 \\ 
    \method ($t$, $\beta=10$) & 50.7 & 12.8 & 44 & 85.9 & 0.87 & 0.65 \\ 
    \method ($t$, $\beta=100$) & 42.5 & 17.1 & 36.4 & 88 & 0.856 & 0.622 \\ 
    \method ($t$, $\beta=500$) & 41.2 & 17.9 & 35 & 88 & 0.849 & 0.615 \\ 
    \midrule 
    \multicolumn{7}{l}{\textit{\textup{\methodname} with $\tau(t)=\frac{1}{\beta}\frac{t^2}{t^2+(1-t)^2}$}} \\
    \method ($\frac{t^2}{t^2 + (1-t)^2}$, $\beta=1$) & 79.4 & 1.1 & 61.6 & 72.1 & 0.777 & 0.669 \\ 
    \method ($\frac{t^2}{t^2 + (1-t)^2}$, $\beta=10$) & 50.7 & 12.8 & 44 & 86.3 & 0.868 & 0.651 \\ 
    \method ($\frac{t^2}{t^2 + (1-t)^2}$, $\beta=100$) & 43.1 & 17.2 & 37 & 87.9 & 0.857 & 0.625 \\ 
    \method ($\frac{t^2}{t^2 + (1-t)^2}$, $\beta=500$) & 41.4 & 17.6 & 35.3 & 88 & 0.853 & 0.616 \\ 
    \midrule
    \multicolumn{7}{l}{\textit{\textup{\methodname} with $\tau(t)=\frac{t^{10}}{\beta}$}} \\ 
    \method ($t^{10}$, $\beta=1$) & 47.4 & 14.2 & 40.9 & 86.9 & 0.863 & 0.639 \\ 
    \method ($t^{10}$, $\beta=10$) & 43.5 & 17.2 & 37.6 & 87.6 & 0.864 & 0.626 \\ 
    \method ($t^{10}$, $\beta=100$) & 41.6 & 17.6 & 35.2 & 87.8 & 0.847 & 0.615 \\ 
    \method ($t^{10}$, $\beta=500$) & 41.1 & 17.9 & 35.2 & 88.3 & 0.856 & 0.617 \\ 
    \midrule 
    \multicolumn{7}{l}{\textit{\textup{\methodname} with $\tau(t)=\frac{t^{0.1}}{\beta}$}} \\ 
    \method ($t^{0.1}$, $\beta=1$) & 92.0 & 0.4 & 65.5 & 69 & 0.712 & 0.673 \\ 
    \method ($t^{0.1}$, $\beta=10$) & 54 & 10.3 & 46.6 & 81.3 & 0.863 & 0.64 \\ 
    \method ($t^{0.1}$, $\beta=100$) & 43.8 & 16.9 & 37.9 & 88.3 & 0.866 & 0.631 \\ 
    \method ($t^{0.1}$, $\beta=500$) & 41.2 & 18.3 & 35.2 & 87.8 & 0.854 & 0.615 \\ 
    \midrule 
    \multicolumn{7}{l}{\textit{\textup{\methodweightedname} with $\tau(t)=\frac{t}{\beta}$}} \\ 
    \methodweighted ($t$, $\beta=1$) & 92.7 & 0.5 & \underline{87.6} & \textbf{92.2} & \textbf{0.945} & \underline{0.899} \\ 
    \methodweighted ($t$, $\beta=10$) & 92.0 & 0.4 & 86.1 & 91.5 & 0.935 & 0.888 \\ 
    \methodweighted ($t$, $\beta=100$) & 91.2 & 0.4 & 85.9 & 92.1 & 0.941 & 0.89 \\ 
    \methodweighted ($t$, $\beta=500$) & 91.9 & 0.4 & 86.2 & 92.2 & 0.939 & 0.892 \\ 
    \midrule 
    \multicolumn{7}{l}{\textit{\textup{\methodweightedname} with $\tau(t)=\frac{1}{\beta}\frac{t^2}{t^2+(1-t)^2}$}} \\ 
    \methodweighted ($\frac{t^2}{t^2 + (1-t)^2}$, $\beta=1$) & 92.5 & 0.6 & 87.2 & 91.9 & 0.943 & 0.895 \\ 
    \methodweighted ($\frac{t^2}{t^2 + (1-t)^2}$, $\beta=10$) & 91.6 & 0.4 & 85.7 & 91.8 & 0.936 & 0.887 \\ 
    \methodweighted ($\frac{t^2}{t^2 + (1-t)^2}$, $\beta=100$) & 91.4 & 0.3 & 85.6 & 91.6 & 0.936 & 0.886 \\ 
    \methodweighted ($\frac{t^2}{t^2 + (1-t)^2}$, $\beta=500$) & 91.2 & 0.4 & 85.7 & 92 & 0.94 & 0.888 \\ 
    \midrule 
    \multicolumn{7}{l}{\textit{\textup{\methodweightedname} with $\tau(t)=\frac{t^{10}}{\beta}$}} \\ 
    \methodweighted ($t^{10}$, $\beta=1$) & 91.8 & 0.4 & 86.4 & 92.2 & 0.941 & 0.893 \\ 
    \methodweighted ($t^{10}$, $\beta=10$) & 91.6 & 0.4 & 86.3 & 92.2 & 0.941 & 0.892 \\ 
    \methodweighted ($t^{10}$, $\beta=100$) & 91.4 & 0.5 & 85.6 & 91.9 & 0.936 & 0.887 \\ 
    \methodweighted ($t^{10}$, $\beta=500$) & 91.4 & 0.4 & 85.5 & 91.8 & 0.936 & 0.887 \\ 
    \midrule 
    \multicolumn{7}{l}{\textit{\textup{\methodweightedname} with $\tau(t)=\frac{t^{0.1}}{\beta}$}} \\
    \methodweighted ($t^{0.1}$, $\beta=1$) & 93.1 & 0.5 & \textbf{87.9} & \textbf{92.2} & \underline{0.944} & \textbf{0.9} \\ 
    \methodweighted ($t^{0.1}$, $\beta=10$) & 91.9 & 0.4 & 86 & 92 & 0.936 & 0.89 \\ 
    \methodweighted ($t^{0.1}$, $\beta=100$) & 92.1 & 0.4 & 86.4 & 92 & 0.939 & 0.892 \\ 
    \methodweighted ($t^{0.1}$, $\beta=500$) & 91.4 & 0.3 & 86.1 & 92.2 & 0.942 & 0.891 \\
    \bottomrule
    \end{tabular}
    }
\end{table*}

\subsubsection{Additional Results of RFT}
\label{app:toy_exp_results_rft}
\tabref{tab:rft_results} isolates the effect of longer RFT training on the toy surface. Increasing the budget from $10$K to $100$K steps raises OnManifold from $88.3\%$ to $96.5\%$ and StrictScore from $0.858$ to $0.954$, consistent with winner-only reconstruction fitting this synthetic preferred region. The result should not be transferred directly to real images: in \tabref{tab:main_results_relative_cfg45}, RFT is strong on held-out metrics but does not attain the largest four-metric macro-average gain.

\begin{table*}[ht]
    \centering
    \small
    \setlength{\tabcolsep}{5pt}
    \renewcommand{\arraystretch}{1.08}
    \caption{Toy experiment results of RFT. }
    \label{tab:rft_results}
    \resizebox{\linewidth}{!}{
    \begin{tabular}{lccccccc}
    \toprule
    Method & Itr. & Win (\%) $\uparrow$ & Loss (\%) $\downarrow$ & StrictWin (\%) $\uparrow$ & OnManifold (\%) $\uparrow$ & WinQuality $\uparrow$ & StrictScore $\uparrow$ \\
    \midrule
    RFT & 1K & 89.2 & 1.5 & 78.1 & 85.8 & 0.876 & 0.819 \\ 
    RFT & 10K & 93.6 & 0.3 & 83.4 & 88.3 & 0.891 & 0.858 \\ 
    RFT & 50K & 96.8 & 0.1 & 92.4 & 94.9 & 0.954 & 0.937 \\ 
    RFT & 100K & 97.5 & 0.0 & 94.3 & 96.5 & 0.967 & 0.954 \\ 
    \bottomrule
    \end{tabular}
    }
\end{table*}
    
\subsubsection{Toy Results of Manifolds with Different Curvatures}
\label{app:toy_exp_results_curvature}
\tabref{tab:curvature_results} repeats the controlled comparison on a plane, cylinder, sphere, and saddle. Within the reported $\beta$ grid, \methodweighted attains the highest StrictScore on each geometry ($0.947$, $0.976$, $0.986$, and $0.961$, respectively). The degree of FlowDPO drift varies with geometry and $\beta$, so these results support robustness across the tested surfaces rather than universal manifold preservation.

\begin{table*}[ht]
    \centering
    \small
    \setlength{\tabcolsep}{5pt}
    \renewcommand{\arraystretch}{1.08}
    \caption{Toy experiment results comparing FlowDPO and \methodweighted variants across manifolds with different curvature. 
    Best results are in \textbf{bold} for each geometry.
    $\mathrm{II}$ means the second fundamental form, and $K$ means the Gaussian curvature.
    }
    \label{tab:curvature_results}
    \resizebox{\linewidth}{!}{
    \begin{tabular}{lcccccc}
    \toprule
    Method & Win (\%) $\uparrow$ & Loss (\%) $\downarrow$ & StrictWin (\%) $\uparrow$ & OnManifold (\%) $\uparrow$ & WinQuality $\uparrow$ & StrictScore $\uparrow$ \\
    \midrule 
    \textbf{Plane ($\mathrm{II}=0$, $K=0$)} \\ 
    RFT	& 95.6 & 0 & 90.2 & 94.5 & 0.943 & 0.923 \\
    FlowDPO	($\beta=1$) &  85.1 & 0.1 & 47.6 & 59.1 & 0.56 & 0.534 \\
    FlowDPO	($\beta=10$) & 35.1 & 15.4 & 31.2 & 78.5 & 0.889 & 0.549 \\
    FlowDPO	($\beta=100$) & 30.3 & 19.1 & 27.8 & 94.3 & 0.918 & 0.611 \\
    FlowDPO	($\beta=500$) & 34.1 & 15.9 & 30.7 & 79.1 & 0.9 & 0.549 \\
    \methodweighted	(linear,$\beta=1$)	& \textbf{97.4} & 0 & \textbf{93.4} & \textbf{96} & \textbf{0.959} & \textbf{0.947} \\
    \methodweighted	(linear,$\beta=10$)	& 97.3 & 0 & 93.3 & 95.9 & 0.959 & 0.946 \\
    \methodweighted	(linear,$\beta=100$) & 97.2 & 0 & 93 & 95.7 & 0.956 & 0.943 \\
    \methodweighted	(linear,$\beta=500$) & 97.2 & 0 & 92.8 & 95.5 & 0.955 & 0.942 \\
    \midrule 
    \textbf{Cylinder ($\mathrm{II} \neq 0$, $K=0$)} \\
    RFT	& 97.7 & 0 & 95.7 & 97.9 & 0.979 & 0.968 \\
    FlowDPO	($\beta=1$)	& 90.8 & 0 & 20.4 & 28.1 & 0.225 & 0.243 \\
    FlowDPO	($\beta=10$)	& 74.1 & 0.2 & 25.7 & 47.4 & 0.347 & 0.365 \\
    FlowDPO	($\beta=100$)	& 32.6 & 16.4 & 31.4 & 93.4 & 0.966 & 0.624 \\
    FlowDPO	($\beta=500$)	& 25.8 & 22.8 & 25 & 96.9 & 0.968 & 0.609 \\
    \methodweighted	(linear,	$\beta=1$)	& 97.8 & 0 & \textbf{96.5} & \textbf{98.7} & \textbf{0.987} & \textbf{0.976} \\
    \methodweighted	(linear,	$\beta=10$)	& 97.8 & 0 & 96.1 & 98.2 & 0.982 & 0.971 \\
    \methodweighted	(linear,	$\beta=100$)	& 97.7 & 0 & 96.1 & 98.4 & 0.984 & 0.972 \\
    \methodweighted	(linear,	$\beta=500$)	& \textbf{97.9} & 0 & 96.2 & 98.3 & 0.983 & 0.972 \\
    \midrule
    \textbf{Sphere ($K > 0$)} \\ 
    RFT	& 96.7 & 0 & 96.6 & 99.9 & 0.999 & 0.982 \\
    FlowDPO	($\beta=1$)	& \textbf{100} & 0 & 39.3 & 39.3 & 0.393 & 0.393 \\
    FlowDPO	($\beta=10$)	& 66.2 & 2.5 & 65.1 & 84.5 & 0.983 & 0.748 \\
    FlowDPO	($\beta=100$)	& 27.6 & 21.4 & 27.5 & 99.6 & 0.997 & 0.635 \\
    FlowDPO	($\beta=500$)	& 26.6 & 23.2 & 26.5 & 99.6 & 0.996 & 0.631 \\
    \methodweighted	(linear,	$\beta=1$)	& 97.2 & 0 & \textbf{97.2} & 99.9 & \textbf{1} & 0.985 \\
    \methodweighted	(linear,	$\beta=10$)	& 96.9 & 0 & 96.9 & 99.9 & 0.999 & 0.984 \\
    \methodweighted	(linear,	$\beta=100$)	& 97.1 & 0 & 97.1 & \textbf{100} & \textbf{1} & 0.985 \\
    \methodweighted	(linear,	$\beta=500$)	& 97.1 & 0 & 97.1 & \textbf{100} & \textbf{1} & \textbf{0.986} \\
    \midrule 
    \textbf{Saddle ($K < 0$)} \\ 
    RFT	& \textbf{97.9} & 0 & 94 & 96 & 0.96 & 0.95 \\
    FlowDPO	($\beta=1$)	& 96.1 & 0 & 59 & 62.6 & 0.613 & 0.608 \\
    FlowDPO	($\beta=10$)	& 71 & 2.3 & 66.4 & 88.7 & 0.936 & 0.775 \\
    FlowDPO	($\beta=100$)	& 28.8 & 19.9 & 27.3 & 96 & 0.946 & 0.616 \\
    FlowDPO	($\beta=500$)	& 27.3 & 21.9 & 25.9 & 96.2 & 0.948 & 0.611 \\
    \methodweighted	(linear,	$\beta=1$)	& 97.4 & 0 & 94.8 & \textbf{97.4} & \textbf{0.974} & \textbf{0.961} \\
    \methodweighted	(linear,	$\beta=10$)	& 97.2 & 0 & 94.4 & 97.3 & 0.972 & 0.959 \\
    \methodweighted	(linear,	$\beta=100$)	& 97.6 & 0 & 94.9 & 97.2 & 0.972 & 0.96 \\
    \methodweighted	(linear,	$\beta=500$)	& 97.8 & 0 & \textbf{95} & 97.2 & 0.971 & \textbf{0.961} \\
    \bottomrule
    \end{tabular}
    }
\end{table*}

\subsection{Real-Image Experimental Protocol on SD3.5-M}
\label{app:real_image_setup}

\paragraph{Prompt suites.}
For the real-image experiment, we use the OCR suitein which reward hacking is easier to observe. The prompt is the same as the OCR prompt of DiffusionNFT~\citep{zheng2025diffusionnft}.

\paragraph{Backbone model and training setup.}
All methods are initialized from the Stable Diffusion 3.5-M checkpoint. To ensure a controlled comparison, we keep the architecture, tokenizer, text encoder, image resolution, inference sampler, and sampling budget fixed across methods. All real-image experiments are run at resolution $512 \times 512$ using the DPM2 sampler with 40 sampling steps. Unless otherwise stated, all preference fine-tuning runs use AdamW, a global batch size of 128, and 1200 update steps. Training is performed on 64 $\times$ H100 GPUs with 80GB memory per GPU. For each real-image method, we train for more than 24 hours on 64 GPUs, i.e. more than $24 \times 64 = 1536$ GPU-hours per full training sweep. The checkpoints reported in the main results correspond to approximately 711 GPU-hours for RFT, 509 GPU-hours for FlowDPO, 252 GPU-hours for FlowDPO + RFT, 364 GPU-hours for FlowDPO + KL, and 690 GPU-hours for \methodweighted.

\paragraph{Compared methods.}
We compare the following methods in the real-image setting:
\begin{itemize}[leftmargin=*]
\item \textbf{Base FM model.} The pretrained flow-matching model before preference fine-tuning.
\item \textbf{RFT.} Rejection sampling fine-tuning on preferred samples only.
\item \textbf{FlowDPO.} Standard pairwise preference optimization for flow models.
\item \textbf{FlowDPO + KL.} FlowDPO with an additional regularization term that keeps the updated vector field close to the pretrained reference model.
\item \textbf{FlowDPO + RFT.} A hybrid baseline that combines pairwise preference optimization with a winner-side reconstruction anchor.
\item \textbf{\methodweightedname.} Our proposed method with temperature schedules $\tau(t) = t / \beta$ used in the toy experiments.
\end{itemize}

\paragraph{Preference data construction.}
We construct preference pairs using the same prompt pool for all methods. Specifically, we use 1000 prompts taken directly from DrawBench~\citep{saharia2022photorealistic}. For each prompt, we sample 64 candidate images from the pretrained model and rank them with the OCR-based training reward described below. We keep exactly one winner / loser pair per prompt, yielding an offline preference dataset for pairwise fine-tuning. This controlled offline setup matches the practical preference optimization in industrial usage and ensures that the comparison across methods is driven by the optimization objective rather than by differences in prompt pools or candidate generation.

\subsubsection{Automatic Reward-Model Evaluation}
\label{app:real_image_reward}

\paragraph{Training reward and held-out evaluators.}
To distinguish genuine alignment from reward hacking, we separate the \emph{training reward} used to construct the preference data from the \emph{held-out evaluators} used only at test time. The training reward is an OCR-based prompt-satisfaction metric implemented with PaddleOCR. For held-out evaluation, GenEval follows the same implementation used in Flow-GRPO~\citep{liu2025flow} and DiffusionNFT~\citep{zheng2025diffusionnft}, HPSv3.0 uses the authors' open-source codebase~\citep{ma2025hpsv3widespectrumhumanpreference}, and UniReward follows the DiffusionNFT evaluation pipeline~\citep{zheng2025diffusionnft}.

\subsubsection{Human Evaluation Protocol}
\label{app:real_image_human}

We report pairwise human preferences in \figref{fig:human_evaluation}. The current study uses the first 30 prompts from the test dataset, each comparison is labeled by a single annotator, and a tie option is allowed. Because each comparison is labeled once, no inter-rater aggregation is required.

The annotator was one of the paper authors. No external crowd workers or paid participants were recruited, and no compensation was involved. The task consisted only of side-by-side judgments over generated images, and no personal or sensitive data were collected.

For each prompt, the annotation interface displayed the prompt text together with the candidate images under comparison and asked the annotator to make two separate judgments: one for text accuracy and one for visual quality. The exact instructions were as follows: ``Given the prompt and the candidate generated images, judge text accuracy and visual quality separately. For \emph{text accuracy}, choose the image that better matches the prompt, especially the requested visible text content. For \emph{visual quality}, choose the image with better overall perceptual quality, including realism, coherence, and freedom from obvious artifacts. If the compared images are indistinguishable for a criterion, choose \emph{tie}.'' An example screenshot of the interface is shown in \figref{fig:human_eval_interface}.

Because the annotator was a paper author and the task only involved viewing model outputs and recording preference judgments, we regarded the study as minimal risk. To the best of our understanding of local requirements, this internal author-only evaluation did not require separate IRB or equivalent review.

\begin{figure*}[t]
    \centering
    \includegraphics[width=\textwidth]{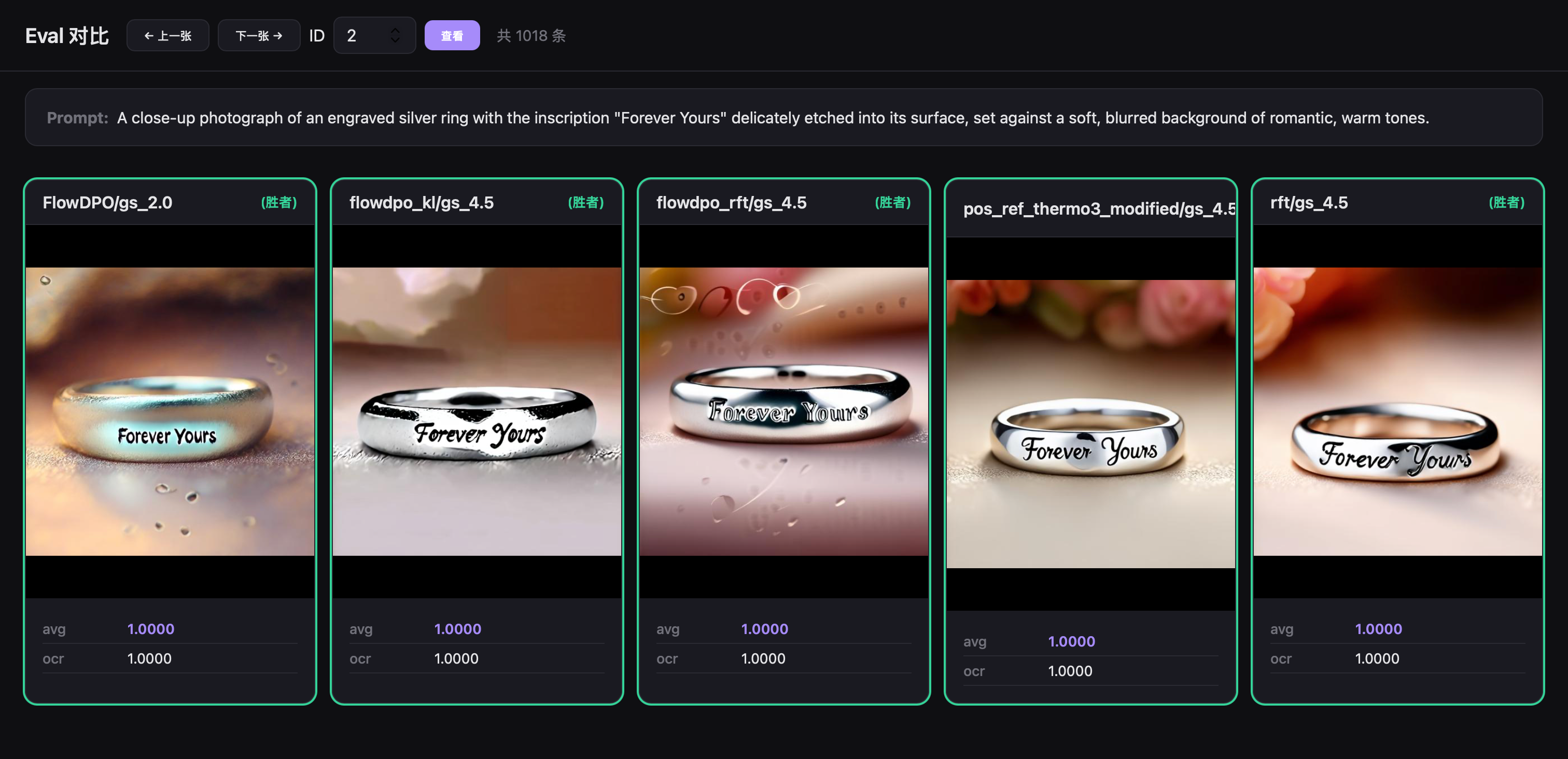}
    \caption{
    Example screenshot of the human-evaluation interface used in our study.
    The interface displays the prompt together with the candidate images under comparison for the current example.
    }
    \label{fig:human_eval_interface}
\end{figure*}

\clearpage
\subsection{Qualitative Results of Real-Image Experiments on SD3.5-M~\cite{esser2024scaling}}
\label{app:real_image_qualitative}

We provide qualitative results of the real-image experiments in \figref{fig:real_image_qualitative_flowdpo}, \figref{fig:real_image_qualitative_thermodpo}, \figref{fig:real_image_qualitative_flowdpo_kl}, \figref{fig:real_image_qualitative_flowdpo_rft}, \figref{fig:real_image_qualitative_original}, and \figref{fig:real_image_qualitative_rft}. The prompts are seleted from the top 20 prompts in the OCR test set. The prompts from left to right from top to bottom are in \figref{fig:prompts_figure1}. Clearly, the FlowDPO variants exhibit noticeable quality degradation, while \methodweighted and RFT maintain visual quality closer to the original model.
\begin{figure*}[ht]
    \centering
    \caption{Prompts in Appendix~\ref{app:real_image_qualitative} (from left to right from top to bottom).}
    \label{fig:prompts_figure1}
    
    \begin{tcolorbox}[
        width=0.96\textwidth,
        colback=white,
        colframe=black!70,
        boxrule=0.8pt,
        arc=3mm,
        left=6mm,
        right=6mm,
        top=4mm,
        bottom=4mm
    ]
    \small
    \begin{enumerate}[leftmargin=8mm,itemsep=0.35em,topsep=0pt]
        \item A weathered cave explorer's journal page, with the phrase "Lost City Near" prominently written in faded ink, surrounded by sketches of ancient ruins and cryptic symbols, under a dim, mystical light.
        \item A high-altitude mountain summit with a wooden signpost clearly marked "Elevation 8000 Feet", surrounded by rocky terrain and a backdrop of distant, snow-capped peaks under a clear blue sky.
        \item A hiking trail with a wooden signpost clearly displaying "Private Property No Entry", surrounded by dense, green foliage and a winding dirt path leading into the forest.
        \item A realistic photo of a tech campus courtyard at night, featuring a glowing "AI Training Zone" hologram floating in the center, surrounded by futuristic buildings and greenery, with soft ambient lighting enhancing the futuristic atmosphere.
        \item A close-up photograph of an engraved silver ring with the inscription "Forever Yours" delicately etched into its surface, set against a soft, blurred background of romantic, warm tones.
        \item A realistic photograph of a fast food drive-thru menu board at dusk, featuring a bold and colorful advertisement that reads "Try Our New Burger" with an appetizing image of the burger below, set against the backdrop of a busy suburban street.
        \item A realistic photograph of a wrist tattoo in cursive script reading "Fearless", with the skin slightly tanned and a subtle shadow under the text, set against a neutral background.
        \item A dark, decrepit haunted house with a menacing door knocker that reads "Abandon All Hope" in eerie, gothic lettering, set against a moonlit night with twisted, shadowy trees in the background.
        \item A detailed ski resort trail map with a prominent marker labeled "Black Diamond Run", set against a snowy backdrop with pine trees and skiers in the distance, capturing the thrill and challenge of the advanced slope.
        \item A medieval knight's castle with a grand drawbridge, the wooden sign above it boldly declaring "Trespassers Will Be Jousted", surrounded by a moat with water lilies and a cloudy sky.
        \item A vintage postcard with a faded, nostalgic look, featuring elegant cursive text that reads "Wish You Were Here" against a backdrop of a serene, old-world seaside town with pastel buildings and a gentle, sunny sky.
    \end{enumerate}
    \end{tcolorbox}
\end{figure*}

\begin{figure}[ht]
    \centering
    \includegraphics[width=0.9\textwidth]{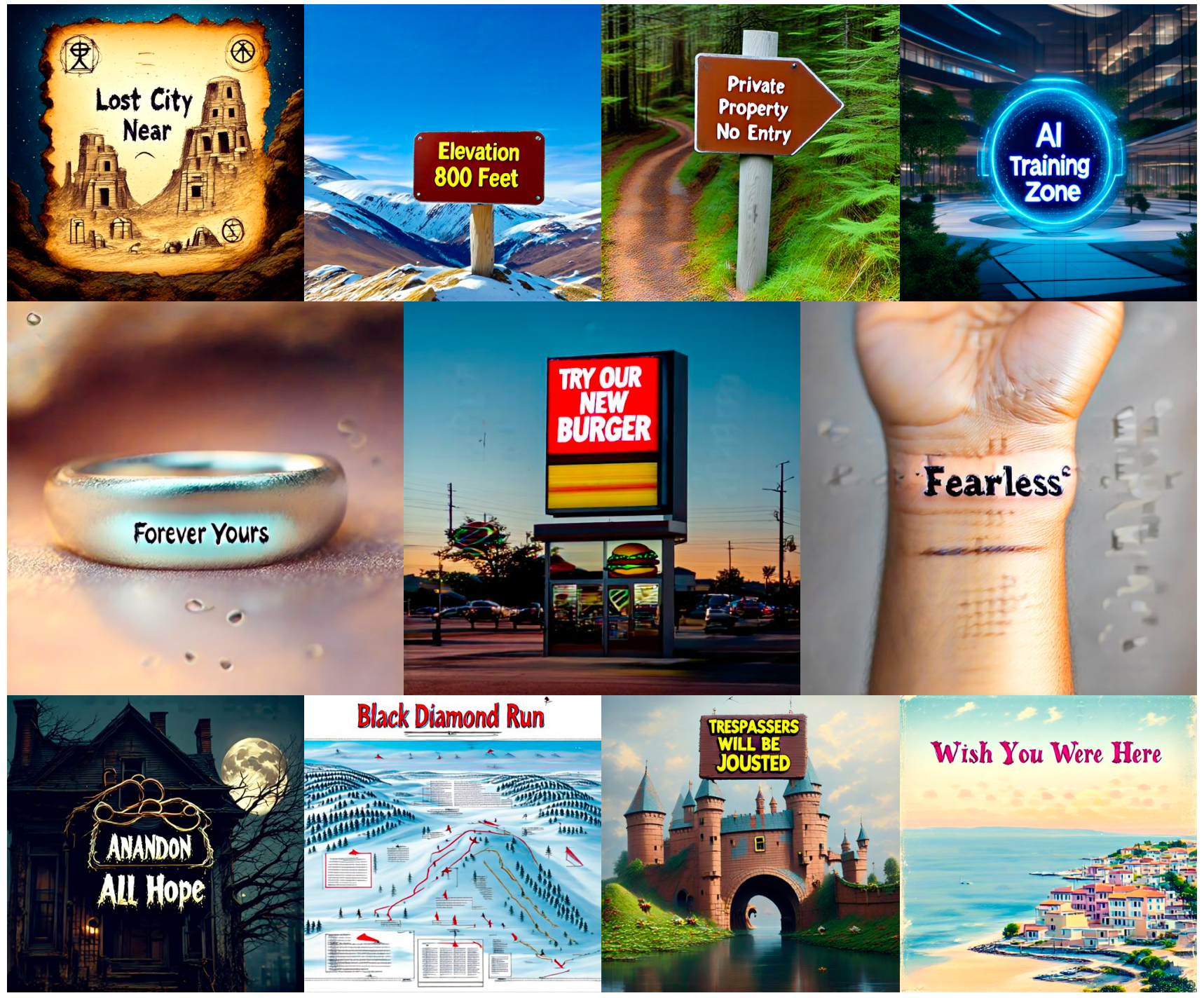}
    \caption{Qualitative results of FlowDPO ($\beta = 100$) on SD3.5-M.}
    \label{fig:real_image_qualitative_flowdpo}
\end{figure}

\begin{figure}[ht]
    \centering
    \includegraphics[width=0.9\textwidth]{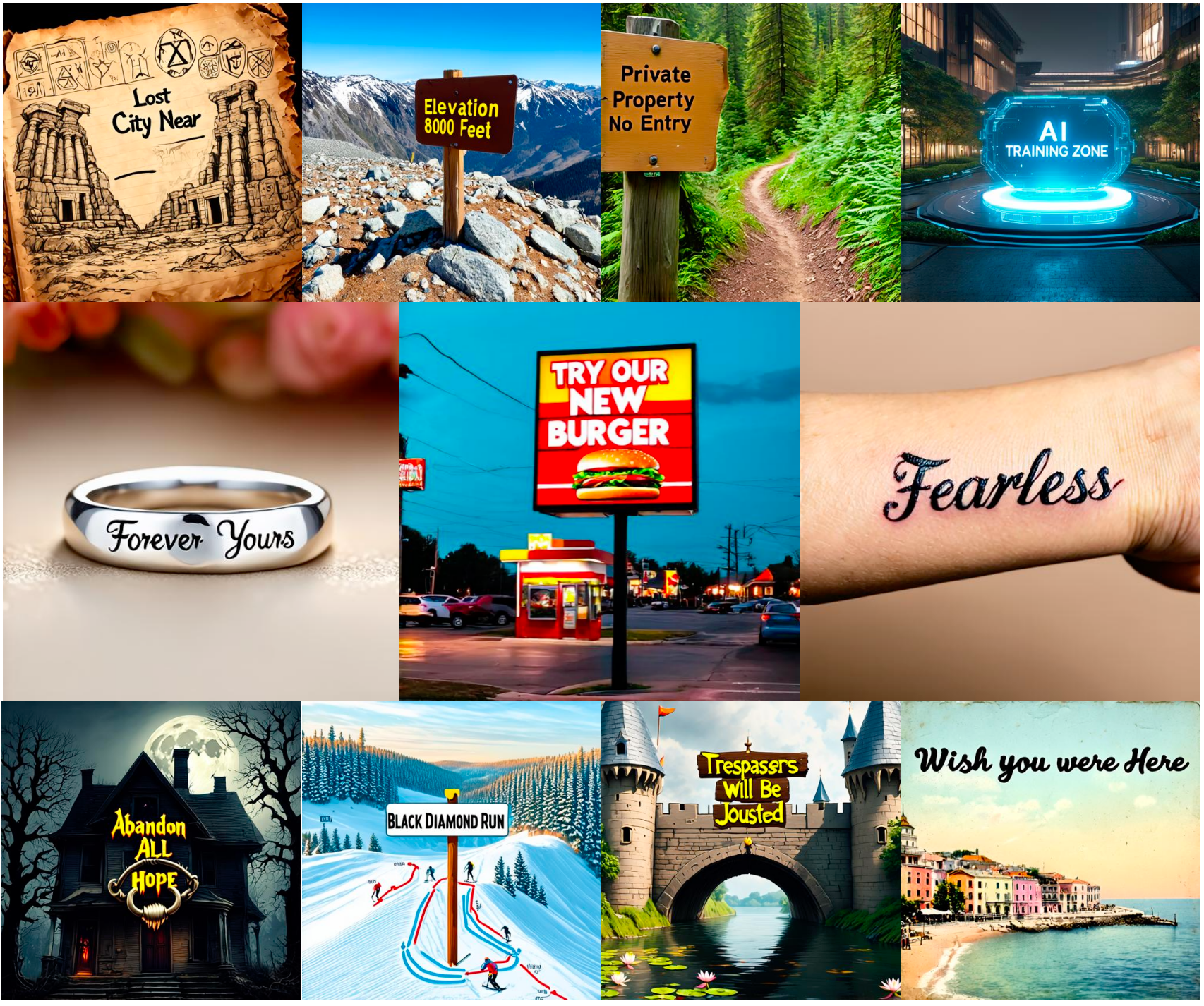}
    \caption{Qualitative results of \methodweighted ($\beta = 100$) on SD3.5-M.}
    \label{fig:real_image_qualitative_thermodpo}
\end{figure}

\begin{figure}[ht]
    \centering
    \includegraphics[width=0.9\textwidth]{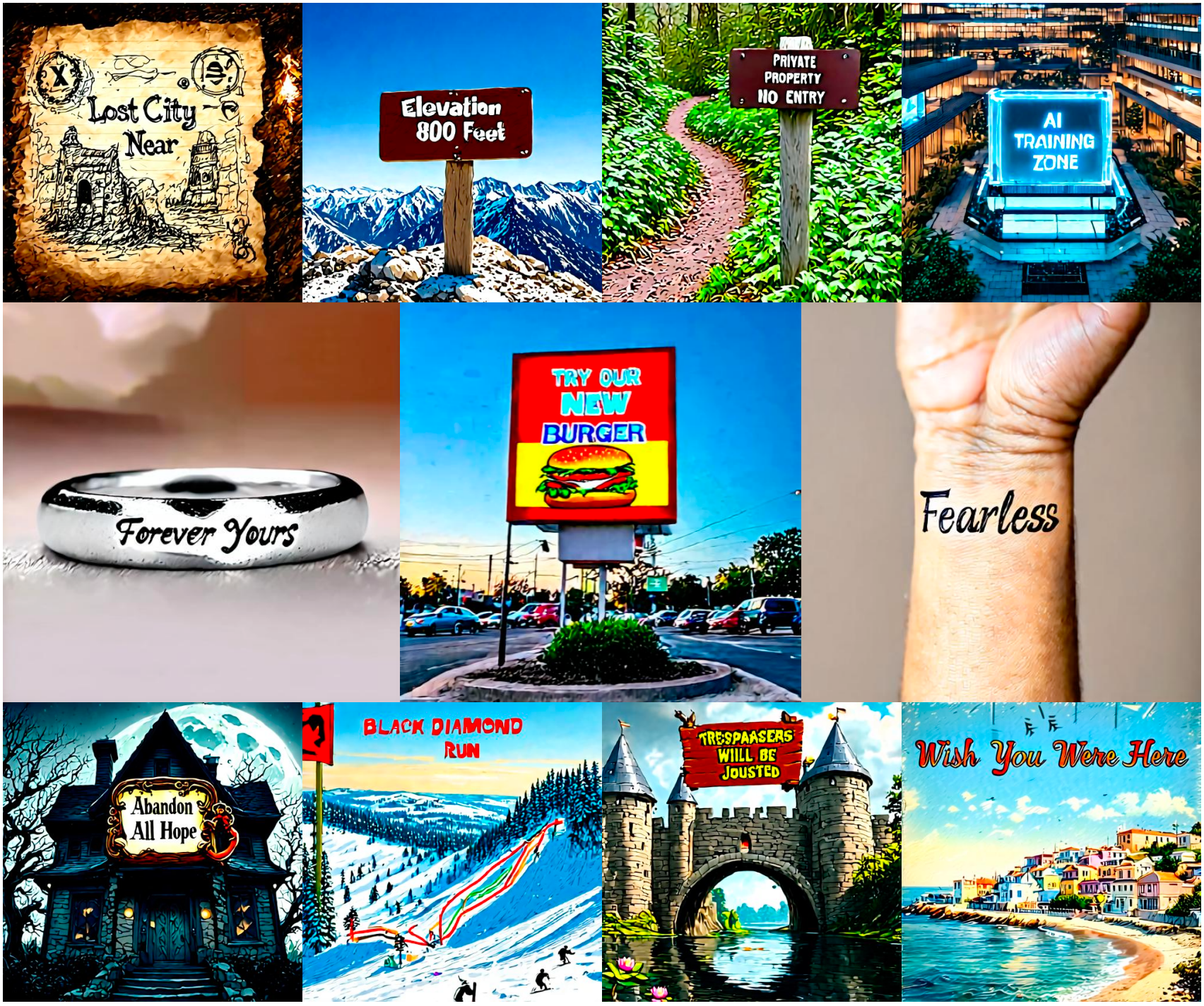}
    \caption{Qualitative results of FlowDPO + KL ($\beta = 100$) on SD3.5-M.}
    \label{fig:real_image_qualitative_flowdpo_kl}
\end{figure}

\begin{figure}[ht]
    \centering
    \includegraphics[width=0.9\textwidth]{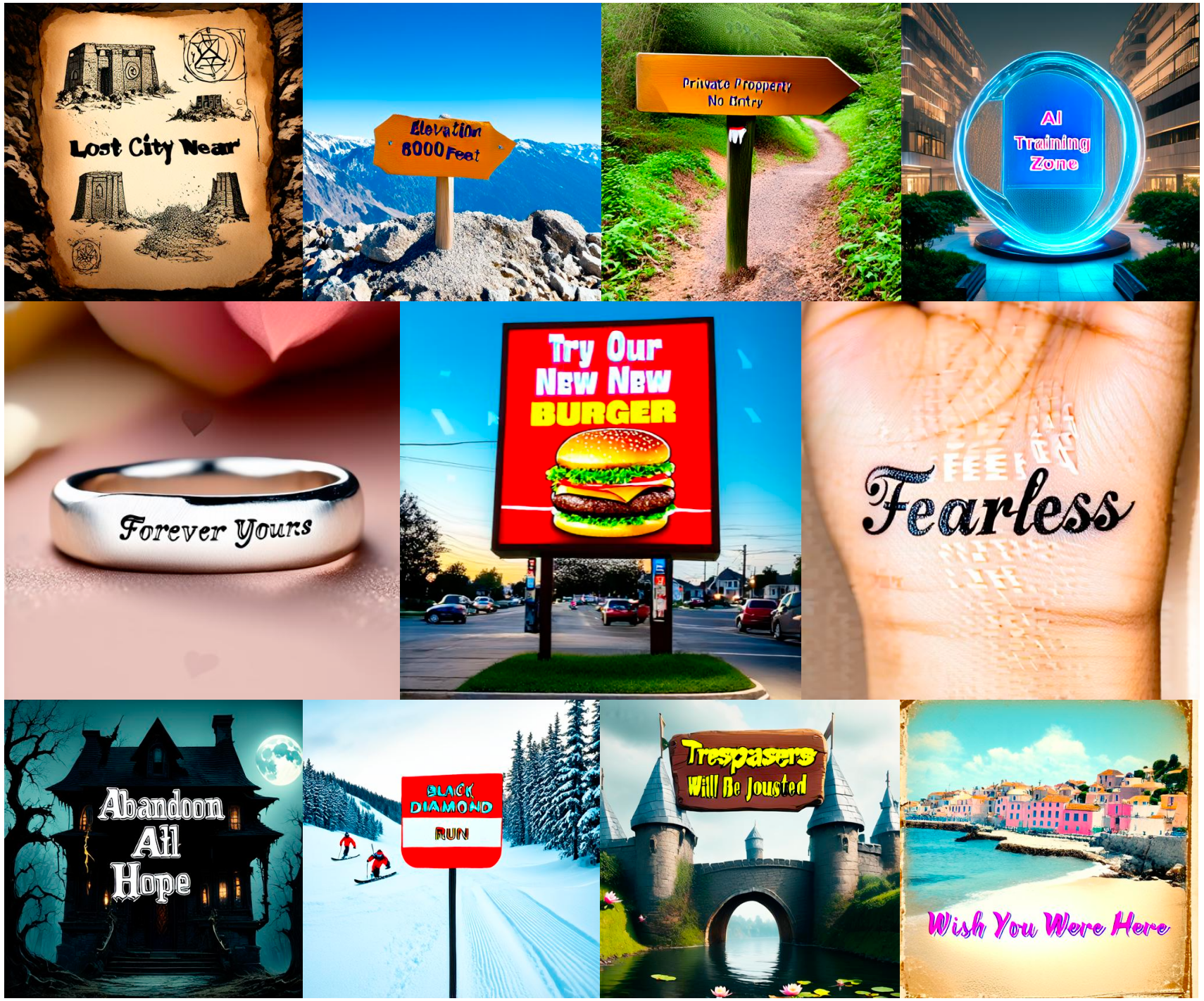}
    \caption{Qualitative results of FlowDPO + RFT ($\beta = 100$) on SD3.5-M.}
    \label{fig:real_image_qualitative_flowdpo_rft}
\end{figure}

\begin{figure}[ht]
    \centering
    \includegraphics[width=0.9\textwidth]{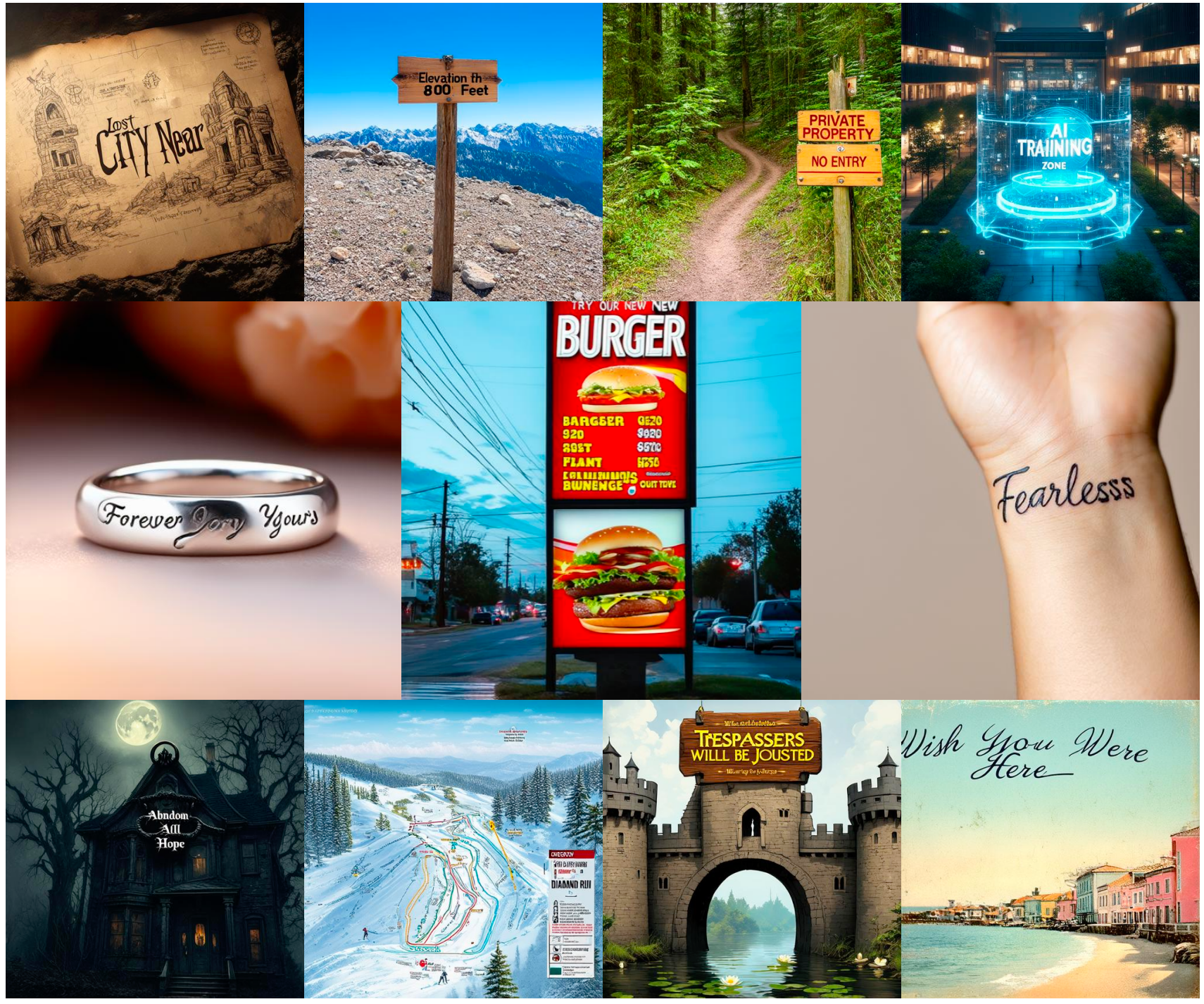}
    \caption{Qualitative results of Original Stable Diffusion 3.5-M.}
    \label{fig:real_image_qualitative_original}
\end{figure}

\begin{figure}[ht]
    \centering
    \includegraphics[width=0.9\textwidth]{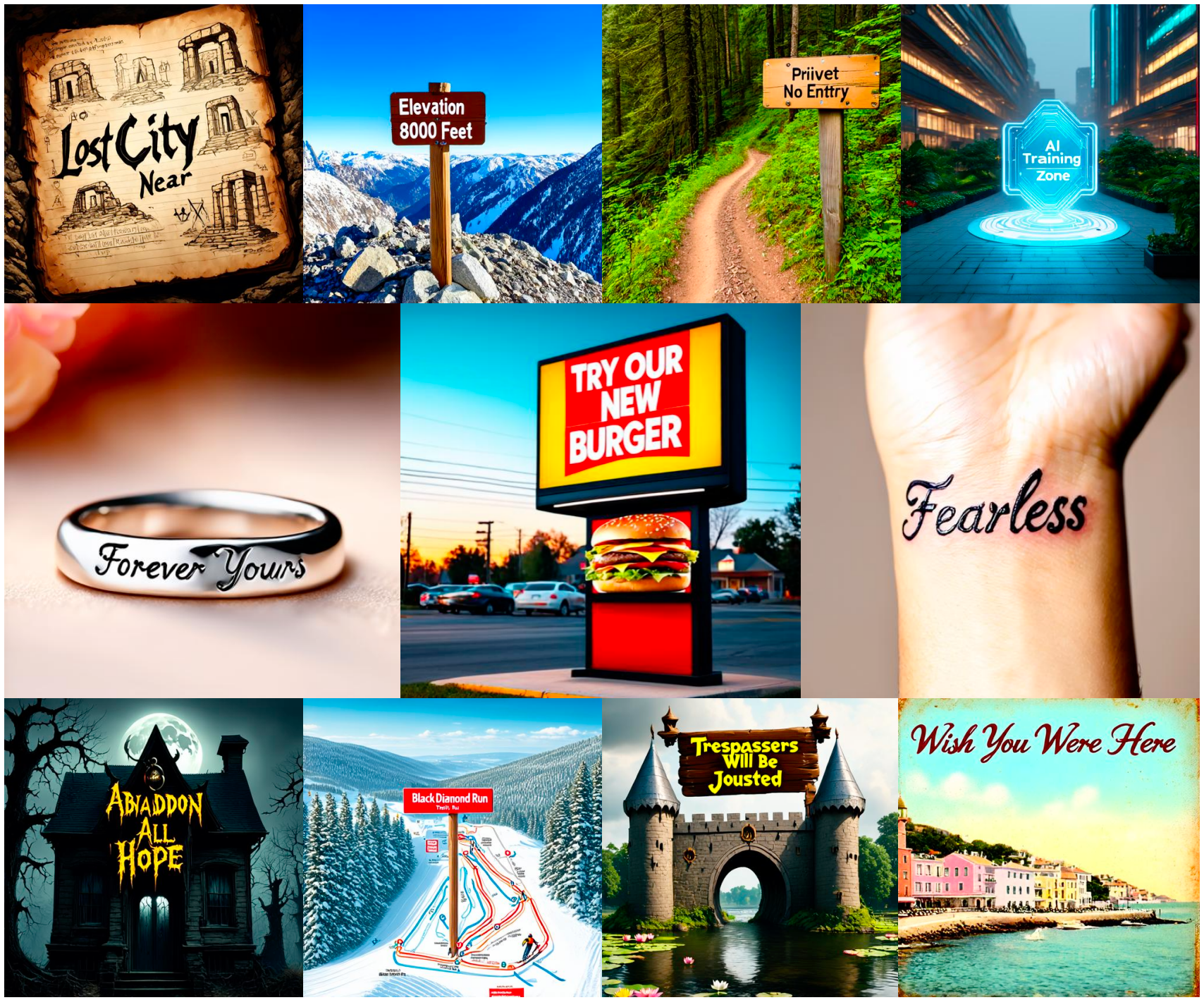}
    \caption{Qualitative results of RFT.}
    \label{fig:real_image_qualitative_rft}
\end{figure}

\subsection{Prompts of Fig.~\ref{fig:comparison}}
\label{app:prompts_figure_abs}
The prompts in Fig.~\ref{fig:comparison} are shown in \figref{fig:prompts_figure_abs}.

\begin{figure*}[ht]
    \centering
    \caption{Prompts in Fig.~\ref{fig:comparison} (from left to right from top to bottom).}
    \label{fig:prompts_figure_abs}
    
    \begin{tcolorbox}[
        width=0.96\textwidth,
        colback=white,
        colframe=black!70,
        boxrule=0.8pt,
        arc=3mm,
        left=6mm,
        right=6mm,
        top=4mm,
        bottom=4mm
    ]
    \small
    \begin{enumerate}[leftmargin=8mm,itemsep=0.35em,topsep=0pt]
        \item A weathered cave explorer's journal page, with the phrase "Lost City Near" prominently written in faded ink, surrounded by sketches of ancient ruins and cryptic symbols, under a dim, mystical light.
        \item A high-altitude mountain summit with a wooden signpost clearly marked "Elevation 8000 Feet", surrounded by rocky terrain and a backdrop of distant, snow-capped peaks under a clear blue sky.
        \item A hiking trail with a wooden signpost clearly displaying "Private Property No Entry", surrounded by dense, green foliage and a winding dirt path leading into the forest.
        \item A close-up photograph of an engraved silver ring with the inscription "Forever Yours" delicately etched into its surface, set against a soft, blurred background of romantic, warm tones.
        \item A realistic photograph of a fast food drive-thru menu board at dusk, featuring a bold and colorful advertisement that reads "Try Our New Burger" with an appetizing image of the burger below, set against the backdrop of a busy suburban street.
        \item A realistic photograph of a wrist tattoo in cursive script reading "Fearless", with the skin slightly tanned and a subtle shadow under the text, set against a neutral background.
    \end{enumerate}
    \end{tcolorbox}
\end{figure*}

\subsection{Real-Image Experimental Results on FLUX.2-klein-base-4B~\cite{flux-2-2025}}
\label{app:real_image_setup_flux}


\subsubsection{Experimental Results}
\label{app:real_image_setup_flux_results}
On FLUX.2-klein-base-4B at CFG $=3.5$, \tabref{tab:main_results_flux} shows \methodweighted improving GenEval ($0.7766$ vs. $0.7347$), UniReward ($0.6663$ vs. $0.6358$), and HPSv3.0 ($9.5420$ vs. $9.0675$) over the pretrained model while also improving OCR ($0.7241$ vs. $0.5475$). FlowDPO reaches higher OCR ($0.8349$) but lower values on the other three metrics. This second model reproduces the alignment--quality trade-off observed on SD3.5-M; it does not establish model-independent behavior.

\begin{table}[htbp]
    \centering
    \small
    \setlength{\tabcolsep}{4.5pt}
    \renewcommand{\arraystretch}{1.12}
    \caption{
        \textbf{Quantitative comparison on \texttt{FLUX.2-klein-base-4B}.}
        All compared methods are trained using the OCR preference pair dataset, while evaluation is conducted across GenEval, OCR, UniRwd and HPSv3.0.
        For each metric, we report the absolute score at both CFG settings.
        Best results are highlighted in \textbf{bold}.}
    \label{tab:main_results_flux}
    \resizebox{1\textwidth}{!}{
        \begin{tabular}{l c c c c c }
            \toprule
            \textbf{Model} & \textbf{CFG}
                           & \textbf{GenEval~\cite{ghosh2023geneval} $\uparrow$}
                           & \textbf{OCR $\uparrow$}
                           & \textbf{UniRwd~\cite{unifiedreward} $\uparrow$}
                           & \textbf{HPSv3.0~\cite{ma2025hpsv3widespectrumhumanpreference} $\uparrow$}                                         \\
            \midrule

            \multirow{2}{*}{FLUX.2-klein-base-4B (Baseline)}
                           & $1.0$
                           & \makecell[c]{0.2951}
                           & \makecell[c]{$\ \ $0.1360$\ \ \ $}
                           & \makecell[c]{0.4327}
                           & \makecell[c]{0.6506}                                              \\
                           & $4$
                           & \makecell[c]{0.7347}
                           & \makecell[c]{$\ \ $0.5475$\ \ \ $}
                           & \makecell[c]{0.6358}
                           & \makecell[c]{9.0675}                                                   \\
            \midrule

            \multirow{2}{*}{FlowDPO~\cite{liu2025improving} ($\beta=100$)}
                           & $1.0$
                           & \makecell[c]{0.5426}
                           & \makecell[c]{\textbf{0.8888}}
                           & \makecell[c]{0.5703}
                           & \makecell[c]{6.6977}                                               \\
                           & $4$
                           & \makecell[c]{0.6649}
                           & \makecell[c]{0.8349}
                           & \makecell[c]{0.6165}
                           & \makecell[c]{7.8769}                                             \\
            \cmidrule(l){2-6}

            \multirow{2}{*}{\methodweighted ($t$, $\beta=100$)}
                           & $1.0$
                           & \makecell[c]{0.6857}
                           & \makecell[c]{0.5162}
                           & \makecell[c]{0.6230}
                           & \makecell[c]{8.3607}                                           \\
                           & $4$
                           & \makecell[c]{\textbf{0.7766}}
                           & \makecell[c]{0.7241}
                           & \makecell[c]{\textbf{0.6663}}
                           & \makecell[c]{\textbf{9.5420}}                       \\
            \bottomrule
        \end{tabular}
    }
\end{table}

\end{document}